\def\1{\bm{1}}
\def\rd{{\textnormal{d}}}
\DeclareMathAlphabet{\mathsfit}{\encodingdefault}{\sfdefault}{m}{sl}
\SetMathAlphabet{\mathsfit}{bold}{\encodingdefault}{\sfdefault}{bx}{n}
\def\gA{{\mathcal{A}}}
\def\gK{{\mathcal{K}}}
\def\gS{{\mathcal{S}}}
\def\gV{{\mathcal{V}}}
\newcommand{\E}{\mathbb{E}}
\newcommand{\R}{\mathbb{R}}
\theoremstyle{plain}
\newtheorem{theorem}{Theorem}[section]
\newtheorem{lemma}[theorem]{Lemma}
\newtheorem{corollary}[theorem]{Corollary}
\theoremstyle{definition}
\newtheorem{definition}[theorem]{Definition}
\theoremstyle{remark}
\newcommand{\Prob}{\mathbb{P}}
\newcommand{\reconst}{\text{reconst}}
\newcommand{\contrast}{\text{contrast}}
\newcommand{\norm}[1]{\left\| #1 \right\|}
\newcommand{\rank}{\text{rank}}
\newcommand{\vect}{\text{vec}}
\newcommand{\tv}{\text{TV}}
\newcommand{\xp}{x^\prime}
\DeclarePairedDelimiterX{\infdivx}[2]{(}{)}{%
  #1\;\delimsize|\delimsize|\;#2%
}
\newcommand{\kl}[2]{\ensuremath{D_{KL}\infdivx{#1}{#2}}}
\newcommand{\dir}{\text{Dir}}
\newcommand{\pois}{\text{Pois}}
\newcommand{\unsup}{\text{unsup}}
\title{Understanding The Robustness of Self-supervised Learning Through Topic Modeling }
\author{Zeping Luo\thanks{Equal contribution.}  , Shiyou Wu$^*$, Cindy Weng$^*$, Mo Zhou$^\dag$, Rong Ge$^\dag$\\
Duke University, USA\\
\texttt{$^*$\{zeping.luo,shiyou.wu,cindy.weng\}@duke.edu,$^\dag$\{mozhou,rongge\}@cs.duke.edu}
}
\begin{document}

\maketitle

\begin{abstract}
Self-supervised learning has significantly improved the performance of many NLP tasks. However, how can self-supervised learning discover useful representations, and why is it better than traditional approaches such as probabilistic models are still largely unknown. In this paper, we focus on the context of topic modeling and highlight a key advantage of self-supervised learning - when applied to data generated by topic models, self-supervised learning can be oblivious to the specific model, and hence is less susceptible to model misspecification. In particular, we prove that commonly used self-supervised objectives based on reconstruction or contrastive samples can both recover useful posterior information for general topic models. Empirically, we show that the same objectives can perform on par with posterior inference using the correct model, while outperforming posterior inference using misspecified models.
\end{abstract}

\section{Introduction}
Recently researchers have successfully trained large-scale models like BERT \citep{devlin2018bert} and GPT \citep{radford2018improving}, which offers extremely powerful representations for many NLP tasks (see e.g., \cite{liu2021self,jaiswal2021survey} and references therein). To train these models, often one starts with sentences in a large text corpus, mark random words as ``unknown'' and ask the neural network to predict the unknown words. This approach is known as self-supervised learning (SSL). 

Why can self-supervised approaches learn useful representations? To understand this we first need to define what are ``useful representations''. A recent line of work \citep{tosh2020contrastive,wei2021pretrained} studied self-supervised learning in the context of probabilistic models: assuming the data is generated by a probabilistic model (such as a topic model or Hidden Markov Model), one can define representation of observed data as the corresponding hidden variables in the model (such as topic proportions in topic models or hidden states in Hidden Markov Model). These works show that self-supervised learning approach is as good as explicitly doing inference using such models. 

This approach naturally leads to the next question - why can self-supervised learning perform {\em better} than traditional inferencing based on probabilistic models? In this paper we study this question in the context of topic modeling, and highlight one key advantage for self-supervised learning: robustness to model misspecification. 

Many different models (such as Latent Dirichlet Allocation (LDA) \citep{blei2003latent}, Correlated Topic Model (CTM) \citep{blei2007correlated}, Pachinko Allocation Model (PAM) \citep{li2006pachinko}) have been applied in practice. Traditional approaches would require different ways of doing inference {\em depending on} which model is used to generate the data. 
On the other hand, we show that no matter which topic model is used to generate the data, if standard self-supervised learning objectives such as the reconstruction-based objective\footnote{See Equation~\eqref{eq:reconstruct_obj}, similar to the objective used in \cite{pathak2016context,devlin2018bert}} or the contrastive objective\footnote{See Equation~\eqref{eq:constrastive_obj}, this was also used in \cite{tosh2020contrastive}.} can be minimized, then they will generate representations that contain useful information about the topic proportions of a document. Self-supervised learning is {\em oblivious} to the choice of the probabilistic model, while the traditional approach of probabilistic modeling depends highly on the specific model.  
Therefore, one would expect self-supervised learning to perform similarly to inferencing with the correct model, and outperforms inferencing with misspecified model.

To verify our theory, we run synthetic experiments to show that self-supervised learning indeed outperforms inferencing with misspecified models. Unlike large-scale models, our self-supervised learning is applied in the much simpler context of topic models, but we also demonstrate that even this simple application can improve over simple baselines on real data.

\subsection{Related Works}
\textbf{Self-Supervised Learning }
Self-supervised learning recently has been shown to be able to learn useful representation, which is later used for downstream tasks. See for example
\cite{bachman2019learning,caron2020unsupervised,chen2020simple,chen2020big,chen2020improved,grill2020bootstrap,chen2021exploring,tian2020contrastive,he2020momentum} and references therein. In particular, \cite{devlin2018bert} proposed BERT, which shows that self-supervised learning has the ability to train large-scale language models and could provide powerful representations for downstream natural language processing tasks.

\textbf{Theoretical Understanding of Self-Supervised Learning }
Given the recent success of self-supervise learning, many works have been tried to provide theoretical understanding on contrastive learning \citep{arora2019theoretical,wang2020understanding,  tosh2020contrastive,tian2020understanding,haochen2021provable,wen2021toward,zimmermann2021contrastive} and reconstruction-based learning \citep{lee2020predicting, saunshi2020mathematical,teng2021can}. Also, several papers considered the problem from a multi-view perspective \citep{tsai2020self,tosh2021contrastive}, which covers both contrastive and reconstruction-based learning. Moreover, \cite{wei2020theoretical} and \cite{tian2021understanding} studied the theoretical properties of self-training and the contrastive learning without the negative pairs respectively. \cite{saunshi2020mathematical} investigated the benefits of pre-trained language models for downstream tasks. 
Most relevant to our paper, \cite{tosh2020contrastive} considered the contrastive learning in the topic models setting. Our theoretical results extend their theory to reconstruction-based objective (while also removing some assumptions for the contrastive objective), and our empirical results show that the reconstruction-based objective can be effectively minimized.  

\textbf{Theoretical Analysis of Topic Models }
Many works have proposed provable algorithms to learn topic models, 
such as methods of moment based approaches \citep{anandkumar2012method, anandkumar2013tensor, anandkumar2014tensor, anandkumar2015spectral} and anchor word based approaches \citep{papadimitriou2000latent, arora2012learning, arora2016computing, gillis2013fast, bittorf2012factoring}. Much less is known about provable inference for topic models. \citet{sontag2011complexity} showed that MAP estimation can be NP-hard even for LDA model. \cite{arora2016provable} considered approximate inference algorithms. 

\vspace{-0.05in}
\subsection{Outline}

We first introduce the basic concepts of topic models and our objectives in Section~\ref{sec:prelim}. Then in Section~\ref{sec:reconst} we prove guarantees for the reconstruction-based objective. Section~\ref{sec:contrast} connects the contrastive objective to reconstruction-based objective which allows us to prove a stronger guarantee for the former. We then demonstrate the ability of self-supervised learning to adapt to different models by synthetic experiments in Section~\ref{sec:syn_experiment}. Finally, we also evaluate the reconstruction-based objective on real-data to show that despite the simplicity of the topic modeling context, it extracts reasonable representations in Section~\ref{sec:real_experiment}.

\section{Preliminaries}
\label{sec:prelim}

In this section we first introduce some general notations. Then we briefly describe the topic models we consider in Section~\ref{subsec:prelim_topic}. Finally we define the self-supervised learning objectives in Section~\ref{subsec:prelim_ssl} and give our main results.

\textbf{Notation }
We use $[n]$ to denote set $\{1,2,\ldots,n\}$. For vector $x\in\mathbb{R}^d$, denote $\norm{x}$ as its $\ell_2$ norm and $\norm{x}_1$ as its $\ell_1$ norm. For matrix $A\in\mathbb{R}^{m\times n}$, we use $A_i\in\mathbb{R}^m$ to denote its $i$-th column. When matrix $A$ has full column rank, denote its left pseudo-inverse as $A^\dagger=(A^\top A)^{-1}A^\top$. For matrix or general tensor $T\in\mathbb{R}^{d_1\times\cdots\times d_l}$, we use vector $\vect(T)\in\mathbb{R}^{d_1\cdots d_l}$ to represent its vectorization. Let $\gS_k=\{x\in\R^k|\sum_{i=1}^k x_i=1, x_i\ge 0\}$ to denote $k-1$ dimensional probability simplex. For two probability vectors $p,q$, define their total variation (TV) distance as $\tv(p,q)=\norm{p-q}_1/2$.

\subsection{Topic Models}
\label{subsec:prelim_topic}

Many topic models treat documents as a bag-of-words and use a two-step procedure of generating a document: first each document is viewed as a probability distribution of topics (often called the topic proportions $w$), and each topic is viewed as a probability distribution over words. To generate a word, one first selects a topic based on the topic proportions $w$, and then samples a word from that particular topic. 
Since the topic distributions are shared across the entire corpus, and the topic proportion vector $w$ is specific to documents, it makes sense to define $w$ as the representation for the document.

More precisely,
let $\mathcal{V}$ be a finite vocabulary with size $V$ and $\mathcal{K}$ be a set of $K$ topics, where each topic is a distribution over $\mathcal{V}$. We denote the topic-word matrix as $A$ with $A_{ij} = \Prob(\text{word}\ i \ |\ \text{topic}\ j)$, so that each column in $A$ represents the word distribution of a topic. Each document corresponds to a convex combination of different topics with proportions $w$. 
For each word in the document, first sample a topic $z$ according to the topic proportions $w$, and then sample the word from the corresponding topic (equivalently, one can also sample a word from distribution $Aw$). Different topic models differ in how they generate $w$, which we formulate in the following definition:

\begin{definition}[General Topic Model]\label{def:topic_model}
A general topic model specifies a distribution $\Delta(K)$ for each number of topics $K$. Given a topic-word matrix $A$ and $\Delta(K)$, to generate a document, one first sample $w\sim \Delta(K)$ and then sample each word in the document from the distribution $Aw$.
\end{definition}

Here $\Delta(K)$ is a prior distribution of $w$ which is crucial when trying to infer the true topic proportions $w$ given the words in a document. Of course, different topic models may also specify different priors for the topic-word matrix $A$. However, given a large number of documents generated from the topic model, in many settings one can hope to learn the topic-word matrix $A$ and the prior distribution $\Delta(K)$ (see e.g., \citet{arora2012learning,arora2016computing}), 
therefore we consider the following inference problem:

\begin{definition}[Topic Inference] Given a topic-word matrix $A$, prior distribution $\Delta(K)$ for topic proportions $w$, and a document $x$, the topic inference problem tries to compute the posterior distribution $w|(x,A)$ given the prior $w\sim \Delta(K)$. \label{def:topic_inference}
\end{definition}

Note that our general topic model can capture many standard topic models, including pure topic model, LDA, CTM and PAM.

\subsection{Semi-supervised Learning Setup} Our goal is to understand the representation learning aspect of self-supervised learning, where we are given large number of unlabeled documents and a small number of labeled documents. In the unsupervised learning stage, the algorithm only has access to the unlabeled documents, and the goal is to learn a representation (a mapping from documents to a feature vector). In the supervised learning stage, the algorithm would use the labeled documents to train a simple classifier that maps the feature vector to the actual label. We often refer to the unsupervised learning stage as ``feature learning'' and the supervised learning stage as ``downstream application''.

In the theoretical model, we assume that the label $y$ is a simple function of the topic proportions $w$. The goal is to apply self-supervised learning on the unlabeled documents to get a good representation, and then use this representation to predict the label $y$. Of course, if one knows the actual parameters of the model, the best predictor for $y$ would be to first estimate the posterior distribution $w$ and then apply the function that maps $w$ to label $y$. We will show that for functions that are approximable by low degree polynomials self-supervised learning can always provide a good representation.

\subsection{Self-supervised learning}
\label{subsec:prelim_ssl}
In general, self-supervised learning tries to turn unlabeled data into a supervised learning problem by hiding some known information. There are many ways to achieve this goal (see e.g., \cite{liu2021self,jaiswal2021survey}). In this paper, we focus on two different approaches for self-supervised learning: reconstruction-based objective and contrastive objective. We first formally define the objectives and then discuss our corresponding result.

\textbf{Reconstruction-Based Objective }
One common approach of self-supervised learning is to first mask part of the data and then try to find a function $f$ to reconstruct the missing part given the unmasked part of input. This is commonly used for language modeling \citep{devlin2018bert,radford2018improving}. In the context of topic modelling, suppose all the documents are generated from an underlying topic model with $A$ and $\Delta(K)$. Then for any given document $x_\unsup$, since each word is i.i.d. sampled (so that the order of words is not important), we pick $t$ random words from the document and mark them as unknown, then we ask the learner to predict these $t$ words given the remaining words in the document. 
Specifically, we split $x_\unsup$ into $x$ and $y$ where $y$ is the $t$ words that we select and let $x$ 
is the document with these $t$ words removed. We aim to select a predictor $f$ that minimizes the following reconstruction objective:
\begin{equation}
\begin{aligned}\label{eq:reconstruct_obj}
    \min _f L_\reconst(f) \triangleq \mathbb{E}_{x,y}[\ell(f(x),y)],
\end{aligned}
\end{equation}
where $\ell(\hat{y},y)=\sum_k -y_k\log \hat{y}_k$ is the cross entropy loss. Here we slightly abuse the notation to use $y$ as an one-hot vector for $V^t$ classes and $f$ also outputs a probability vector on $\gV^t$. Depending on the context, we will use $y$ to denote either the actual next $t$ words or its corresponding one-hot label.
Now we are ready to present our result for the reconstruction-based objective: 
\begin{theorem}(Informal)\label{thm:reconst_informal}
Consider the general topic model setting as Definition \ref{def:topic_model}, suppose function $f$ minimizes the reconstruction-based objective \eqref{eq:reconstruct_obj}. Then, any polynomial $P(w)$ of the posterior $w|(x,A)$ with degree at most $t$ can be represented by a linear function of $f(x)$. 
\end{theorem}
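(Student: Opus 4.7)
The plan is to exploit the fact that cross-entropy is a strictly proper scoring rule: over all measurable $f$ valued in the $(V^t{-}1)$-simplex, the unique (almost-sure) minimizer of $\E[\ell(f(x),y)]$ is the true conditional distribution $f^*(x) = \Prob(y\mid x)$. So the theorem reduces to understanding the algebraic form of $\Prob(y\mid x)$ under the general topic model. Conditioning on the topic proportions $w$ (using that, given $w$, the held-out words are independent of the remaining words in $x$ and i.i.d.\ from $Aw$) gives $\Prob(y\mid x) = \E_{w\mid x}[\Prob(y\mid w)]$, and for each tuple $y=(y_1,\dots,y_t)\in \gV^t$, $\Prob(y\mid w) = \prod_{i=1}^t (Aw)_{y_i}$. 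Expanding $(Aw)_{y_i} = \sum_{j_i} A_{y_i,j_i} w_{j_i}$ and collecting indices yields the clean tensor identity
\[
 f^*(x) \;=\; A^{\otimes t}\cdot\vect\!\bigl(\E_{w\mid x}[w^{\otimes t}]\bigr),
\]
where $A^{\otimes t}\in\R^{V^t\times K^t}$ is the Kronecker $t$-th power of the topic-word matrix.

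Under the standing assumption that $A$ has full column rank (a standard topic-word identifiability hypothesis), $A^{\otimes t}$ also has full column rank, so its left pseudo-inverse satisfies $\vect(\E_{w\mid x}[w^{\otimes t}]) = (A^{\otimes t})^\dagger f^*(x)$; every entry of the degree-$t$ posterior moment tensor is therefore a linear function of $f^*(x)$. To pass from degree-$t$ moments to arbitrary polynomials $P(w)$ of degree $\le t$, I would exploit that $w\in\gS_K$ satisfies $\mathbf{1}^\top w = 1$: any monomial of degree $s<t$ equals itself times $(\mathbf{1}^\top w)^{t-s}$, which expands to a linear combination of degree-exactly-$t$ monomials. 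Hence $\E_{w\mid x}[P(w)]$ is a linear combination of entries of $\E_{w\mid x}[w^{\otimes t}]$, and composing with $(A^{\otimes t})^\dagger$ gives the desired linear-in-$f^*(x)$ representation with coefficients depending only on $A$ and $P$.

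The main obstacle is not computation but correctly identifying and handling the invertibility hypothesis: the argument really needs $A^{\otimes t}$ to be injective, which forces $A$ to be full column rank and in particular $V\ge K$. A secondary subtlety is that the cross-entropy minimizer is only pinned down almost everywhere on the support of $x$, so the linear representation identity should be interpreted in that sense. It is also worth checking that the linear map $(A^{\otimes t})^\dagger$ used to extract moments depends only on $A$ and not on the prior $\Delta(K)$---this is precisely what makes the conclusion oblivious to the specific topic model, and is the feature the paper wants to highlight in contrast with model-specific posterior inference.
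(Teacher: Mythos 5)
Your proposal is correct and follows essentially the same route as the paper's proof of Theorem~\ref{thm:reconst_main}: identify the cross-entropy minimizer with the conditional word distribution $\Prob(y\mid x)$, derive the identity $f(x)=A^{\otimes t}\vect\bigl(\E_w[w^{\otimes t}\mid x]\bigr)$, and invert using the full-column-rank assumption on $A$. Your explicit homogenization of lower-degree monomials via $(\mathbf{1}^\top w)^{t-s}=1$ fills in a step the paper leaves implicit, and you correctly flag the $\rank(A)=K$ hypothesis that the informal statement omits but the formal theorem requires.
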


The theorem shows that if we want to get basic information about posterior distribution (such as mean, variance), then it suffices to predict a small constant number of words (1 for mean and 2 for variance). See Theorem~\ref{thm:reconst_main} for the formal statement.

\textbf{Contrastive Objective }
Another common approach in self-supervised learning is the contrastive learning \citep{he2020momentum,chen2020simple}. In contrastive learning, the training data is usually a pair of data $(x,\xp)$ with a label $y\in\{0,1\}$, where label 1 means $(x,\xp)$ is a positive sample ($x$ and $\xp$ are similar) and label 0 means $(x,\xp)$ is a negative sample ($x$ and $\xp$ are dissimilar). The task is to find a function $f$ such that it can distinguish the positive sample and negative sample, i.e., $f(x,\xp)=y$. Formally, we want to select a predictor $f$ such that it minimizes the following contrastive objective:
\begin{equation}\label{eq:constrastive_obj}
    \min_{f} L_\contrast\triangleq \mathbb{E}_{x,\xp,y}[\ell(f(x,\xp),y)].    
\end{equation}
In the context of topic models, following previous work \citep{tosh2020contrastive}, we generate the data $(x,\xp,y)$ as follows. Suppose all documents are generated from an underlying topic model with $A$ and $\Delta(K)$. We first generate a document $x$ from the word distribution $Aw$ where $w$ is sampled from $\Delta(K)$. Then, (i) with half probability we generate $t$ words from the same distribution $Aw$ to form the document $\xp$ and set $y=1$; (ii) with half probability we generate $t$ words from a different word distribution $Aw^\prime$ with $w^\prime \sim \Delta(K)$ (so that $w\neq w^\prime$) and set $y=0$. 
We consider the square loss $\ell(\hat{y},y)=(\hat{y}-y)^2$ for simplicity. 

We now give our informal result on contrastive objective. 
See Theorem~\ref{thm:contrast_main} for the formal statement. Note that this theorem generalizes Theorem 3 in \cite{tosh2020contrastive}.

\begin{theorem}(Informal)
Consider the general topic model setting as Definition \ref{def:topic_model}, suppose function $f$ minimizes the contrastive objective \eqref{eq:constrastive_obj}. Then we can use $f$ and enough documents to construct a representation $g(x)$ such that any polynomial $P(w)$ of the posterior $w|(x,A)$ with degree at most $t$ can be represented by a linear function of $g(x)$.
\end{theorem}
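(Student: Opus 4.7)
The plan is to reduce the contrastive setting to the reconstruction setting so that Theorem~\ref{thm:reconst_informal} can be invoked directly. The key observation is that the Bayes-optimal contrastive predictor encodes, up to a known rescaling, the Bayes-optimal reconstruction predictor with $t$ masked words. First I would compute $f^*(x,\xp)$: since the label $y$ is binary and balanced, $f^*(x,\xp) = \Prob(y=1\mid x,\xp)$. Writing $q(\xp\mid x) := \E_{w\mid x}\bigl[\prod_{i=1}^t (Aw)_{\xp_i}\bigr]$ for the conditional probability of the $t$-tuple $\xp$ given $x$ under the model, and $q(\xp) := \E_{w'\sim \Delta(K)}\bigl[\prod_{i=1}^t (Aw')_{\xp_i}\bigr]$ for its prior marginal, Bayes' rule gives
\[
f^*(x,\xp) \;=\; \frac{q(\xp\mid x)}{q(\xp\mid x) + q(\xp)},
\qquad\text{so}\qquad
q(\xp\mid x) \;=\; q(\xp)\cdot\frac{f^*(x,\xp)}{1-f^*(x,\xp)}.
\]

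Next I would build the representation $g(x)\in\R^{V^t}$ coordinate-wise: for each tuple $\xp\in\gV^t$, set $g(x)_{\xp}$ equal to the right-hand side of the inversion above. The marginal $q(\xp)$ is independent of $x$ and is simply the corpus-wide frequency of the tuple $\xp$, which can be estimated to arbitrary accuracy from unlabeled documents; this is precisely where the ``enough documents'' hypothesis enters. By construction, $g(x)_{\xp} = q(\xp\mid x)$, which is exactly what the Bayes-optimal reconstruction predictor from Section~\ref{sec:reconst} outputs at the tuple $\xp$: the probability that the $t$ masked words equal $\xp$ given the remainder $x$. Applying Theorem~\ref{thm:reconst_informal} to this recovered reconstruction predictor then yields the conclusion, since by that theorem every posterior polynomial $\E_{w\mid x}[P(w)]$ with $\deg P\le t$ is a fixed linear function of these coordinates.

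The main obstacle is propagating approximation error through the nonlinear inversion $q(\xp)\cdot f/(1-f)$, which is unstable when $f$ approaches $1$ (the case of a tuple $\xp$ that is highly characteristic of the document). A robust version of the argument is needed: when only an approximate minimizer $\hat f$ of $L_{\contrast}$ and an empirical estimate $\hat q(\xp)$ are available, one must show that the resulting $\hat g$ remains close in an appropriate norm to the ideal $g$, so that the subsequent fixed linear map still predicts $\E_{w\mid x}[P(w)]$ accurately. Quantifying this stability and combining it with a standard concentration bound on the empirical $t$-gram frequencies is what fixes the sample complexity implicit in the statement and yields the ``stronger guarantee'' relative to the prior result of \cite{tosh2020contrastive}.
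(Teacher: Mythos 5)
Your proposal is correct in its core chain of reasoning, and it shares the paper's key identity: the Bayes-optimal contrastive predictor satisfies $f^*(x,\xp)/(1-f^*(x,\xp)) = \Prob(\xp\mid x)/\Prob(\xp)$, which is linear in $\vect(W_{post})$ up to the marginal factor $\Prob(\xp)$. Where you diverge is in how the representation is assembled, and the paper's route (Theorem~\ref{thm:contrast_main}, proved via Lemma~\ref{lem:contrast_repr}) is worth contrasting with yours. You multiply each ratio by an estimated marginal $q(\xp)$ so as to literally reconstruct the Bayes-optimal reconstruction predictor over all $V^t$ tuples, and then invoke Theorem~\ref{thm:reconst_main} as a black box. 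The paper instead evaluates the raw ratios $g(x,l_i)$ only at $m=K^t$ randomly chosen landmark documents, writes $g(x,\{l_i\}) = D^{-1}\tilde{\gA}\,\vect(W_{post})$ with $D_{ii}=\Prob(l_i)$, and absorbs the unknown diagonal $D$ into the linear coefficient $\theta = D(\tilde{\gA}^\dagger)^\top\beta$. This buys two things your route gives up: the marginals $\Prob(\xp)$ never need to be estimated at all (eliminating exactly the instability and concentration issues you flag as your "main obstacle," which are artifacts of your construction rather than of the problem), and the representation is $K^t$-dimensional rather than $V^t$-dimensional. Your route buys a clean modular reduction to the reconstruction theorem, but at the cost of estimating $q(\xp)$ for exponentially many tuples --- and note that for $t\ge 2$ the marginal $q(\xp)=\E_{w'}[\prod_i (Aw')_{\xp_i}]$ is not a product of word frequencies; it must be estimated from $t$-tuples drawn within a common document, which is harder than "corpus-wide frequency" suggests. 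One small gap on the paper's side that your approach sidesteps: the paper must argue that the $K^t$ selected rows of $\gA$ form a full-rank submatrix $\tilde{\gA}$, which it asserts rather briefly; your full $V^t$-dimensional version only needs $\gA$ itself to have full column rank, which follows immediately from $\rank(A)=K$.
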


\vspace{-0.1in}
\paragraph{Advantage of SSL approach} Note that neither objectives \eqref{eq:reconstruct_obj} or \eqref{eq:constrastive_obj} depend on the prior distribution $\Delta(K)$, so it is possible to optimize these without specifying a specific model; on the other hand, even the definition of topic inference (Definition~\ref{def:topic_inference}) relies heavily on $\Delta(K)$.

\section{Guarantees for the Reconstruction-Based Objective}\label{sec:reconst}

In this section, we consider the reconstruction-based objective \eqref{eq:reconstruct_obj} and provide theoretical guarantees for its performance. We first show that if such objective with $t$ unknown words can be minimized, then any polynomial of topic posterior $w|(x,A)$ with degree at most $t$ can be represented by a linear function of the learned representation.

\begin{restatable}[Main Result]{theorem}{thmreconstmain}\label{thm:reconst_main}
Consider the general topic model setting as Definition \ref{def:topic_model}, suppose topic-word matrix $A$ satisfies $\rank(A)=K$ and function $f$ minimizes the reconstruction-based objective \eqref{eq:reconstruct_obj}. Then, for any document $x$ and its posterior $w|(x,A)$, the posterior mean of any polynomial $P(w)$ with degree at most $t$ is linear in $f(x)$. That is, there exists a $\theta\in\mathbb{R}^{V^t}$ such that for all documents $x$
\[
    \E_w[P(w)|x,A] = \theta^\top f(x).
\]
\end{restatable}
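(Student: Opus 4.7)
The plan is to identify the minimizer $f^\star$ explicitly, then use the bag-of-words structure to express $f^\star(x)$ as a linear image of the posterior moment tensor $\E[w^{\otimes t}\mid x, A]$, and finally recover polynomial posterior means by linear inversion plus the simplex identity $\sum_i w_i=1$.

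First I would observe that, because $\ell$ is the cross-entropy loss and $y$ is a $V^t$-valued one-hot label, the pointwise minimizer of \eqref{eq:reconstruct_obj} is the conditional distribution: for every $x$ and every tuple $(v_1,\dots,v_t)\in\gV^t$,
\[
 f^\star(x)_{(v_1,\dots,v_t)} = \Prob(y_1=v_1,\dots,y_t=v_t \mid x).
\]
Since each word in the document is sampled i.i.d.\ from $Aw$ conditional on $w$, and the $t$ held-out words are independent of $x$ given $w$, conditioning on $w$ and then marginalizing gives
\[
 f^\star(x)_{(v_1,\dots,v_t)} = \E_{w\mid x,A}\!\bigl[(Aw)_{v_1}(Aw)_{v_2}\cdots (Aw)_{v_t}\bigr].
\]
Vectorizing over $(v_1,\dots,v_t)$, this is exactly $f^\star(x)=\E_{w\mid x,A}[(Aw)^{\otimes t}]$.

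Next I would exploit the bilinearity of the Kronecker product: $(Aw)^{\otimes t}=A^{\otimes t}\,w^{\otimes t}$, so
\[
 f^\star(x) \;=\; A^{\otimes t}\,\E_{w\mid x,A}\!\bigl[w^{\otimes t}\bigr].
\]
Because $\rank(A)=K$, the Kronecker power $A^{\otimes t}\in\R^{V^t\times K^t}$ has full column rank $K^t$, so its left pseudo-inverse exists and
\[
 \E_{w\mid x,A}\!\bigl[w^{\otimes t}\bigr] \;=\; (A^{\otimes t})^{\dagger}\,f^\star(x).
\]
Thus every entry of the $t$-th posterior moment tensor of $w$ is a fixed linear functional of $f^\star(x)$.

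Finally, for the polynomial $P(w)$ of total degree at most $t$, I would write $P$ as a linear combination of monomials $w_{i_1}\cdots w_{i_s}$ with $s\le t$, then homogenize each such monomial to degree exactly $t$ using the simplex identity $1=(\sum_j w_j)^{t-s}$ (valid pointwise since $w\in\gS_K$). This expresses $P(w)$ as a linear combination of the entries of $w^{\otimes t}$, and taking conditional expectation turns $\E[P(w)\mid x,A]$ into a linear combination of entries of $\E[w^{\otimes t}\mid x,A]$, which by the previous display is linear in $f^\star(x)$. Setting $\theta$ to the composition of these linear maps yields the desired $\theta\in\R^{V^t}$.

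\textbf{Main obstacle.} The only nontrivial step is justifying that $A^{\otimes t}$ inherits full column rank from $A$ and that this suffices to invert the tensor equation; once that linear-algebraic fact is in hand, the rest is bookkeeping. The simplex homogenization trick is what lets the result cover all polynomials of degree \emph{at most} $t$ rather than only homogeneous ones, and it is worth stating explicitly since $\E[w^{\otimes t}\mid x,A]$ is only guaranteed to determine degree-exactly-$t$ posterior moments directly.
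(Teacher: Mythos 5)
Your proposal is correct and follows essentially the same route as the paper's proof: identify the minimizer as the $t$-word posterior via properties of cross-entropy, show $f(x)=A^{\otimes t}\,\E_w[w^{\otimes t}\mid x,A]$ (the paper derives this by summing over latent topics $z_1,\dots,z_t$, you by conditioning on $w$ directly—the same computation), and invert using the full column rank of $A^{\otimes t}$. The one place you add value is in making explicit the homogenization via $\sum_i w_i=1$ to handle polynomials of degree strictly less than $t$, a step the paper asserts without justification.
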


To understand this theorem, first consider a warm-up example where $t=1$ (see details in Section~\ref{subsec:reconst_warmup} in appendix). Intuitively, in this case the best way to predict the missing word for a given document $x$ is to estimate its topic proportion $w$, and then predict the word using $Aw$ where $A$ is the topic-word matrix. Therefore, if the output of self-supervised learning $f(x)$ (which is a $V$-dimensional vector indexed by words) minimizes the loss, then $f(x)$ must have the form $f(x) = A\E[w|x,A]$. When $A$ is full rank multiplying by the pseudo-inverse of $A$ recovers the expectation of the posterior $w|(x,A)$.  
The proof for the general case of predicting $t$-words requires more careful characterization of the optimal prediction and its relationship to $\E_w[P(w)|x,A]$, which we defer to Section~\ref{subsec:pf_reconst_main}.

The above discussion also suggests that though the representation dimension may seem to be $V^t$, its ``effective" dimension is in fact $K^t$ which is much smaller than $V^t$. See more details in Appendix~\ref{subsec:pf_reconst_discussion}.

\vspace{-0.1in}
\paragraph{Robustness for an approximate minimizer}
In Theorem~\ref{thm:reconst_main} we focus on the case when function $f$ is exactly the minimizer of the reconstruction-based objective \eqref{eq:reconstruct_obj}, i.e., $L_\reconst(f)=L_\reconst^*\triangleq\min_f L_\reconst(f)$. However, in practice one cannot hope to find such a function exactly. In the following, we provide a robust version of Theorem~\ref{thm:reconst_main} such that it allows us to find an approximate solution instead of the exact optimal solution.

To present our result, we need to first introduce the following notion of condition number, which was used in many previous works, such as collaborative filtering systems \citep{kleinberg2008using} and topic models \citep{arora2016provable}. Intuitively, $\kappa(B)$ measures how large a vector would change after multiplying with $B$ in the $\ell_1$ norm sense.

\begin{definition}[$\ell_1$ Condition Number]\label{def:l1_cond_num}
For matrix $B\in\mathbb{R}^{m\times n}$, define its $\ell_1$ condition number $\kappa(B)$ as
$\kappa(B)\triangleq\min_{x\in\mathbb{R}^n: x\neq 0} \norm{Bx}_1/\norm{x}_1=\max_{i\in[n]} \norm{B_i}_1,
$
where $B_i$ is the $i$-th column of $B$.
\end{definition}

Let $W_{post}\in\mathbb{R}^{K\times\cdots\times K}$ be the topic posterior tensor for $t$ unknown words $y=(y_1,\ldots,y_t)$ given remaining document $x$. That is, for each entry $[W_{post}]_{z_1,\ldots,z_k}=\Prob( \text{for all $i$, $z_i$ is the topic of word $y_i$}|x,A)=\E_w[w_{z_1}\cdots w_{z_t}|x,A]$ and $W_{post}=\E_w[w^{\otimes t}|x,A]$.\footnote{To simply the notations, WLOG assume topic set $\gK=[K]$ so that we can use topics $z_i\in[K]$ as indices.} Thus, any polynomial $P(w)$ of degree at most $t$ can be represented as $\E_w[P(w)|x,A]=\beta^\top \vect(W_{post})$ for some $\beta$, where $\vect(W_{post})\in\mathbb{R}^{K^t}$ is the vectorization of $W_{post}$.

We now are ready to present the robust version of Theorem~\ref{thm:reconst_main}. It shows if we only find a function $f$ whose loss is at most $\epsilon$ larger than the optimal loss, then a linear transformation of our learned representation can still give a good approximation of the target polynomial within a $O(\epsilon)$ error. 

\begin{restatable}[Robust Version]{theorem}{thmreconstrobust}\label{thm:reconst_robust}
Consider the general topic model setting as Definition \ref{def:topic_model}, suppose topic-word matrix $A$ satisfies $\rank(A)=K$ and function $f$ satisfies $L_\reconst(f)\le L_\reconst^* + \epsilon$ for some $\epsilon>0$. Then, for any document $x$ and its posterior $w|(x,A)$, the posterior mean of any polynomial $P(w)$ with degree at most $t$ is approximately linear in $f(x)$. That is, there exists a $\theta\in\mathbb{R}^{V^t}$ such that
\[
    \E_x\left[\left(\E_w[P(w)|x,A] - \theta^\top f(x)\right)^2\right]\le 2\norm{\beta}^2\kappa^{2t}(A^\dagger) \epsilon,
\]
where $\E_w[P(w)|x,A]=\beta^\top\vect(W_{post})$.
\end{restatable}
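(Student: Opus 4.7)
The plan is to follow the template of the exact-minimizer proof (Theorem~\ref{thm:reconst_main}) but quantitatively track how the suboptimality $\epsilon$ propagates through the pseudo-inverse of $A^{\otimes t}$. First I would identify the Bayes-optimal predictor $f^*$ for cross-entropy, namely $f^*(x) = \Pr(y \mid x, A)$ viewed as a vector in $\R^{V^t}$. Using that each of the $t$ held-out words is drawn i.i.d.\ from $Aw$ conditional on $w$, one has
\[
    f^*(x) \;=\; A^{\otimes t}\,\vect(W_{post}), \qquad W_{post} = \E_w[w^{\otimes t}\mid x, A].
\]
Because cross-entropy is a proper scoring rule, $L_\reconst(f) - L_\reconst^* = \E_x[\KL(f^*(x)\,\|\,f(x))]$, and Pinsker's inequality applied pointwise then expectation yields
\[
    \E_x\!\left[\|f(x) - f^*(x)\|_1^2\right] \;\le\; 2\,\E_x[\KL(f^*(x)\,\|\,f(x))] \;\le\; 2\epsilon.
\]

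Next I would choose the linear readout $\theta$. Since $\rank(A) = K$, the Kronecker power $A^{\otimes t}$ has full column rank $K^t$, and its pseudo-inverse factorizes as $(A^{\otimes t})^\dagger = (A^\dagger)^{\otimes t}$. I would set
\[
    \theta \;=\; \bigl((A^\dagger)^{\otimes t}\bigr)^{\!\top}\!\beta \;\in\; \R^{V^t},
\]
so that at the Bayes predictor $\theta^\top f^*(x) = \beta^\top(A^\dagger)^{\otimes t} A^{\otimes t}\vect(W_{post}) = \beta^\top \vect(W_{post}) = \E_w[P(w)\mid x, A]$ exactly. The residual therefore only comes from the gap $f - f^*$:
\[
    \E_w[P(w)\mid x,A] - \theta^\top f(x) \;=\; \beta^\top (A^\dagger)^{\otimes t}\bigl(f^*(x) - f(x)\bigr).
\]

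To convert this into the claimed bound I would apply Cauchy–Schwarz together with $\|v\|_2 \le \|v\|_1$ to obtain
\[
    \bigl(\E_w[P(w)\mid x,A] - \theta^\top f(x)\bigr)^2 \;\le\; \|\beta\|^2\,\bigl\|(A^\dagger)^{\otimes t}(f^*(x)-f(x))\bigr\|_1^2,
\]
then invoke the multiplicativity of the induced $\ell_1 \to \ell_1$ operator norm under Kronecker products, $\kappa(B \otimes C) = \max_{i,j}\|B_{:,i}\|_1\|C_{:,j}\|_1 = \kappa(B)\kappa(C)$, which iterated gives $\kappa((A^\dagger)^{\otimes t}) = \kappa(A^\dagger)^{t}$. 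Hence $\|(A^\dagger)^{\otimes t}u\|_1 \le \kappa(A^\dagger)^{t}\|u\|_1$, and taking expectations and combining with the Pinsker bound yields
\[
    \E_x\!\left[\bigl(\E_w[P(w)\mid x,A] - \theta^\top f(x)\bigr)^2\right] \;\le\; \|\beta\|^2 \kappa(A^\dagger)^{2t}\cdot 2\epsilon,
\]
which is exactly the advertised estimate.

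The main obstacle I anticipate is the bookkeeping around the tensor structure: justifying that the Bayes-optimal $t$-tuple distribution factors as $A^{\otimes t}\vect(W_{post})$ (which uses that the $t$ unknown words are conditionally i.i.d.\ given $w$), and verifying $(A^{\otimes t})^\dagger = (A^\dagger)^{\otimes t}$ and $\kappa(B^{\otimes t}) = \kappa(B)^t$ so that a single $\kappa(A^\dagger)^t$ factor captures the entire amplification. The Pinsker-to-$\ell_1$ step and the Cauchy–Schwarz step are standard and introduce no additional constants beyond the stated $2\|\beta\|^2\kappa(A^\dagger)^{2t}$.
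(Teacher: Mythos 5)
Your proposal is correct and follows essentially the same route as the paper's proof: identify the Bayes-optimal predictor $f^*(x)=A^{\otimes t}\vect(W_{post})$, set $\theta=((A^\dagger)^{\otimes t})^\top\beta$, convert the excess cross-entropy into an expected KL divergence, apply Pinsker to get an $\ell_1$ bound on $f-f^*$, and propagate it through $\kappa((A^\dagger)^{\otimes t})=\kappa(A^\dagger)^t$ together with Cauchy--Schwarz and $\|\cdot\|_2\le\|\cdot\|_1$. If anything, your handling of the pointwise-Pinsker-then-expectation step is slightly more careful than the paper's, which informally treats the KL bound as holding pointwise in $x$ rather than only in expectation.
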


Note that the dependency on $\norm{\beta}^2\kappa^{2t}(A^\dagger)$ is expected, since this is the norm $\norm{\theta}^2$ that we would have if $\epsilon=0$, i.e., the $\theta$ we would have in Theorem~\ref{thm:reconst_main}. Thus, this quantity should be understood as the complexity of the target function for the downstream task. Empirically we show that $\kappa(A^\dagger)$ is small in Section~\ref{subsec:cond_num}. 
The proof is deferred to Section~\ref{subsec:pf_reconst_robust}.

\section{Guarantees for the Contrastive Objective}\label{sec:contrast}

In this section, we consider the contrastive objective \eqref{eq:constrastive_obj} for self-supervised learning and provide similar provable guarantees on its performance as the reconstruction-based objective.

We use the same approach as \cite{tosh2020contrastive} to construct a representation $g(x)$. 
Given a set of landmark documents $\{l_i\}_{i=1}^m$ with length $|l_i|=t$ as references, the representation is defined as 
\begin{equation}\label{eq:contrast_repr}
\begin{aligned}
    g(x,\{l_i\}_{i=1}^m)=\left(g(x,l_1),\ldots,g(x,l_m)\right)^\top,\quad
    g(x,\xp) = \frac{f(x,\xp)}{1-f(x,\xp)}.    
\end{aligned}
\end{equation}

The following theorem gives the theoretical guarantee for this representation. Similar to Theorem~\ref{thm:reconst_main} for reconstruction-based objective, it shows that any polynomial $P(w)$ of degree at most $t$ can be represented by a linear function of the learned representation. Our proof here relies on the observation that having $g(x,\{l_i\}_{i=1}^m)$ for all short landmark documents of length $t$ gives similar information as $f$ in the reconstruction-based objective does. This observation allows us to remove the anchor words assumption needed in \citep{tosh2020contrastive}.

\begin{restatable}{theorem}{thmcontrastmain}\label{thm:contrast_main}
Consider the general topic model setting as Definition \ref{def:topic_model}, suppose topic-word matrix $A$ satisfies $\rank(A)=K$ and function $f$ minimizes the contrastive objective \eqref{eq:constrastive_obj}. Assume we randomly sampled $m=K^t$ different landmark documents $\{l_i\}_{i=1}^m$ and construct $g(x,\{l_i\}_{i=1}^m)$ as \eqref{eq:contrast_repr}. Then, for any document $x$ and its posterior $w|(x,A)$, the posterior mean of any polynomial $P(w)$ with degree at most $t$ is linear in $g(x,\{l_i\}_{i=1}^m)$. That is, there exists $\theta\in\mathbb{R}^{m}$ such that for all documents $x$
\[
    \E_w[P(w)|x,A] = \theta^\top g(x,\{l_i\}_{i=1}^m).
\]
\end{restatable}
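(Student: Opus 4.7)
My plan is to compute the Bayes--optimal predictor for the contrastive square loss explicitly, then show that the associated Bayes factor is a linear functional of the posterior moment tensor $W_{\text{post}}=\E_w[w^{\otimes t}\mid x,A]$, and finally argue that enough random landmarks make these linear functionals span $\R^{K^t}$.

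First, for the square loss the pointwise minimizer of \eqref{eq:constrastive_obj} is $f^*(x,x')=\Prob(y=1\mid x,x')$. Because $\Prob(y=1)=\Prob(y=0)=1/2$, Bayes' rule gives
\[
  g(x,x')=\frac{f^*(x,x')}{1-f^*(x,x')}=\frac{\Prob(x'\mid y=1,x)}{\Prob(x'\mid y=0,x)}=\frac{\E_w[\Prob(x'\mid w)\mid x,A]}{\Prob(x')}.
\]
Since each of the $t$ words of $x'$ is drawn i.i.d.\ from $Aw$, we have $\Prob(x'\mid w)=\prod_{j=1}^{t}(Aw)_{x'_j}=\langle (Aw)^{\otimes t},E_{x'}\rangle$, where $E_{x'}=e_{x'_1}\otimes\cdots\otimes e_{x'_t}\in\R^{V^t}$. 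Using $(Aw)^{\otimes t}=A^{\otimes t}\,w^{\otimes t}$ and linearity of expectation, the numerator equals $\langle A^{\otimes t}W_{\text{post}},E_{x'}\rangle=\langle W_{\text{post}},(A^\top)^{\otimes t}E_{x'}\rangle$. So for each landmark $l_i$ of length $t$,
\[
  g(x,l_i)=\bigl\langle \vect(W_{\text{post}}),\,v_i\bigr\rangle,\qquad v_i\;\triangleq\;\frac{(A^\top)^{\otimes t}E_{l_i}}{\Prob(l_i)}.
\]

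Next I would exhibit the algebraic structure of $v_i$. If $a_r\in\R^K$ denotes the $r$-th row of $A$ (as a column vector), then $(A^\top)^{\otimes t}E_{l_i}=a_{l_{i,1}}\otimes\cdots\otimes a_{l_{i,t}}$, so each $v_i$ is a rank-one simple tensor built from rows of $A$. Because $\rank(A)=K$, there exist $K$ indices $r_1,\dots,r_K\in[V]$ whose rows $a_{r_1},\dots,a_{r_K}$ are linearly independent, and it is a standard fact that the $K^t$ tensor products $a_{r_{j_1}}\otimes\cdots\otimes a_{r_{j_t}}$ are then linearly independent in $\R^{K^t}$. Hence the map $(l_1,\dots,l_m)\mapsto\det[v_1\mid\cdots\mid v_m]$ is a nonzero polynomial in the entries of $A$ restricted to admissible landmark tuples, so for a random draw of $m=K^t$ distinct landmarks (from any distribution charging every word with positive probability, e.g.\ the marginal word distribution) the resulting $\{v_i\}_{i=1}^m$ span $\R^{K^t}$ almost surely. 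Conditional on this event, the matrix $V=[v_1\mid\cdots\mid v_m]$ is invertible.

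Finally, for any polynomial $P(w)$ of degree at most $t$, Theorem~\ref{thm:reconst_main}'s preparatory remark gives a $\beta\in\R^{K^t}$ with $\E_w[P(w)\mid x,A]=\beta^\top\vect(W_{\text{post}})$. Setting $\theta=V^{-\top}\beta\in\R^m$, we obtain
\[
  \theta^\top g(x,\{l_i\}_{i=1}^m)=\sum_{i=1}^{m}\theta_i\langle\vect(W_{\text{post}}),v_i\rangle=\langle\vect(W_{\text{post}}),V\theta\rangle=\beta^\top\vect(W_{\text{post}})=\E_w[P(w)\mid x,A],
\]
which is the desired identity. The only genuine obstacle is justifying the spanning step for \emph{random} landmarks; this is where I would lean on the rank-one tensor structure together with the rank assumption on $A$, and (if one wants a worst-case statement) note that the theorem continues to hold whenever the landmark set contains the deterministic spanning tuple built from $K$ independent rows of $A$, which happens with probability $1$ for generic sampling distributions.
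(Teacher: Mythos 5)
Your proposal follows essentially the same route as the paper's proof: you characterize the Bayes-optimal $f$ for the square loss as $\Prob(y=1\mid x,\xp)$, apply Bayes' rule with $\Prob(y=0)=\Prob(y=1)=1/2$ to get $g(x,\xp)=\E_w[\Prob(\xp\mid w)\mid x,A]/\Prob(\xp)$, and expand $\Prob(\xp\mid w)=\prod_j (Aw)_{\xp_j}$ to conclude that $g(x,l_i)$ is the inner product of $\vect(W_{\text{post}})$ with the row of $A^{\otimes t}$ indexed by $l_i$, rescaled by $1/\Prob(l_i)$ --- exactly the paper's identity $g(x,\{l_i\}_{i=1}^m)=D^{-1}\tilde{\gA}\,\vect(W_{\text{post}})$. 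The one place where you diverge is the step the paper treats most casually: the paper simply asserts that the $K^t\times K^t$ submatrix $\tilde{\gA}$ has full column rank ``since $m=K^t$,'' which is a non sequitur, whereas you correctly flag this as the genuine obstacle and supply the rank-one-tensor observation ($K$ linearly independent rows of $A$ yield $K^t$ linearly independent tensor products, so spanning landmark tuples exist). However, your probabilistic justification does not quite close the gap either: the landmarks are drawn from a \emph{discrete} distribution over word-tuples, so the ``nonzero polynomial vanishes only on a null set'' argument gives nothing (bad tuples can carry positive probability), and a random draw of $K^t$ distinct landmarks certainly does not contain a prescribed spanning tuple with probability one. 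The honest form of the conclusion --- for both your write-up and the paper's --- is that the identity holds for any landmark set whose associated rows of $A^{\otimes t}$ span $\R^{K^t}$, and that such sets exist by your tensor-product argument; upgrading ``exists'' to ``random landmarks suffice almost surely'' would need an additional assumption on the landmark distribution that neither argument supplies.
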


In fact, one can use the same set of documents for both representation and downstream task so that we do not need additional landmark documents.

\vspace{-0.05in}
\section{Synthetic Experiments}\label{sec:syn_experiment}

In this section, we optimize the reconstruction-based objective \eqref{eq:reconstruct_obj} on data generated by several topic models to show that 
self-supervised learning performs on par with inference using correct model, and both of them outperform inference with misspecified models.

\vspace{-0.05in}
\subsection{Topic Models}
We consider four types of topic models in our experiments. Our first topic model is the pure-topic model, where each document's topic comes from a discrete uniform distribution over the $K$-dimensional linear basis, namely $\{e_1, e_2, ..., e_K\}$. Our second topic model is the Latent Dirichlet Allocation (LDA) model, where $\Delta(K)$ is a symmetric Dirichlet distribution $\dir(1/K)$.

We are also interested in topic models that involve more subtle topic correlations and similarities between different topic's word distributions. To this end, we consider the Correlated Topic Model (CTM) \citep{blei2007correlated} and the Pachinko Allocation Model (PAM) \citep{li2006pachinko}. Our goal is to construct settings where the correlation between topics provide useful information for inference. To achieve this goal, in CTM, we construct groups of 4 topics. Within the group, topic pairs (0,2) and (1,3) are highly correlated (as specified by the prior), while topics 0 and 1 share many words (see Figure~\ref{fig:topic_dist} for an illustration of the setting). In this case, if we observe a document with words that could either belong to topic 0 or 1, but this same document also has words that are associated with topic 2, we can infer that the first set of words are likely from topic 0, not topic 1.  We construct such correlations in CTM by setting  the diagonal entries of its Gaussian covariance matrix to 15 and the covariance between correlated pairs of topics to 0.99 times the diagonal entries, with the remaining entries set to zero. We construct similar examples for PAM, see Appendix~\ref{sec:more_synthetic} for details.

\begin{figure}[t]
    \centering
    \includegraphics[width=0.32\textwidth]{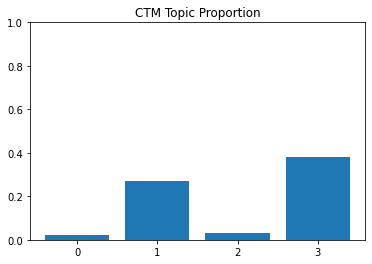}
    \includegraphics[width=0.35\textwidth]{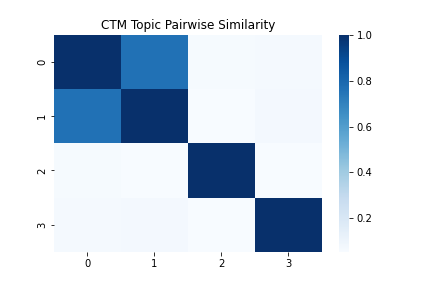}
    \caption{One example of a group of 4 topics in the Correlated Topic Model. \textbf{Left:} Weight of each topic in a document's topic proportion. In this example topics 1 and 3 have large proportions as they are correlated. \textbf{Right:} Pairwise topic similarity (cosine similarity of the pair's word distribution). }
    \label{fig:topic_dist}
\end{figure}

\vspace{-0.05in}
\subsection{Simulation Setup}

\vspace{-0.05in}
\paragraph{Document generation} 
Our documents are synthetically generated through the following steps: we first construct a $V\times K$ 
topic-word matrix $A$. Then, for each document, we determine the document length $n$ from Poisson distribution $\pois(\lambda)$, draw a topic distribution $w$ from $\Delta(K)$, and draw $n$ words i.i.d. from the word distribution given by $Aw$. In our simulation, we set $K=20, V=5000, \lambda \in \{30,60\}$.
Often $A$ will be drawn from a Dirichlet distribution $\dir(\alpha/K)$ (see Appendix~\ref{sec:more_synthetic} for details), and we take $\alpha=1,3,5,7,9$ to vary the difficulty level of the inference problem, where a larger $\alpha$ introduces more similarity between topics. 

\vspace{-0.05in}
\paragraph{Neural network models }
We consider two different network architectures as $f$ in the experiments: (1) Fully-connected network with residual connection, with bag-of-words document representation as the input since the order does not matter in topic models; (2) Attention-based architecture. Due to the architecture of attention, the input needs to be a sequence of words instead of bag-of-words vector. Thus, we transform bag-of-words representation into full document by repeating each word by its frequency and concatenating them in random order, and then use the full document as the input. More specifically, the attention-based architecture contains 8 transformer blocks where each transformer block consists of a 768-dimensional attention layer and a feed-forward layer, with residual connection applied around every block. The network's second to last layer averages over the outputs of the last transformer block and the final layer projects it to a $V$-dimensional word distribution. See more details in Section~\ref{subsec:hyperparam}. 
We find that fully-connected neural network with residual connections performs the best on recovering topic posterior distribution for pure-topic and LDA documents, and attention-based architecture \citep{vaswani2017attention} performs the best for recovering topic posterior distribution for CTM and PAM documents.

\vspace{-0.05in}
\paragraph{Training setup}
During training, we resample 60,000 new documents after every 2 epochs, and the total amount of training data varies from 720K documents to 6M documents. The loss function is the reconstruction-based objective \eqref{eq:reconstruct_obj} with $t=1$, i.e., we want the model to predict one missing word. To reduce the variance during the training, in each training document, we sampled 6 words as the prediction target, hide them from the model and ask the model to predict each word separately. The trained model is evaluated on test documents generated from the same topic prior as the training documents. We use 5,000 test documents for the pure-topic prior and 200 test documents for the remaining priors. For each test document, we use MCMC (see Appendix~\ref{subsec:sampling_details}) assuming the document's correct topic prior to approximate the ground truth topic posterior. To get the posterior estimations from SSL approach, we multiply the pseudo-inverse of the topic matrix $A$ with the model output $f(x)$ to get the estimated posterior mean vector $A^\dag f(x)$ as we explained in Section~\ref{sec:reconst} (and also Section~\ref{subsec:reconst_warmup}). Then, we measure topic posterior recovery loss as the Total Variation (TV) distance between the recovered topic posterior mean vector and the ground truth topic posterior mean vector.

\vspace{-0.04in}
\subsection{Experiment Results}\label{subsec:syn_experiment_result}

\vspace{-0.04in}
\paragraph{Topic posterior recovery } As illustrated in Table~\ref{tab:tv_map}, after 200 training epochs, our model can accurately recover the topic posterior mean vector. 
Meanwhile, it can be observed that for larger values of the Dirichlet hyperparameter $\alpha$, the recovery loss gets higher. This is expected because 
higher $\alpha$ leads to more similar topics, which makes the learning problem more difficult. This effect is also captured by Definition~\ref{def:l1_cond_num} and we show the computed condition numbers in Section~\ref{subsec:cond_num}. 

\begin{table}[t]
\centering
\scalebox{0.6}{
\begin{tabular}{ccccc}
\hline
\begin{tabular}[c]{@{}c@{}}TV \\ Distance \end{tabular}    & \multicolumn{4}{c}{Document Type}                                                                                                                                                                                  \\
\multicolumn{1}{c}{$\alpha$} & Pure                       & LDA                                                        & CTM                                                        & PAM                                                         \\ \hline
1                         & \begin{tabular}[c]{@{}c@{}} 0.0148 \\± 0.0020 \end{tabular}                    & \begin{tabular}[c]{@{}c@{}}0.0757 \\ ± 0.0033\end{tabular} & \begin{tabular}[c]{@{}c@{}}0.0550\\ ± 0.0037\end{tabular}  & \begin{tabular}[c]{@{}c@{}}0.0489 \\ ± 0.0025\end{tabular}  \\ \hline
3                         & \begin{tabular}[c]{@{}c@{}} 0.0308 \\± 0.0076 \end{tabular}                     & \begin{tabular}[c]{@{}c@{}}0.0899 \\ ± 0.0053\end{tabular} & \begin{tabular}[c]{@{}c@{}}0.0799 \\ ± 0.0048\end{tabular} & \begin{tabular}[c]{@{}c@{}}0.0712 \\ ± 0.0038\end{tabular}  \\ \hline
5                         & \begin{tabular}[c]{@{}c@{}} 0.0501 \\± 0.0030 \end{tabular}                     & \begin{tabular}[c]{@{}c@{}}0.1041 \\ ± 0.0062\end{tabular} & \begin{tabular}[c]{@{}c@{}}0.0970 \\ ± 0.0057\end{tabular} & \begin{tabular}[c]{@{}c@{}}0.0787 \\ ±  0.0043\end{tabular} \\ \hline
7                         & \begin{tabular}[c]{@{}c@{}} 0.0391 \\± 0.0022 \end{tabular}                     & \begin{tabular}[c]{@{}c@{}}0.1233 \\ ± 0.0071\end{tabular} & \begin{tabular}[c]{@{}c@{}}0.1071 \\ ± 0.0068\end{tabular} & \begin{tabular}[c]{@{}c@{}}0.0960 \\ ± 0.0057\end{tabular}  \\ \hline
9                         & \begin{tabular}[c]{@{}c@{}} 0.0517 \\± 0.0045 \end{tabular} & \begin{tabular}[c]{@{}c@{}}0.1358 \\ ± 0.0089\end{tabular} & \begin{tabular}[c]{@{}c@{}}0.1101 \\ ± 0.0064\end{tabular} & \begin{tabular}[c]{@{}c@{}}0.0971 \\ ± 0.0053\end{tabular}  \\ \hline
\end{tabular}
}
\hspace{0.5cm}
\scalebox{0.6}{
\begin{tabular}{ccccc}
\hline
\begin{tabular}[c]{@{}c@{}}{\tiny Major Topic(s)} \\ {\tiny Recovery Accuracy}\end{tabular} & \multicolumn{4}{c}{Document Type}                                                                                                                                                                                                                \\
$\alpha$                                                              & Pure                                                      & LDA                                                        & CTM                                                        & PAM                                                        \\ \hline
1                                                                  & \begin{tabular}[c]{@{}c@{}}1.0000\\ ± 0.0000\end{tabular} & \begin{tabular}[c]{@{}c@{}}0.9050 \\ ± 0.0406\end{tabular} & \begin{tabular}[c]{@{}c@{}}0.9175 \\ ± 0.0275\end{tabular} & \begin{tabular}[c]{@{}c@{}}0.9025 \\ ± 0.0330\end{tabular} \\ \hline
3                                                                  & \begin{tabular}[c]{@{}c@{}}1.0000\\ ± 0.0000\end{tabular} & \begin{tabular}[c]{@{}c@{}}0.8750 \\ ± 0.0458\end{tabular} & \begin{tabular}[c]{@{}c@{}}0.8900 \\ ± 0.0311\end{tabular} & \begin{tabular}[c]{@{}c@{}}0.8950 \\ ± 0.0344\end{tabular} \\ \hline
5                                                                  & \begin{tabular}[c]{@{}c@{}}1.0000\\ ± 0.0000\end{tabular} & \begin{tabular}[c]{@{}c@{}}0.9000 \\ ± 0.0416\end{tabular} & \begin{tabular}[c]{@{}c@{}}0.8975 \\ ± 0.0296\end{tabular} & \begin{tabular}[c]{@{}c@{}}0.8675 \\ ± 0.0407\end{tabular} \\ \hline
7                                                                  & \begin{tabular}[c]{@{}c@{}}1.0000\\ ± 0.0000\end{tabular} & \begin{tabular}[c]{@{}c@{}}0.9150 \\ ± 0.0387\end{tabular} & \begin{tabular}[c]{@{}c@{}}0.8675 \\ ± 0.0383\end{tabular} & \begin{tabular}[c]{@{}c@{}}0.8675 \\ ± 0.0407\end{tabular} \\ \hline
9                                                                  & \begin{tabular}[c]{@{}c@{}}0.9950\\ ± 0.0098\end{tabular} & \begin{tabular}[c]{@{}c@{}}0.9100 \\ ± 0.0397\end{tabular} & \begin{tabular}[c]{@{}c@{}}0.8850 \\ ± 0.0330\end{tabular} & \begin{tabular}[c]{@{}c@{}}0.8325 \\ ± 0.0461\end{tabular} \\ \hline
\end{tabular}
}
\caption{Self-supervised learning approach performs reasonably on recovering the topic posterior mean. \textbf{Left:} TV distance between recovered topic posterior (self-supervised learning approach) and true topic posterior for different topic models. \textbf{Right:} Major topic(s) recovery accuracy for different topic models. The 95\% confidence interval is reported in both tables.}
\label{tab:tv_map}
\end{table}

\vspace{-0.05in}
\paragraph{Major topic recovery } We examine the extent our recovered topic posterior captures the major topics in a given document. For pure-topic and LDA documents, we measure major topic recovery accuracy of correctly estimating the topic with largest proportion. 
Considering that topics in CTM and PAM documents are correlated in pairs, we measure the major topic recovery accuracy for CTM and PAM as the top-2 topic overlap rate on their test documents. Table~\ref{tab:tv_map} shows that our algorithm is successful throughout all settings we consider.

\vspace{-0.05in}
\paragraph{Robustness of self-supervised learning} 
We compare the self-supervised approach to traditional topic inference (MCMC with a specifed topic prior, the results on variational inference are reported in Section~\ref{subsec:tv_map}). 
Specifically, for each $\alpha$ value, we take 200 test documents from each category of documents, and we run posterior inference assuming a specific topic model and calculate the TV distance between this posterior mean vector and the ground truth posterior mean vector. We exclude assuming pure-topic prior from our comparison because it gives invalid results for documents with mixed topics. 
For $\alpha=1$, as shown in Table~\ref{table: tv_ssl_vs_other}, the topic posterior recovered from SSL approach is closer to the ground truth topic posterior than that recovered from a misspecified topic prior. 

We also report the major topic recovery accuracy for SSL approach against topic inference using both correct and incorrect topic model in Table~\ref{table: tv_ssl_vs_other}. The major topic recovery rate of SSL approach is similar to that of posterior inference assuming the correct prior. For pure topic model, all four methods get 100\% major topics recovery rate on pure-topic documents. For LDA, again all four methods perform similarly well since the instance is not difficult. However we observe significant difference for more complicated CTM and PAM models. For these models, the SSL approach performs similarly to posterior inference using the correct model, and both of them are significantly better than posterior inference using misspecified models. The results for $\alpha=3,5,7,9$ are presented in Section~\ref{sec:more_synthetic}. The comparison further reveals the robustness of self-superivised learning. 

\begin{table}[t]
\centering
\scalebox{0.7}{
\begin{tabular}{lcccc}
\hline
TV Distance & \multicolumn{4}{c}{Document Type ($\alpha=1$)}                                                                                                                                                                                                                \\
Method      & Pure                                                       & LDA                                                        & CTM                                                                 & PAM                                                        \\ \hline
LDA         & \begin{tabular}[c]{@{}c@{}}0.0406 \\ ± 0.0016 \end{tabular}           & -                                                          & \begin{tabular}[c]{@{}c@{}}0.1182 \\ ± 0.0099\end{tabular}          & \begin{tabular}[c]{@{}c@{}}0.1218 \\ ± 0.0096\end{tabular} \\ \hline
CTM         & \begin{tabular}[c]{@{}c@{}} 0.2083 \\ ± 0.0038 \end{tabular}            & \begin{tabular}[c]{@{}c@{}}0.2060 \\ ± 0.0064\end{tabular} & -                                                                   & \begin{tabular}[c]{@{}c@{}}0.3154 \\ ± 0.0082\end{tabular} \\ \hline
PAM         & \begin{tabular}[c]{@{}c@{}} 0.3782 \\ ± 0.0038 \end{tabular}            & \begin{tabular}[c]{@{}c@{}}0.3459 \\ ± 0.0128\end{tabular} & \begin{tabular}[c]{@{}c@{}}0.3939 \\ ± 0.0096\end{tabular}          & -                                                          \\ \hline
\textbf{SSL (ours)}  & \textbf{\begin{tabular}[c]{@{}c@{}} 0.0148 \\ ± 0.0020\end{tabular}} & \textbf{\begin{tabular}[c]{@{}c@{}}0.0757 \\ ± 0.0033\end{tabular}} & \textbf{\begin{tabular}[c]{@{}c@{}}0.0550 \\ ± 0.0037\end{tabular}} & \textbf{\begin{tabular}[c]{@{}c@{}}0.0489 \\ ± 0.0025\end{tabular}} \\ \hline
\end{tabular}
}
\hspace{0.5cm}
\scalebox{0.7}{
\begin{tabular}{lccc}
\hline
Major Topic(s) Recovery & \multicolumn{3}{c}{Document Type ($\alpha=1$)}                                                                                                                                                                     \\
Method                  & LDA                                                                 & CTM                                                                 & PAM                                                                 \\ \hline
LDA                     & \textbf{\begin{tabular}[c]{@{}c@{}}0.9150 \\ ± 0.0387\end{tabular}} & \begin{tabular}[c]{@{}c@{}}0.8850 \\ ± 0.0344\end{tabular}          & \begin{tabular}[c]{@{}c@{}}0.8500 \\ ± 0.0360\end{tabular}          \\ \hline
CTM                     & \begin{tabular}[c]{@{}c@{}}0.9000 \\ ± 0.0416\end{tabular}          & \textbf{\begin{tabular}[c]{@{}c@{}}0.9175 \\ ± 0.0275\end{tabular}} & \begin{tabular}[c]{@{}c@{}}0.8300 \\ ± 0.0370\end{tabular}          \\ \hline
PAM                     & \begin{tabular}[c]{@{}c@{}}0.8750 \\ ± 0.0458\end{tabular}          & \begin{tabular}[c]{@{}c@{}}0.7250 \\ ± 0.0397\end{tabular}          & \textbf{\begin{tabular}[c]{@{}c@{}}0.9050 \\ ± 0.0320\end{tabular}} \\ \hline
\textbf{SSL (ours)}              & \begin{tabular}[c]{@{}c@{}}0.9050 \\ ± 0.0406\end{tabular}          & \textbf{\begin{tabular}[c]{@{}c@{}}0.9175 \\ ± 0.0275\end{tabular}} & \begin{tabular}[c]{@{}c@{}}0.9025 \\ ± 0.0330\end{tabular}          \\ \hline

\end{tabular}
}

\caption{Robustness of self-supervised learning approach. \textbf{Left:} TV distance between recovered topic posterior and true topic posterior of self-supervised learning approach versus posterior inference via Markov Chain Monte Carlo assuming a specific prior for $\alpha=1$. \textbf{Right:} Major topic recovery rate of our approach versus posterior inference via Markov Chain Monte Carlo assuming a specific prior for $\alpha=1$. In both table, the 95\% confidence interval is reported.}.
\label{table: tv_ssl_vs_other}
\end{table}

\section{Experiments on Real Data}\label{sec:real_experiment}

The self-supervised learning approach studied in this paper is simplified for theoretical analysis as it uses a bag-of-words approach.
In this section, we show that although this simplified approach is not state-of-art, it has reasonable performance compared to several baselines. 

\vspace{-0.1in}
\paragraph{Experiment Setup}

We use the AG news dataset \citep{zhang2015character}, in which each document has a label of one out of four categories: world, sports, business, and sci/tech. 
Each category has 30,000 samples in the training set and 19,000 samples in the testing set. The number of words in the vocabulary is about 16,700. We generally follow the experiment setup done by \cite{tosh2020contrastive} (see more details in Section~\ref{subsec:real_data_process}), but the representation we used is generated by our reconstruction-based objective. Experiment results on two additional datasets are provided in Appendix~\ref{sec:more_real_data}. 

Our training follows the semi-supervised setup that involves two major phases: unsupervised phase and supervised phase. In the unsupervised phase, we train the self-supervised model with reconstruction-based objective on most of training data. In the supervised phase, using the document representation generated by the SSL model as input, we train a linear classifier to classify the category of the documents using multi-class logistics regression (by Scikit-learn \citep{scikit-learn}) with 3-fold cross validation for parameter tuning ($l_2$ regularization term and solver). 
We measure the topic recovery accuracy on test set to compare the performance of different methods.

We choose residual blocks as the basic building block for our neural network architecture, with varying width and depth. The network used for Figure~\ref{fig:rep_comp} has 3 residual blocks and a width of 4096 neurons per layer. 
Other detailed hyperparameters and ablation studies can be found in Appendix~\ref{subsec:realdata_model_arch}. 
We train the model using reconstruction-based objective \eqref{eq:reconstruct_obj} with $t= 1$ using the same variance-reduction technique in synthetic experiments.

To extract our representation, we use an identity matrix to replace the last layer. That is, we take a softmax function on top of the second-to-last layer of the neural network. This is different from our theory but we found that it effectively reduces the high output dimension 
and improves performance. We include more details in Section~\ref{subsec:realdata_rep_extract} in Appendix. 

\vspace{-0.1in}
\paragraph{Experiment Results}

\begin{figure}[t]
    \centering
    \subfigure{\includegraphics[width=0.45\textwidth]{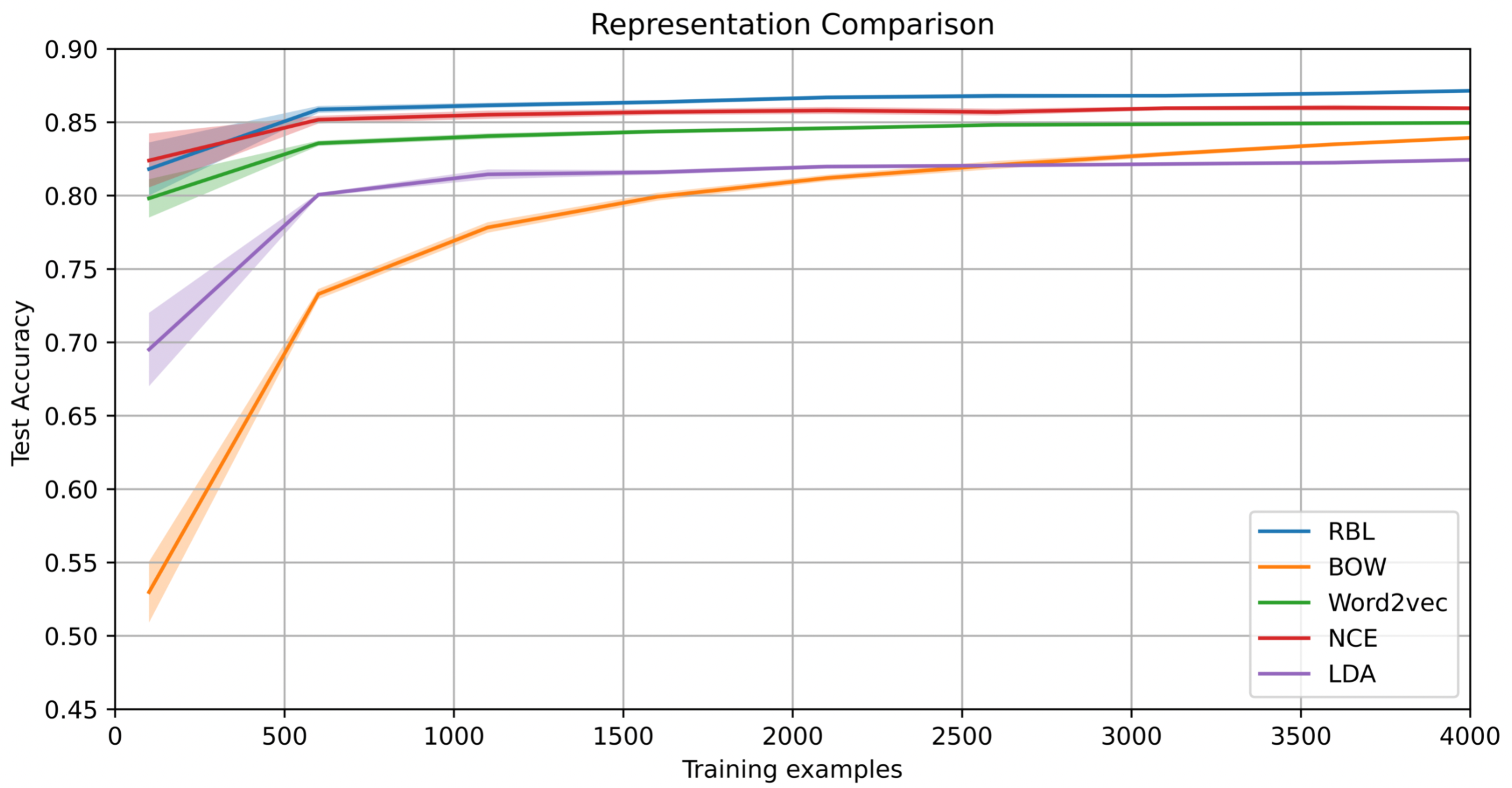}}
    \caption{Performances of RBL, NCE and baselines (BOW, Word2vec, LDA) on real data.}
    \label{fig:rep_comp}
\end{figure}

The performance of Reconstruction-Based Learning representation (RBL) is shown in Figure~\ref{fig:rep_comp}, where the test accuracy on representation is plotted against number of training samples used to train the classifier in supervised learning phase. The representation RBL performs better than both bag-of-words (BOW),Word2vec and LDA baselines (see more details about baselines in Section~\ref{subsec:realdata_rep_extract}). Notably, Word2vec representation is inferior only by a small margin, and works well in general even when limited training samples are provided. On the contrary, although BOW representation has a decent test accuracy when training samples are abundant, it performs significantly poorly on smaller training set.  

We also compare our result to previous noise contrastive estimation (NCE) representation benchmark achieved by \cite{tosh2020contrastive} using contrastive objective, where their reported best accuracy is around $87.5\%$ when full 4000 training samples are used, slightly higher than our RBL corresponding test accuracy of $87.1\%$. In our attempt to reproduce their result, parameter tuning yields the best accuracy of $86\%$ when full training samples were used. We plot our own reproduced results on in Figure~\ref{fig:rep_comp} since parameter tuning does not achieve a close benchmark to their result.

\section{Conclusion and Limitations}
``All models are wrong but some are useful.'' If one self-supervised objective can capture all models, then it would be able to extract useful information. In this paper, we studied the self-supervised learning in the topic models setup and showed that it can provide useful information about the topic posterior no matter what topic model is used. Our results generalized previous work \citep{tosh2020contrastive} to both contrastive learning and reconstruction-based learning and our techniques allow us to depend on weaker assumptions. We also empirically showed that the reconstruction-based learning performs better than the posterior inference under misspecified models, and it can provide useful representation for the topic inference problem. Our theoretical analysis is limited to a bag-of-words setting, which also limits its empirical performance. Extending the work to more complicated models is an immediate open problem.  

\section*{Acknowledgement}
This work is supported by NSF Award DMS-2031849, CCF-1845171 (CAREER), CCF-1934964 (Tripods) and a Sloan Research Fellowship.

\bibliography{iclr2023_conference}
\bibliographystyle{iclr2023_conference}

\appendix
\newpage
\section{Omitted Proofs in Section~\ref{sec:reconst}}\label{sec:pf_reconst}

In this section, we give the omitted proofs in Section~\ref{sec:reconst}. We first give a warmup example to illustrate our proof idea in Section~\ref{subsec:reconst_warmup}. Then we give the proof of our main result (Theorem~\ref{thm:reconst_main}) in Section~\ref{subsec:pf_reconst_main} and the proof of robust version (Theorem~\ref{thm:reconst_robust}) in Section~\ref{subsec:pf_reconst_robust}.

\subsection{Warm-Up Example: reconstruct the unknown word}\label{subsec:reconst_warmup}
In this warm-up example, we consider the simple setting where we try to predict the only one unknown word of the document using reconstruction-based objective \eqref{eq:reconstruct_obj}. The following result is the special case of Theorem~\ref{thm:reconst_main} with $t=1$, which shows that the learned representation is able to give any linear function of the topic posterior.

\begin{theorem}\label{thm:reconst_warmup}
Consider the general topic model setting as Definition \ref{def:topic_model}, suppose topic-word matrix $A$ satisfies $\rank(A)=K$ and function $f$ minimizes the reconstruction-based objective \eqref{eq:reconstruct_obj}. Then, for any document $x$ and its posterior $w|(x,A)$, any linear function $P(w)$ is linear in $f(x)$, that is for any $P(w)=\beta^\top w$, there exists a $\theta\in\mathbb{R}^V$ such that for all documents $x$
\[
    \E_w\left[\beta^\top w|x\right] = \theta^\top f(x).
\] 
\end{theorem}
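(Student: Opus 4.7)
The plan is to exploit the classical fact that the unique minimizer of the expected cross-entropy $\mathbb{E}_{x,y}[\ell(f(x),y)]$ (over arbitrary measurable $f$ taking values in the probability simplex on $\mathcal{V}$) is the true conditional distribution $f^*(x)_v = \Pr(y=v \mid x)$. First I would state/justify this step, then compute $\Pr(y=v \mid x)$ explicitly from the generative process of the topic model.

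Next I would unpack the conditional using the hierarchical structure of Definition~\ref{def:topic_model}. Given the observed document $x$, the held-out word $y$ is drawn from $Aw$ where $w$ has posterior distribution $w \mid (x, A)$. Hence
\[
\Pr(y=v \mid x) \;=\; \mathbb{E}_w\!\left[(Aw)_v \,\big|\, x, A\right] \;=\; \bigl(A\,\mathbb{E}_w[w \mid x, A]\bigr)_v,
\]
so as vectors $f(x) = A\,\mathbb{E}_w[w \mid x, A]$. This is the key structural identity: the optimal predictor factors through the topic-word matrix applied to the posterior mean of $w$.

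Since $\operatorname{rank}(A) = K$, the left pseudo-inverse $A^\dagger = (A^\top A)^{-1} A^\top$ exists and satisfies $A^\dagger A = I_K$. Applying $A^\dagger$ to both sides of the identity above yields $\mathbb{E}_w[w \mid x, A] = A^\dagger f(x)$. For any linear functional $P(w) = \beta^\top w$, linearity of expectation then gives
\[
\mathbb{E}_w[\beta^\top w \mid x, A] \;=\; \beta^\top A^\dagger f(x) \;=\; \theta^\top f(x), \qquad \theta \;\triangleq\; (A^\dagger)^\top \beta \;\in\; \mathbb{R}^V,
\]
which is exactly the conclusion.

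There is no real obstacle at $t=1$; the only point that deserves care is a clean justification of the ``optimum of cross-entropy equals the conditional law'' step, which I would handle either by a pointwise KL-minimization argument (since $\ell(f(x), y)$ decomposes over $x$ and the minimizer of $\mathbb{E}_{y \mid x}[-\log \hat{y}_y]$ over probability vectors $\hat{y}$ is the conditional pmf of $y \mid x$) or by citing the standard fact. The genuine difficulty only surfaces when generalizing to $t > 1$: there one must predict a joint distribution over $\mathcal{V}^t$, relate it to the tensor $\mathbb{E}_w[w^{\otimes t} \mid x, A]$, and invert using $(A^\dagger)^{\otimes t}$ — requiring the $\ell_1$ condition number machinery that motivates Definition~\ref{def:l1_cond_num} and Theorem~\ref{thm:reconst_robust} later on.
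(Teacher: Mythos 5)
Your proposal is correct and follows essentially the same route as the paper: identify the cross-entropy minimizer as the conditional word distribution $\Pr(y=\cdot\mid x)$ (the paper's Lemma~\ref{lem:reconst_repr_f}), observe that this distribution equals $A\,\E_w[w\mid x,A]$ by the generative process, and invert with $A^\dagger$ to take $\theta=(A^\dagger)^\top\beta$. Your closing remark about where the real work lies for $t>1$ (the tensor $\E_w[w^{\otimes t}\mid x,A]$ and $(A^\dagger)^{\otimes t}$) also matches the paper's treatment in Theorem~\ref{thm:reconst_main}.
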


Following the proof idea described in Section~\ref{sec:reconst}, we first give a characterization of the optimal function $f$. The following lemma shows that $f(x)$ must be the word posterior vector given the document $x$. The proof of this lemma is simply based on the property of the cross-entropy loss function and we defer it to Section~\ref{subsec:pf_reconst_repr_f}. Recall that our vocabulary set is $\gV=\{v_1,\ldots,v_V\}$ and $x_\unsup=(x,y)$ is the given document, where $x$ is unmasked part and $y$ is word marked as unknown. 

\begin{lemma}\label{lem:reconst_repr_f}
If $f$ minimizes the reconstruction-based objective \eqref{eq:reconstruct_obj}, then we have for all document $x$
\[
    f(x) = \left(\Prob(y=v_1|x),\ldots,\Prob(y=v_V|x)\right)^\top.
\]
\end{lemma}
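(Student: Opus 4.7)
The plan is to exploit the standard fact that cross-entropy is minimized pointwise by the conditional probability, and then to argue that since $f$ ranges over all functions of $x$, pointwise minimization gives the global minimizer. Concretely, I would first rewrite the objective by conditioning on $x$:
\[
    L_\reconst(f) \;=\; \E_x\!\left[\E_y\!\left[\ell(f(x),y)\,\bigl|\,x\right]\right]
    \;=\; \E_x\!\left[-\sum_{k=1}^V \Prob(y=v_k\mid x)\,\log f(x)_k\right],
\]
using the one-hot encoding of $y$ to collapse the cross-entropy to a sum weighted by $\Prob(y=v_k\mid x)$. Because the only constraint on $f(x)$ (noted in the paper just before the objective is introduced) is that it be a probability vector on $\gV$, and because $f$ is allowed to depend arbitrarily on $x$, minimizing $L_\reconst$ reduces to minimizing the inner expression separately for each $x$.

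Next I would identify the inner minimization with KL divergence. Writing $p_k(x) := \Prob(y=v_k\mid x)$, one has
\[
    -\sum_{k} p_k(x)\log f(x)_k \;=\; H(p(x)) + \KL\!\left(p(x)\,\|\,f(x)\right),
\]
where the entropy $H(p(x))$ does not depend on $f(x)$. By Gibbs' inequality, $\KL(p(x)\,\|\,f(x))\ge 0$ with equality iff $f(x)=p(x)$, so the unique minimizer on the simplex is $f(x)_k = p_k(x) = \Prob(y=v_k\mid x)$, which is exactly the claimed formula. (If one prefers to avoid invoking KL, the same conclusion follows from Lagrange multipliers on $\sum_k f(x)_k = 1$, giving $f(x)_k \propto p_k(x)$, and then the constraint forces equality.)

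The only subtlety is ensuring that pointwise optimality over $x$ yields a valid global minimizer in whatever function class $f$ is implicitly drawn from; since the statement of the lemma treats $f$ as an arbitrary function and the reconstruction objective is an unconstrained expectation, there is no real obstacle here, and I would simply remark that the pointwise optimizer is measurable in $x$ (as $p_k(\cdot)$ inherits measurability from the underlying generative process). I do not expect any genuinely hard step: the whole proof is the standard cross-entropy/KL characterization applied conditionally on $x$, and the main care is just keeping track of the fact that $y$ in the objective is an one-hot vector while $\Prob(y=v_k\mid x)$ is a scalar probability, so that the sum collapses correctly.
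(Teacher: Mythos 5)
Your proposal is correct and follows essentially the same route as the paper: condition on $x$ via the law of total expectation, then use the standard fact that cross-entropy over the simplex is minimized pointwise at the true conditional distribution (the paper states this as a lower bound by the entropy of the word posterior with equality at $f(x)=p^*(x)$, which is the same Gibbs'-inequality argument you spell out via the KL decomposition). No gaps.
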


Based on the above lemma, we are able to prove Theorem \ref{thm:reconst_warmup}. It is easy to see that the word posterior distribution is $A\E_w[w|x]$, so we have $f(x) = A \E_w[w|x]$. Since the columns of $A$ are linearly independent, we know $\E_w[w|x] = A^\dagger f(x)$. Thus, for any linear function $\beta^\top w$, there exists $\theta = (A^\dag)^\top \beta$ such that $\E_w[\beta^\top w|x] = \theta^\top f(x)$. The formal proof of the general case is given in Section~\ref{subsec:pf_reconst_main}.

\subsubsection{Proof of Lemma~\ref{lem:reconst_repr_f}}\label{subsec:pf_reconst_repr_f}
Instead of focusing on the $t=1$ case, we directly give the corresponding lemma of Lemma~\ref{lem:reconst_repr_f} for general $t$. Recall that $x_\unsup=(x,y)$ is the given document, $x$ is the unmasked part, $y=(y_1,\ldots,y_t)$ is the $t$ unknown words that we want to predict and $\gV=\{v_1,\ldots,v_V\}$ is the set of vocabulary. It shows that the optimal $f$ is the $t$ words posterior distribution. Note that the words posterior distribution is linear in the topic posterior of $t$-th moment, which is useful for the later analysis.
\begin{lemma}[Lemma~\ref{lem:reconst_repr_f}, General Case]\label{lem:reconst_repr_f_general}
If $f$ minimizes the reconstruction-based objective \eqref{eq:reconstruct_obj}, then we have for all document $x$
\[
    f(x) = \left(\Prob(y=(v_1,v_1,\ldots,v_1)|x),\Prob(y=(v_2,v_1,\ldots,v_1)|x),\ldots,\Prob(y=(v_V,v_V,\ldots,v_V)|x)\right)^\top.
\]
\end{lemma}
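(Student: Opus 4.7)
The plan is to exploit the fact that the reconstruction objective decomposes pointwise in $x$, reducing the lemma to the standard fact that cross-entropy is minimized by the true conditional distribution. Writing
\[
    \E_{x,y}[\ell(f(x),y)] = \E_x\left[\E_{y|x}[\ell(f(x),y)]\right]
\]
and noting that $f$ may be chosen as an arbitrary function of $x$, minimizing the outer expectation amounts to minimizing the inner conditional expectation separately at each $x$. Hence it suffices to characterize, for each fixed document $x$, the probability vector $q\in\gS_{V^t}$ that minimizes $\E_{y|x}[\ell(q,y)]$.

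For a fixed $x$, index the $V^t$ ordered word tuples in the order prescribed by the lemma as $v^{(1)},v^{(2)},\ldots,v^{(V^t)}$, and set $p_j \triangleq \Prob(y=v^{(j)}\mid x)$ and $q_j \triangleq f(x)_j$. Because $y$ is encoded as a one-hot vector over the $V^t$ classes, the conditional expected loss collapses to
\[
    \E_{y|x}[\ell(f(x),y)] = -\sum_{j=1}^{V^t} p_j \log q_j = H(p) + \kl{p}{q},
\]
where $H(p)$ is the Shannon entropy of $p$ and depends only on $x$. By Gibbs' inequality, $\kl{p}{q}\ge 0$ with equality iff $q=p$, so the pointwise minimizer is $f(x)_j = \Prob(y=v^{(j)}\mid x)$ for every $j$, which is precisely the vector exhibited in the statement.

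The only (very mild) subtlety is that cross-entropy is bounded below only when $f(x)$ lies in the simplex $\gS_{V^t}$ --- otherwise one could drive the loss arbitrarily low by pushing some $\log q_j \to -\infty$ --- so the minimization is implicitly over probability-vector-valued $f$. This is already built into the paper's setup, where $f$ is explicitly said to output a probability vector on $\gV^t$, and since no further structural assumptions are imposed on $f$, the pointwise argument is immediate. I do not expect any genuine obstacle; the lemma is simply the Bayes-optimality of the conditional distribution under cross-entropy, and its role downstream is to reduce the analysis of $f$ to the linear-algebraic problem of inverting the action of $A^{\otimes t}$ on the posterior moment tensor $\E_w[w^{\otimes t}\mid x,A]$.
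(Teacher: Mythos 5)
Your proposal is correct and follows essentially the same route as the paper's proof: decompose the objective via the law of total expectation, then observe that the conditional cross-entropy equals $H(p)+\kl{p}{q}$ and is minimized exactly at $q=p$ by Gibbs' inequality. Your explicit remark that the minimization is over probability-vector-valued $f$ is a sound clarification of a constraint the paper leaves implicit in its setup.
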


\begin{proof}
Denote the word posterior distribution as $p^*(x)$. We will show $f(x)=p^*(x)$. By the law of total expectation, we have
\begin{align*}
    L_\reconst = \E_{x,y}[\ell(f(x),y)]
    = \E_x[\E_{y|x}[\ell(f(x),y)|x]].
\end{align*}
We know the probability of $y$ given $x$ is $p^*(x)$. Since $\ell$ is cross-entropy loss, we have
\begin{align*}
    L_\reconst \ge \E_x\left[ -\sum_{k\in[V^t]} [p^*(x)]_k \log [p^*(x)]_k\right],
\end{align*}
where the equality is obtained when $f(x)=p^*(x)$. Thus, when $f$ minimizes the reconstruction-based objective \eqref{eq:reconstruct_obj}, we have $f(x)=p^*(x)$.
\end{proof}

\subsection{Proof of Theorem~\ref{thm:reconst_main}}\label{subsec:pf_reconst_main}
In this section, we give the proof of Theorem~\ref{thm:reconst_main}.  Theorem~\ref{thm:reconst_warmup} is covered by Theorem~\ref{thm:reconst_main} as the special case of $t=1$. The proof follows the same idea as for Theorem~\ref{thm:reconst_warmup}. We first show that $f(x)$ is the $t$-words posterior (Lemma~\ref{lem:reconst_repr_f_general}). Then just as $t=1$ case where we can recover topic posterior with $A^\dagger f(x)$, we can also recover topic posterior of $t$-th moment with $\gA^\dagger f(x)$ with some matrix $\gA$. The result follows by the observation that for a polynomial $P(w)$ with degree at most $t$, $\E_w[P(w)|x]$ is a linear function of the topic posterior of $t$-th moment.

\thmreconstmain*

\begin{proof} 
Since $f$ is the minimizer of reconstruction-based objective \eqref{eq:reconstruct_obj}, by Lemma~\ref{lem:reconst_repr_f_general} we know given an input document $x$,
\[
    f(x) = \left(\Prob(y
    =(v_1,v_1,\ldots,v_1)|x),\Prob(y=(v_2,v_1,\ldots,v_1)|x),\ldots,\Prob(y=(v_V,v_V,\ldots,v_V)|x)\right)^\top.
\]

We will show that $f(x)$ is linear in the topic posterior. In the following, we focus on $[f(x)]_{y_1,\ldots,y_t}$, which is word posterior probability of the $t$ unknown words being $y_1,\ldots,y_t \in \gV$ given document $x$. Recall $\gS_K$ be the $K-1$ dimensional probability simplex. By the law of total probability, we have
\begin{align*}
    [f(x)]_{y_1,y_2,\ldots,y_t}
    &=\Prob(y_1,y_2,\ldots,y_t|x)\\
    &=\int_{w\in \gS_K} \Prob(y_1,y_2,\ldots,y_t,w|x)\rd w\\
    &=\int_{w\in\gS_K} \Prob(y_1,y_2,\ldots,y_t|w,x)  \Prob(w|x) \rd w\\
    &=\int_{w\in\gS_K} \sum_{z_1,\ldots,z_t \in [K]} \Prob(y_1,y_2,\ldots,y_t, z_1, z_2, ...z_t|w) \Prob(w|x) \rd w\\
    &=\int_{w\in\gS_K} \sum_{z_1,\ldots,z_t \in [K]} \Prob(z_1,,\ldots, z_t|w)\Prob(y_1,\ldots,y_t|z_1,\ldots,z_t,w) \Prob(w|x) \rd w.
\end{align*}

Note that $z_i$ is the topic of word $y_i$. Since we consider the general topic model as Definition~\ref{def:topic_model}, we know 
\[
\Prob(y_1,y_2,\ldots,y_t|z_1, z_2, ..., z_t,w) 
= \Prob(y_1,y_2,\ldots,y_t|z_1, z_2, ..., z_t)
= \prod_{i=1}^t \Prob(y_i|z_i) = \prod_{i=1}^t A_{y_i,z_i},
\]
where $A$ is the topic-word matrix. Hence,
\begin{align*}
    [f(x)]_{y_1,y_2,\ldots,y_t}
    &= \sum_{z_1, z_2, ...z_t \in [K]} \prod_{i=1}^{t} A_{y_i,z_i} \int_{w \in \gS_K}  \Prob(z_1, z_2, ..., z_t|w) \Prob(w|x) \rd w\\
    &= \sum_{z_1, z_2, ...z_t \in [K]} \prod_{i=1}^{t} A_{y_i,z_i} \int_{w \in \gS_K}  \prod_{i=1}^t w_{z_i} \Prob(w|x) \rd w\\
    &= \sum_{z_1, z_2, ...z_t \in [K]} \prod_{i=1}^{t} A_{x_{m+i},z_i} \E_w\left[\prod_{i=1}^t w_{z_i}\bigg\vert x\right].
\end{align*}
Recall that the topic posterior tensor is $W_{post}=\E_w[w^{\otimes t}|x]\in \mathbb{R}^{K\times \ldots\times K}$, where each entry $[W_{post}]_{z_1,\ldots,z_k}=\Prob(z_i\text{ is the topic of word }y_i\text{ for }i\in[t]|x)=\E_w[w_{z_1}\ldots w_{z_t}|x]$. Therefore, 
\begin{align*}
    [f(x)]_{y_1,y_2,\ldots,y_t}
    &= \sum_{z_1, z_2, ...z_t \in [K]} \prod_{i=1}^{t} A_{y_i,z_i} [{W_{post}}]_{z_1, z_2, ..., z_t},\\
    f(x) 
    &= (\underbrace{A\otimes A \otimes\cdots\otimes A}_{t\text{ times}}) \vect(W_{post}),
\end{align*}
where $\otimes$ is the Kronecker product, $\gA = A\otimes A \otimes\cdots\otimes A\in\mathbb{R}^{V^t\times K^t}$ and $\vect(W_{post})\in\mathbb{R}^{K^t}$ is the vectorization of $W_{post}$. Note that $\gA^\dagger = A^\dagger\otimes\cdots\otimes A^\dagger$, so we have $\vect(W_{post})=\gA^\dagger f(x)$.

Since $P(w)$ is a polynomial of degree at most $t$, we know there exists $\beta$ such that $\E_w[P(w)|x]=\beta^\top \vect(W_{post})$.  Thus, let $\theta = (\gA^\dagger)^\top \beta$, we have $\E_w[P(w)|x] = \beta^\top \vect(W_{post}) = \theta^\top f(x)$.
\end{proof}

\subsection{Proof of Theorem~\ref{thm:reconst_robust}}\label{subsec:pf_reconst_robust}

Similar to Lemma~\ref{lem:reconst_repr_f}, we can show that the representation $f(x)$ is close to $t$ words posterior. Then, the key lemma to obtain Theorem~\ref{thm:reconst_robust} is the following result. It suggests that if function $f$ is $\epsilon$-close to the optimal function $f^*$ (word posterior distribution) under cross-entropy loss, then we can still recover the topic posterior with proper $B$ up to $O(\sqrt{\epsilon})$ error. For example, we will choose $B=A^\dagger$ when $t=1$. See the proof of Theorem~\ref{thm:reconst_robust} for the choice of $B$ in the general case.

\begin{restatable}{lemma}{lemreconsterror} \label{lem:reconst_error}
For cross-entropy loss $\ell$ and any two probability vectors $p,p^*$, if $\ell(p,p^*)\le \min_{p:\sum_i p_i=1,p\ge 0}\ell(p,p^*) + \epsilon$, then for any matrix $B$ we have $\norm{B p-B p^*}_1\le \kappa(B)\sqrt{2\epsilon}$.
\end{restatable}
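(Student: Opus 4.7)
The plan is to reduce the claim to Pinsker's inequality applied to the gap $\ell(p,p^*)-\min_{p'}\ell(p',p^*)$, and then push the resulting $\ell_1$ bound through $B$ using the condition number $\kappa(B)$.

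First I would unpack the cross-entropy loss: for a fixed target $p^*$ on the simplex, $\ell(p,p^*)=-\sum_i p^*_i\log p_i$. A standard Lagrangian computation (or Gibbs' inequality) shows that $\min_{p\in\Delta}\ell(p,p^*)$ is attained uniquely at $p=p^*$, with value equal to the entropy $H(p^*)=-\sum_i p^*_i\log p^*_i$. Consequently, the gap in the hypothesis is exactly the KL divergence:
\begin{equation*}
\ell(p,p^*)-\min_{p'\in\Delta}\ell(p',p^*)\;=\;\sum_i p^*_i\log\frac{p^*_i}{p_i}\;=\;D_{KL}(p^*\,\|\,p).
\end{equation*}
So the hypothesis simplifies to $D_{KL}(p^*\,\|\,p)\le\epsilon$.

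Next I would invoke Pinsker's inequality, $\|p-p^*\|_1\le\sqrt{2\,D_{KL}(p^*\|p)}$, which is a standard tool and gives immediately
\begin{equation*}
\|p-p^*\|_1 \;\le\; \sqrt{2\epsilon}.
\end{equation*}
Finally, I would push this through $B$ using the $\ell_1$ condition number. By Definition~\ref{def:l1_cond_num} (read as the induced $\ell_1\to\ell_1$ operator norm, which equals $\max_i\|B_i\|_1$), for every vector $v$ one has $\|Bv\|_1\le\kappa(B)\|v\|_1$, because $Bv=\sum_i v_i B_i$ and the triangle inequality gives $\|Bv\|_1\le\sum_i|v_i|\|B_i\|_1\le\kappa(B)\|v\|_1$. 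Applying this with $v=p-p^*$ yields the claimed bound $\|Bp-Bp^*\|_1\le\kappa(B)\sqrt{2\epsilon}$.

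There is no serious obstacle here; the only minor care needed is confirming that the minimizer of $\ell(\cdot,p^*)$ on the simplex is indeed $p^*$ (so the gap truly is a KL divergence rather than just an upper bound on it), and aligning the direction of KL with the form of Pinsker's inequality. Both are routine. The lemma will then feed into Theorem~\ref{thm:reconst_robust} by taking $B=\gA^\dagger=A^\dagger\otimes\cdots\otimes A^\dagger$ so that $\kappa(B)=\kappa(A^\dagger)^t$, converting an $\epsilon$-suboptimal reconstruction loss into an $O(\kappa(A^\dagger)^t\sqrt{\epsilon})$ error on $\vect(W_{\text{post}})$, and hence (via $\beta$) on any degree-$t$ polynomial of the posterior.
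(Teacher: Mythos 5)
Your proposal is correct and follows essentially the same route as the paper's proof: identify the loss gap as a KL divergence, apply Pinsker's inequality, and then pass to $\|Bp-Bp^*\|_1$ via the $\ell_1$ condition number. If anything, you are slightly more careful than the paper, which has a typo in the direction of the KL divergence ($\kl{p}{p^*}$ where $\kl{p^*}{p}$ is meant) and leaves implicit both the verification that the simplex minimizer is $p^*$ and the triangle-inequality argument that $\kappa(B)$, as defined, bounds the $\ell_1\to\ell_1$ operator norm.
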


\begin{proof}
Recall that $\ell$ is cross-entropy loss, we know 
\[\ell(p,p^*)-\min_{p:\sum_i p_i=1,p\ge 0}\ell(p,p^*)=\kl{p}{p^*}\le\epsilon.\]
By Pinsker's Inequality, we have that 
\[\norm{p-p^*}_{1}\le \sqrt{2\kl{p^*}{p}}\le \sqrt{2\epsilon}.\]
Then, for any matrix $B$ we have
\begin{align*}
    \norm{B p-B p^*}_1
    \le\kappa(B)||p-p^*||_{1}
    \le\kappa(B)\sqrt{2\epsilon},
\end{align*}
where we use the definition of $\ell_1$ condition number (Definition~\ref{def:l1_cond_num}).
\end{proof}

We now are ready to give the proof of Theorem~\ref{thm:reconst_robust}, which is based on Lemma~\ref{lem:reconst_error} and the proof of Theorem~\ref{thm:reconst_main}.
\thmreconstrobust*

\begin{proof}
Recall from the proof of Theorem~\ref{thm:reconst_main} that $\E_w[P(w)|x] = \beta^\top \vect(W_{post})=\beta^\top \gA^\dagger f^*(x)$, where $\gA^\dagger = A^\dagger\otimes\cdots\otimes A^\dagger$ and $\kappa(\gA^\dagger)=\kappa^t(A^\dagger)$. Same as in the proof of Theorem~\ref{thm:reconst_main}, let $\theta=(\gA^\dagger)^\top \beta$.  We have
\begin{align*}
    \E_x\left[\left( \E_w[P(w)|x] -\theta^\top f(x) \right)^2\right]
    =& \E_x\left[\left( \beta^\top \gA^\dagger f^*(x) -\beta^\top \gA^\dagger  f(x) \right)^2\right]\\
    \le& \E_x\left[\norm{\beta}_2^2 \norm{\gA^\dagger f^*(x) - \gA^\dagger f(x)}_2^2 \right]\\
    \le& \norm{\beta}_2^2\E_x\left[ \norm{\gA^\dagger f^*(x) - \gA^\dagger f(x)}_1^2 \right].
\end{align*}

We are going to use Lemma~\ref{lem:reconst_error} to bound $\norm{\gA^\dagger f^*(x) - \gA^\dagger f(x)}_1$. By Lemma~\ref{lem:reconst_repr_f_general} we know $f^*(x)$ is the word posterior, so $f^*(x)=\E_{y|x}[y|x]$.
Since $\ell$ is cross-entropy loss, we know 
\[
    \E_{y|x}[\ell(f(x),y)-\ell(f^*(x),y)|x]
    =\E_{y|x}\left[\sum_{k\in[V^t]} y_k\log\left(\frac{[f^*(x)]_k}{[f(x)]_k}\right)\bigg\vert x\right]
    =\kl{f^*(x)}{f(x)},
\] 
which implies $\ell(f(x),f^*(x))-\min_{p} \ell(p,f^*(x))\le\epsilon$. Therefore, by Lemma~\ref{lem:reconst_error} with $B=\gA^\dagger$, we know 
\[\norm{\gA^\dagger f(x)-\gA^\dagger f^*(x)}_1\le \kappa(\gA^\dagger)\sqrt{2\epsilon}=\kappa^t(A^\dagger)\sqrt{2\epsilon} .\]

Given the desired bound, we have
\begin{align*}
     \E_x\left[\left( \E_w[P(w)|x] -\theta^\top f(x) \right)^2\right]
    \le& 2\norm{\beta}_2^2 \kappa^{2t}(A^\dagger)\epsilon.
\end{align*}
\end{proof}

\subsection{Discussion on the effective dimension of the representation}\label{subsec:pf_reconst_discussion}

Although the dimension of $f(x)$ is $V^t$, its ``effective dimension" is actually $K^t$, which is much smaller than $V^t$ (the number of topic $K$ usually is much smaller than the size of vocabulary $V$). To see this, let's consider the simple case of $t=1$. As we discussed in the paragraph below Theorem 3.1, we have $f(x)=A\mathbb{E}[w|x,A]$, where $A\in \mathbb{R}^{V \times K}$ is the topic-word matrix and $\mathbb{E}[w|x,A]$ is the posterior mean of $w\in\mathbb{R}^K$. In words, this means that $f(x)$ is completely determined by a linear transformation of a $K$-dimensional vector. Therefore, when we consider the feature of $n$ documents as a matrix $[f(x^{(1)}),...,f(x^{(n)})]\in\mathbb{R}^{V\times n}$, the rank of this matrix is at most $K$ and one can get these $K$-dimensional features by running standard dimension reduction method (e.g., SVD) on the original feature matrix.

\newpage
\section{Omitted Proofs in Section~\ref{sec:contrast}}\label{sec:pf_contrast}
In this section, we give the omitted proofs in Section~\ref{sec:contrast}. We give the proof of Theorem~\ref{thm:contrast_main} in Section~\ref{subsec:pf_contrast_main} and proof of Lemma~\ref{lem:contrast_repr} in Section~\ref{subsec:pf_contrast_repr}.

\subsection{Proof of Theorem~\ref{thm:contrast_main}}\label{subsec:pf_contrast_main}
Before presenting the proof of Theorem~\ref{thm:contrast_main}, we first give a characterization of the representation $g(x,\xp)$, which would be useful in the later analysis. Note that this the same as the one shown in \cite{tosh2020contrastive}. We provide its proof for completeness in Section~\ref{subsec:pf_contrast_repr}.
\begin{restatable}{lemma}{lemcontrastrepr}\label{lem:contrast_repr}
If $f$ minimizes the contrastive objective \eqref{eq:constrastive_obj}, then we have
\[
    g(x,\xp)\triangleq\frac{f(x,\xp)}{1-f(x,\xp)} = \frac{\Prob(y=1|x,\xp)}{\Prob(y=0|x,\xp)}.
\]
\end{restatable}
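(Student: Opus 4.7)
The plan is to show that the pointwise minimizer of the square-loss contrastive objective is precisely the conditional probability $\Prob(y=1\mid x, \xp)$, after which the claimed identity for $g(x,\xp)$ follows by a one-line algebraic manipulation.

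First I would use the law of total expectation to rewrite the objective as
\[
 L_\contrast(f) = \E_{x,\xp}\!\left[\, \E_{y\mid x,\xp}\!\left[(f(x,\xp)-y)^2 \,\middle|\, x,\xp\right]\,\right].
\]
Since $f$ may be chosen as any function, the outer expectation is minimized by minimizing the inner conditional expectation separately for each pair $(x,\xp)$. For fixed $(x,\xp)$, treating $f(x,\xp)$ as a scalar variable $a$, the inner problem is the standard one-dimensional mean-squared-error minimization
\[
 \min_{a\in\R}\ \E_{y\mid x,\xp}\!\left[(a-y)^2\right],
\]
whose unique minimizer is the conditional mean $a^\star = \E[y\mid x,\xp]$. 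This follows by expanding the square, taking the derivative in $a$, and setting it to zero.

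Next I would use the fact that $y\in\{0,1\}$, so that $\E[y\mid x,\xp] = \Prob(y=1\mid x,\xp)$. Therefore the minimizer satisfies $f(x,\xp) = \Prob(y=1\mid x,\xp)$, and consequently $1-f(x,\xp) = \Prob(y=0\mid x,\xp)$. Substituting into the definition of $g$ gives
\[
 g(x,\xp) = \frac{f(x,\xp)}{1-f(x,\xp)} = \frac{\Prob(y=1\mid x,\xp)}{\Prob(y=0\mid x,\xp)},
\]
which is the desired identity.

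There is essentially no hard step; the only point requiring mild care is that the ratio is well-defined, i.e., $1-f(x,\xp)>0$. This holds because under the sampling process described in Section~\ref{subsec:prelim_ssl}, with probability $1/2$ the label $y$ equals $0$ (the negative sample case), so in particular $\Prob(y=0\mid x,\xp)>0$ for any $(x,\xp)$ in the support of the joint distribution, ensuring $f(x,\xp)<1$ almost surely. This justifies forming the ratio and completes the proof.
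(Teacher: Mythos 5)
Your proof is correct and follows the same route as the paper's: condition on $(x,\xp)$, observe that the square-loss minimizer is the conditional mean $\E[y\mid x,\xp]=\Prob(y=1\mid x,\xp)$, and substitute into the definition of $g$. You simply spell out the pointwise minimization and the positivity of the denominator, both of which the paper leaves implicit.
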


Now we are ready to proof Theorem~\ref{thm:contrast_main}.
\thmcontrastmain*

\begin{proof}
    Since $f$ is the minimizer of contrastive objective \eqref{eq:constrastive_obj}, by Lemma~\ref{lem:contrast_repr} we know
    \[
        g(x,\xp) = \frac{\Prob(y=1|x,\xp)}{\Prob(y=0|x,\xp)}.
    \]
    
    Similar to the proof of Theorem~\ref{thm:reconst_main}, we will show that $g(x,\xp)$ is linear in the topic posterior when $\xp$ is fixed. By Bayes' rule, we have
    \begin{align*}
        g(x,\xp) = \frac{\Prob(x,\xp|y=1)\Prob(y=1)/\Prob(x,\xp)}{\Prob(x,\xp|y=0)\Prob(y=0)/\Prob(x,\xp)}
        = \frac{\Prob(x,\xp|y=1)}{\Prob(x,\xp|y=0)},
    \end{align*}
    where we use $\Prob(y=0)=\Prob(y=1)=1/2$.
    
    Denote the $t$ words in $\xp$ as $\xp_1,\ldots,\xp_t$ and their corresponding topics as $z_1,\ldots,z_t$.
    Then, by our way of generating $x,\xp$, we know
    \begin{align*}
        \Prob(x,\xp|y=0) &= \Prob(x)\Prob(\xp),\\
        \Prob(x,\xp|y=1)
        &= \int_{w} \Prob(\xp_1,\ldots,\xp_t|w,x,y=1)\Prob(w,x|y=1)\rd w\\
        &= \int_{w} \sum_{z_1,\ldots,z_t\in[K]}\Prob(\xp_1,\ldots,\xp_t|z_1,\ldots,z_t,w)\Prob(z_1,\ldots,z_t|w)\Prob(w,x)\rd w,
    \end{align*}
    which implies
    \[g(x,\xp)= \frac{1}{\Prob(\xp)}\int_{w} \sum_{z_1,\ldots,z_t\in[K]}\Prob(\xp_1,\ldots,\xp_t|z_1,\ldots,z_t)\Prob(z_1,\ldots,z_t|w)\Prob(w|x)\rd w\]
    
    Note that
    \begin{align*}
        \Prob(\xp_1,\ldots,\xp_t|z_1,\ldots,z_t)
        =\prod_{i=1}^t \Prob(\xp_i|z_i)
        =\prod_{i=1}^t A_{\xp_1,z_i},
    \end{align*}
    where $A$ is the topic-word matrix. Hence,
    \begin{align*}
        g(x,\xp) 
        &= \frac{1}{\Prob(\xp)}\sum_{z_1,\ldots,z_t\in[K]}\prod_{i=1}^t A_{\xp_1,z_i}\int_{w} \Prob(z_1,\ldots,z_t|w)\Prob(w|x)\rd w\\
        &= \frac{1}{\Prob(\xp)}\sum_{z_1,\ldots,z_t\in[K]}\prod_{i=1}^t A_{\xp_1,z_i}\int_{w} \prod_{i=1}^t w_{z_i}\Prob(w|x)\rd w\\
        &= \frac{1}{\Prob(\xp)}\sum_{z_1,\ldots,z_t\in[K]}\prod_{i=1}^t A_{\xp_1,z_i} \E_w\left[\prod_{i=1}^t w_{z_i}\bigg\vert x\right].
    \end{align*}
    Recall that the topic posterior tensor is $W_{post}=\E_w[w^{\otimes t}|x]\in \mathbb{R}^{K\times \ldots\times K}$, where each entry $[W_{post}]_{z_1,\ldots,z_k}=\E_w[w_{z_1}\ldots w_{z_t}|x]$. Therefore, 
    \begin{align*}
        g(x,\xp)
        &= \frac{1}{\Prob(\xp)}\sum_{z_1, z_2, ...z_t \in [K]} \prod_{i=1}^{t} A_{\xp_i,z_i} [{W_{post}}]_{z_1, z_2, ..., z_t}\\
        &= \frac{1}{\Prob(\xp)}\gA[\xp]^\top \vect(W_{post}),
    \end{align*}
where $\gA = A\otimes A \otimes\cdots\otimes A\in\mathbb{R}^{V^t\times K^t}$, $\gA[\xp]\in\mathbb{R}^{K^t}$ is the row of $\gA$ that correspond to $\xp_1,\ldots,\xp_t$, and $\vect(W_{post})\in\mathbb{R}^{K^t}$ is the vectorization of $W_{post}$. 

Recall we have $m$ randomly sampled different documents $\{l_i\}_{i=1}^n$. Denote $D\in\mathbb{R}^{m\times m}$ as a diagonal matrix such that $D_{i,i}=\Prob(l_i)$, and $\tilde{\gA}\in\mathbb{R}^{m\times K^t}$ as a submatrix of $\gA$ such that each row of $\tilde{\gA}$ is $\gA[l_i]$. Since $l_i$ are randomly sampled, so $D_{i,i}>0$. Thus, we know
\begin{align*}
    g(x,\{l_i\}_{i=1}^n) = D^{-1}\tilde{\gA}\vect(W_{post}),
\end{align*}
which implies $\vect(W_{post})=\tilde{\gA}^\dagger D g(x,\{l_i\}_{i=1}^n)$. Note that we need to show $\tilde{\gA}^\dagger$ is well-defined. Since $A$ has full column rank, we know $\gA$ also has full column rank. Thus, the submatrix $\tilde{\gA}$ also has full column rank since $m=K^t$. This implies $\tilde{\gA}^\dagger$ is well-defined.

Since $P(w)$ is a polynomial of degree at most $t$, we know there exists $\beta$ such that $\E_w[P(w)|x]=\beta^\top \vect(W_{post})$.  Thus, let $\theta = D(\tilde{\gA}^\dagger)^\top \beta$, we have 
\[\E_w[P(w)|x] = \beta^\top \vect(W_{post}) = \theta^\top g(x,\{l_i\}_{i=1}^n).\]

\end{proof}

\subsection{Proof of Lemma~\ref{lem:contrast_repr}}\label{subsec:pf_contrast_repr}

\lemcontrastrepr*
\begin{proof}
    Since $f$ is the minimizer of contrastive objective \eqref{eq:constrastive_obj} and 
    \[L_\contrast(f) = \E_{x,\xp,y}\left[(f(x,\xp)-y)^2\right] = \E_{x,\xp}\left[\E_{y|x,\xp}\left[(f(x,\xp)-y)^2\right]\right], \]
    it is easy to see that $f$ is the Bayes optimal predictor $\Prob(y=1|x,\xp)$. Therefore, we know
    \[g(x,\xp) = \frac{\Prob(y=1|x,\xp)}{\Prob(y=0|x,\xp)}.\]
\end{proof}

\subsection{Discussions on the Self-referencing}\label{subsec:kernel}
The following corollary shows that solving the downstream task is equivalent to solve a kernel regression (in some sense the landmark documents are just random features for this kernel \citep{rahimi2007random}). 
\begin{corollary}\label{coro:kernel}
Denote $\{(x_i,\tilde{y}_i)\}_{i=1}^m$ as the downstream dataset, where $\tilde{y}_i=\E_w[P(w)|x_i]$ is the target for document $x_i$. If we set landmarks $\{l_i\}_{i=1}^m$ to be $\{x_i\}_{i=1}^m$, then solving the downstream task is equivalent to a kernel regression, that is
\begin{align*}
    \arg\min_{\theta} \sum_{k=1}^m\left(\tilde{y}_k - \theta^\top g(x_k,\{l_i\}_{i=1}^m)\right)^2
    = \arg\min_{\theta} \norm{\tilde{y}-G\theta}^2,
\end{align*}
where $\tilde{y}=(\tilde{y}_1,\ldots,\tilde{y}_m)^\top$ and $G\in \mathbb{R}^{m\times m}$ is a kernel matrix such that $G_{ij}=g(x_i,x_j)$.
\end{corollary}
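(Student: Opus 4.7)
The plan is essentially a direct unwinding of the definition of the learned representation from equation~\eqref{eq:contrast_repr}, so there is no real technical content beyond rewriting the objective as a matrix expression. First I would substitute $l_i = x_i$ into the definition of $g(x,\{l_i\}_{i=1}^m)$, so that for each training document $x_k$ the representation becomes
\[
g(x_k,\{l_i\}_{i=1}^m) = (g(x_k,x_1), g(x_k,x_2), \ldots, g(x_k,x_m))^\top.
\]
By the definition $G_{ij} = g(x_i,x_j)$, this column vector is exactly the transpose of the $k$-th row of the kernel matrix $G$.

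Next I would compute the inner product with the parameter vector $\theta$:
\[
\theta^\top g(x_k,\{l_i\}_{i=1}^m) = \sum_{i=1}^m \theta_i\, g(x_k,x_i) = \sum_{i=1}^m G_{ki}\,\theta_i = (G\theta)_k,
\]
so that the per-example squared loss at $x_k$ is $(\tilde{y}_k - (G\theta)_k)^2$. Summing over $k$ gives
\[
\sum_{k=1}^m\bigl(\tilde{y}_k - \theta^\top g(x_k,\{l_i\}_{i=1}^m)\bigr)^2 = \sum_{k=1}^m\bigl(\tilde{y}_k - (G\theta)_k\bigr)^2 = \norm{\tilde{y} - G\theta}^2,
\]
and since both sides are identical objective functions in $\theta$, their $\arg\min$ sets coincide, establishing the claim.

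There is effectively no obstacle here; the statement is a definitional observation reframing the linear head on top of the landmark-based representation as a (possibly non-symmetric, since $g(x,x')$ need not equal $g(x',x)$) kernel regression with Gram-like matrix $G$. The only mild care I would take is to avoid implicitly assuming symmetry of $G$ when interpreting the result as a ``kernel'' regression in the classical sense; the corollary only requires that minimizing the two expressions yields the same $\theta^\star$, which the calculation above delivers verbatim.
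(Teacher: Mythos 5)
Your proposal is correct and is essentially the paper's own argument: substitute $l_i=x_i$, observe that $\theta^\top g(x_k,\{l_i\}_{i=1}^m)=(G\theta)_k$, and sum the squared residuals. Your remark that $G$ need not be symmetric (so the representation of $x_k$ is the $k$-th \emph{row} of $G$, which is exactly what $(G\theta)_k$ requires) is a worthwhile point of care that the paper glosses over by calling it the $i$-th column.
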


As discussed in Corollary~\ref{coro:kernel}, we show that one can use the same set of documents for both representation and downstream task. Recall that in Theorem~\ref{thm:contrast_main} we need a large number of landmark documents to generate the representation and then use it for the downstream task (e.g., fit a polynomial of topic posterior). However, one may not have enough additional documents to be used as landmarks. Therefore, the benefit of the self-referencing is that we do not need additional landmark documents to generate the representation.

The key observation in Corollary~\ref{coro:kernel} is that the learned representation $g(x,\xp)$ can be viewed as a kernel. Suppose the documents for the downstream task are $x_1,\ldots,x_m$. Construct a kernel matrix $G\in \mathbb{R}^{m\times m}$ such that $G_{ij}=g(x_i,x_j)$. Then, it is easy to see
\[
    \theta^\top g(x_i,\{x_j\}_{j=1}^m) = \theta^\top G_i,
\]
where $G_i$ is the $i$-th column of $G$. Thus,
$\theta$ can be found by the following kernel regression:
\[
    \min_{\theta} \norm{\tilde{y}-G\theta}^2,
\]
where $\tilde{y}=(\tilde{y}_1,\ldots,\tilde{y}_m)^\top$ and $\tilde{y}_i=\E_w[P(w)|x_i]$.

\newpage
\section{Synthetic Experiment Additional Details}
\label{sec:more_synthetic}
In this section, we include more details about our synthetic experiments in Section~\ref{sec:syn_experiment}. In Section~\ref{subsec:pam_vis}, we describe the document generation process for PAM model. Additional results on topic posterior recovery loss and major topic recovery are included in Section~\ref{subsec:map_more} and Section~\ref{subsec:tv_map}. In Section~\ref{subsec:epoch} and Section~\ref{subsec:hyperparam}, we report the results about different training epochs and hyperparameters tuning. $\ell_1$ condition number is reported in Section~\ref{subsec:cond_num} and we give some visualizations for topic correlations for the PAM model in Section~\ref{subsec:syn_vis}. Moreover, in Section~\ref{subsec:t=2} we report the performance of SSL on recovering the topic posterior mean vector when $t=2$. We run our experiments on 2080RTXTi/V100s. The confidence interval reported in the paper is calculated as $mean$ ± $ 1.96*s_{test docs}/\sqrt{N_{test docs}} $, where $mean$ is the mean of TV distance (or accuracy), $s_{test docs}$ is the standard deviation of TV distance (or accuracy) and $N_{testdocs}$ is the number of test documents. 

\paragraph{Details of generating topic-word matrix $A$}
For pure topic and LDA experiments, each column of $A$ (i.e., each topic's word distribution) is generated from $Dir(\alpha/K)$. For CTM and PAM, our goal is to construct $K/4$ groups of 4 topics such that within each group, topic 0 and topic 1 have similar word distribution, and topic 2 and topic 3 have different word distribution (and different from the word distribution of topic 0 and 1 as well). To achieve this, we first sample $K/4$ word distribution vectors from $Dir(\alpha/K)$, and for each word distribution vector, we permute its entries to get the word distribution of topic 0,1,2,3 within each group and intentionally align the large entries in topic 0 and 1 to ensure high similarity between them. 

\subsection{Pachinko Allocation Model Topic Prior Generation}\label{subsec:pam_vis}

The topic proportion of the Pachinko Allocation Model (PAM) described in Section~\ref{sec:syn_experiment} is generated from the following process: for each document, we first sample a ``super-topic'' proportion from a symmetric Dirichlet distribution $\dir(1/K_s)$ and a super-topic-to-topic proportion from a symmetric Dirichlet distribution $\dir(30)$. Then, we sample each word by first sample a super-topic according to the super-topic proportion and then sample the actual topic from the super-topic-to-topic proportion. 
In our experiment, we set $K_s=10$. Figure~\ref{fig:pam_topic_dist} offers a few examples of PAM topic proportion generated from the PAM model. 

\begin{figure}[ht]
    \centering
    \includegraphics[width=10cm]{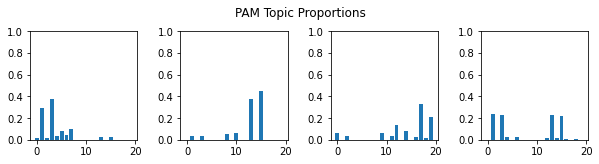}
    \caption{PAM topic proportion}
    \label{fig:pam_topic_dist}
\end{figure}

\subsection{Major Topic Recovery Additional Results}\label{subsec:map_more}

In Section~\ref{subsec:syn_experiment_result}, we measure the major topics recovery of CTM and PAM documents as the top-2 topic overlap rate. Since topics are correlated in pairs in both of these topic models, we are also interested in the top-4 and top-6 topics overlap rate. As shown in Figure~\ref{fig:topk_ctm}, for both CTM and PAM documents, the average top-4 overlap rate is above 66\% and the average top-6 overlap rate is above 62\% on all $\alpha$ values we test on. These results show that our model can accurately capture more than just the major pair of correlated topics, but also some correlated pairs of less dominant topics. 

\begin{figure}[ht]
    \centering
    \includegraphics[width=5cm]{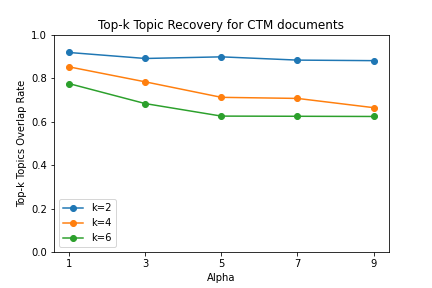}
    \includegraphics[width=5cm]{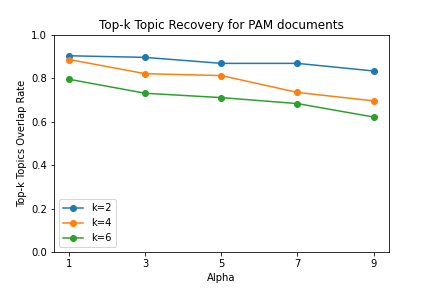}
    \caption{Recovery rate for top-2, top-4, and top-6 topics, measured on CTM (left) and PAM (right) test documents.}
    \label{fig:topk_ctm}
\end{figure}

\subsection{Self-supervised Learning versus Assuming Specific Topic Prior}\label{subsec:tv_map}

We present in Section~\ref{subsec:syn_experiment_result} a comparison between the performance of our model and Markov Chain Monte Carlo assuming a specific topic model for $\alpha=1$, on the task of TV distance and major topic recovery. In this subsection, we supplement the experiment results for larger values of $\alpha$ along with $\alpha=1$. In particular, we consider self-supervised learning with 6 million training documents and with 120 thousand training documents after 200 training epochs, and we expect that the former can allow our model to approximate the optimal predictor $f$ described in Theorem~\ref{thm:reconst_main} and the latter may offer some insights on our approach's performance in more practical settings, such as on a real world dataset with limited size. Additionally, when training with 120 thousand documents, we randomly shuffle each document and reassign the $t$ target words in every epoch, to make sure our model learns enough information from the limited amount of training data. We report our results in the form of 95\% confidence interval.

\begin{table}[t]
\centering
\scalebox{0.7}{
\begin{tabular}{lcccc}
\hline
                         & \multicolumn{4}{c}{Document Type ($\alpha=1$)}                                                               \\
Method                   & Pure                     & LDA                      & CTM                      & PAM                      \\ \hline
LDA                      & 0.0406 ± 0.0016          & -                        & 0.1182 ± 0.0099          & 0.1218 ± 0.0096          \\ \hline
CTM                      & 0.2083 ± 0.0038          & 0.2060 ± 0.0064          & -                        & 0.3154 ± 0.0082          \\ \hline
PAM                      & 0.3782 ± 0.0038          & 0.3459 ± 0.0128          & 0.3939 ± 0.0096          & -                        \\ \hline
\textbf{SSL-6M (ours)}   & \textbf{0.0148 ± 0.0020} & \textbf{0.0757 ± 0.0033} & \textbf{0.0550 ± 0.0037} & \textbf{0.0489 ± 0.0025} \\ \hline
\textbf{SSL-120K (ours)} & 0.0311 ± 0.0025          & 0.0878 ± 0.0041          & 0.0678 ± 0.0038          & 0.0600 ± 0.0028          \\ \hline
\end{tabular}
}
\vspace*{0.3 cm}
\scalebox{0.7}{
\begin{tabular}{lcccc}
\hline
                         & \multicolumn{4}{c}{Document Type ($\alpha=3$)}                                                               \\
Method                   & Pure                     & LDA                      & CTM                      & PAM                      \\ \hline
LDA                      & 0.0561 ± 0.0030          & -                        & 0.1942 ± 0.0156          & 0.1813 ± 0.0146          \\ \hline
CTM                      & 0.2347 ± 0.0048          & 0.2299 ± 0.0082          & -                        & 0.3362 ± 0.0086          \\ \hline
PAM                      & 0.4031 ± 0.0046          & 0.3665 ± 0.0117          & 0.4510 ± 0.0100          & -                        \\ \hline
\textbf{SSL-6M (ours)}   & \textbf{0.0308 ± 0.0070} & \textbf{0.0899 ± 0.0053} & \textbf{0.0799 ± 0.0048} & \textbf{0.0712 ± 0.0038} \\ \hline
\textbf{SSL-120K (ours)} & 0.0546 ± 0.0097          & 0.1292 ± 0.0067          & 0.1062 ± 0.0058          & 0.0988 ± 0.0048          \\ \hline
\end{tabular}
}
\vspace{0.3 cm}
\scalebox{0.7}{
\begin{tabular}{lcccc}
\hline
                         & \multicolumn{4}{c}{Document Type ($\alpha=5$)}                                                               \\
Method                   & Pure                     & LDA                      & CTM                      & PAM                      \\ \hline
LDA                      & 0.0731 ± 0.0040          & -                        & 0.2123 ± 0.0159          & 0.1995 ± 0.0144          \\ \hline
CTM                      & 0.2655 ± 0.0059          & 0.2413 ± 0.0090          & -                        & 0.3428 ± 0.0090          \\ \hline
PAM                      & 0.4337 ± 0.0055          & 0.3608 ± 0.0089          & 0.4794 ± 0.0097          & -                        \\ \hline
\textbf{SSL-6M (ours)}   & \textbf{0.0501 ± 0.0030} & \textbf{0.1041 ± 0.0062} & \textbf{0.0970 ± 0.0057} & \textbf{0.0787 ± 0.0043} \\ \hline
\textbf{SSL-120K (ours)} & 0.0832 ± 0.0174          & 0.1629 ± 0.0082          & 0.1245 ± 0.0071          & 0.1077 ± 0.0054          \\ \hline
\end{tabular}
}
\vspace{0.3 cm}
\scalebox{0.7}{
\begin{tabular}{lcccc}
\hline
                         & \multicolumn{4}{c}{Document Type ($\alpha=7$)}                                                               \\
Method                   & Pure                     & LDA                      & CTM                      & PAM                      \\ \hline
LDA                      & 0.0883 ± 0.0050          & -                        & 0.2438 ± 0.0169          & 0.2138 ± 0.0158          \\ \hline
CTM                      & 0.2840 ± 0.0060          & 0.2556 ± 0.0089          & -                        & 0.3530 ± 0.0099          \\ \hline
PAM                      & 0.4561 ± 0.0052          & 0.3707 ± 0.0089          & 0.4979 ± 0.0101          & -                        \\ \hline
\textbf{SSL-6M (ours)}   & \textbf{0.0391 ± 0.0022} & \textbf{0.1233 ± 0.0071} & \textbf{0.1071 ± 0.0068} & \textbf{0.0960 ± 0.0057} \\ \hline
\textbf{SSL-120K (ours)} & 0.1219 ± 0.0227          & 0.1851 ± 0.0079          & 0.1340 ± 0.0078          & 0.1358 ± 0.0070          \\ \hline
\end{tabular}
}
\vspace{0.3 cm}
\scalebox{0.7}{
\begin{tabular}{lcccc}
\hline
                         & \multicolumn{4}{c}{Document Type ($\alpha=9$)}                                                               \\
Method                   & Pure                     & LDA                      & CTM                      & PAM                      \\ \hline
LDA                      & 0.1051 ± 0.0065          & -                        & 0.2559 ± 0.0162          & 0.2281 ± 0.0152          \\ \hline
CTM                      & 0.3129 ± 0.0079          & 0.2580 ± 0.0092          & -                        & 0.3556 ± 0.0084          \\ \hline
PAM                      & 0.4805 ± 0.0071          & 0.3776 ± 0.0093          & 0.5156 ± 0.0088          & -                        \\ \hline
\textbf{SSL-6M (ours)}   & \textbf{0.0517 ± 0.0045} & \textbf{0.1358 ± 0.0089} & \textbf{0.1101 ± 0.0064} & \textbf{0.0971 ± 0.0053} \\ \hline
\textbf{SSL-120K (ours)} & 0.1121 ± 0.0202          & 0.2221 ± 0.0123          & 0.1449 ± 0.0077          & 0.1460 ± 0.0070          \\ \hline
\end{tabular}
}
\caption{TV between our recovered topic posterior and the true topic posterior of our self-supervised learning approach versus topic inference via Markov Chain Monte Carlo assuming a specific prior for $\alpha=1,3,5,7,9$. The 95\% confidence interval is reported.}
\label{table: ssl_vs_other_appendix_1}
\end{table}

Table~\ref{table: ssl_vs_other_appendix_1} shows that our approach with 6 million training documents consistently outperforms misspecified topic priors by yielding a much lower TV distance from the true topic posterior. Note that the entries with correct topic model are omitted, because in our experiment the Markov Chain Monte Carlo recovered topic posterior assuming the correct topic model is exactly what we use as our ground truth topic posterior. Meanwhile, our approach with 120 thousand training documents outperforms misspecified prior in almost every scenario, except when compared to LDA prior on pure topic documents. This is is likely because the LDA topic prior is relatively concentrated on one specific topic and thus similar to the pure topic prior.

We also present a comparison between our self-supervised learning approach and posterior inference assuming a specific topic prior on the task of recovering major topics in the topic proportion for $\alpha=1,3,5,7,9$, as a supplement to Section~\ref{subsec:syn_experiment_result}. As shown in Table~\ref{tab:ssl_vs_other_appendix_2}, in almost every scenario we test on, our model can perform competitively against the MCMC-inferred posterior assuming the correct topic prior and outperform misspecified topic prior. 

\begin{table}[t]
\centering
\scalebox{0.8}{
\begin{tabular}{lcccc}
\hline
                         & \multicolumn{4}{c}{Document Type ($\alpha=1$)}                                    \\
Method                   & Pure                      & LDA                      & CTM                      & PAM                      \\ \hline
LDA                      & \textbf{1.0 ± 0.0} & \textbf{0.9150 ± 0.0387} & 0.8850 ± 0.0344          & 0.8500 ± 0.0360          \\ \hline
CTM                      & \textbf{1.0 ± 0.0} & 0.9000 ± 0.0416          & \textbf{0.9175 ± 0.0275} & 0.8300 ± 0.0370          \\ \hline
PAM                      & \textbf{1.0 ± 0.0} & 0.8750 ± 0.0458          & 0.7250 ± 0.0397          & 0.9050 ± 0.0320          \\ \hline
\textbf{SSL-6M (ours)}   & \textbf{1.0 ± 0.0} & 0.9050 ± 0.0406          & \textbf{0.9175 ± 0.0275} & 0.9025 ± 0.0330 \\ \hline
\textbf{SSL-120K (ours)} & \textbf{1.0 ± 0.0} & \textbf{0.9150 ± 0.0387} & 0.9100 ± 0.0292          & \textbf{0.9200 ± 0.0289} \\ \hline
\end{tabular}
}

\vspace{0.3 cm}
\scalebox{0.8}{    
\begin{tabular}{lcccc}
\hline
                         & \multicolumn{3}{c}{Document Type ($\alpha=3$)}                                    \\
Method                   &Pure                      & LDA                      & CTM                      & PAM                      \\ \hline
LDA                      & \textbf{1.0 ± 0.0} & \textbf{0.8900 ± 0.0434} & 0.8200 ± 0.0354          & 0.8200 ± 0.0404          \\ \hline
CTM                      & \textbf{1.0 ± 0.0} & 0.8800 ± 0.0450          & \textbf{0.8900 ± 0.0326} & 0.7650 ± 0.0404          \\ \hline
PAM                      & \textbf{1.0 ± 0.0} & 0.8550 ± 0.0488          & 0.6375 ± 0.0392          & \textbf{0.8950 ± 0.0357} \\ \hline
\textbf{SSL-6M (ours)}   & \textbf{1.0 ± 0.0} & 0.8750 ± 0.0458          & \textbf{0.8900 ± 0.0311} & \textbf{0.8950 ± 0.0344} \\ \hline
\textbf{SSL-120K (ours)} & 0.9950 ± 0.0098 & 0.8700 ± 0.0466          & \textbf{0.8900 ± 0.0312} & \textbf{0.8950 ± 0.0357} \\ \hline
\end{tabular}
}
\vspace{0.3 cm}
 
\scalebox{0.8}{   
\begin{tabular}{lcccc}
\hline
                         & \multicolumn{3}{c}{Document Type ($\alpha=5$)}                                    \\
Method                   &Pure                     & LDA                      & CTM                      & PAM                      \\ \hline
LDA                      & \textbf{1.0 ± 0.0} & \textbf{0.9050 ± 0.0406} & 0.8500 ± 0.0339          & 0.7750 ± 0.0390          \\ \hline
CTM                      & \textbf{1.0 ± 0.0} & \textbf{0.9050 ± 0.0406} & \textbf{0.8975 ± 0.0304} & 0.6775 ± 0.0421          \\ \hline
PAM                      & \textbf{1.0 ± 0.0} & 0.8850 ± 0.0442          & 0.6150 ± 0.0371          & \textbf{0.8825 ± 0.0379} \\ \hline
\textbf{SSL-6M (ours)}   & \textbf{1.0 ± 0.0} & 0.9000 ± 0.0416          & \textbf{0.8975 ± 0.0296} & 0.8675 ± 0.0407          \\ \hline
\textbf{SSL-120K (ours)} & 0.9900 ± 0.0138 & 0.8800 ± 0.0450          & 0.8900 ± 0.0303          & 0.8750 ± 0.0390          \\ \hline
\end{tabular}
}
\vspace{0.3 cm}

\scalebox{0.8}{   
\begin{tabular}{lcccc}
\hline
                         & \multicolumn{3}{c}{Document Type ($\alpha=7$)}                                    \\
Method                   &Pure                      & LDA                      & CTM                      & PAM                      \\ \hline
LDA                      & \textbf{1.0 ± 0.0} & 0.8800 ± 0.0450          & 0.7750 ± 0.0397          & 0.7475 ± 0.0433          \\ \hline
CTM                      & \textbf{1.0 ± 0.0} & 0.8650 ± 0.0474          & \textbf{0.8825 ± 0.0353} & 0.6250 ± 0.0390          \\ \hline
PAM                      & \textbf{1.0 ± 0.0} & 0.8450 ± 0.0502          & 0.5725 ± 0.0356          & 0.8700 ± 0.0411 \\ \hline
\textbf{SSL-6M (ours)}   & \textbf{1.0 ± 0.0} & 0.9150 ± 0.0387          & 0.8675 ± 0.0383          & 0.8675 ± 0.0407 \\ \hline
\textbf{SSL-120K (ours)} & 0.9650 ± 0.0255 & \textbf{0.9350 ± 0.0342} & 0.8775 ± 0.0370          & \textbf{0.8725 ± 0.0398} \\ \hline
\end{tabular}
}

\vspace{0.3 cm}
\scalebox{0.8}{
\begin{tabular}{lcccc}
\hline
                         & \multicolumn{3}{c}{Document Type ($\alpha=9$)}                                    \\
Method                   &Pure                      & LDA                      & CTM                      & PAM                      \\ \hline
LDA                      & \textbf{1.0 ± 0.0} & 0.8850 ± 0.0442          & 0.7750 ± 0.0384          & 0.7350 ± 0.0432          \\ \hline
CTM                      & \textbf{1.0 ± 0.0} & 0.8950 ± 0.0425          & 0.8800 ± 0.0348          & 0.5925 ± 0.0392          \\ \hline
PAM                      & \textbf{1.0 ± 0.0} & 0.8650 ± 0.0474          & 0.5675 ± 0.0358          & \textbf{0.8600 ± 0.0428} \\ \hline
\textbf{SSL-6M (ours)}   & 0.9950 ± 0.0098 & \textbf{0.9100 ± 0.0397} & \textbf{0.8850 ± 0.0330} & 0.8325 ± 0.0461          \\ \hline
\textbf{SSL-120K (ours)} & 0.9800 ± 0.0194 & 0.8600 ± 0.0481          & 0.8800 ± 0.0341          & 0.8425 ± 0.0430          \\ \hline
\end{tabular}
}
\caption{Major topic recovery rate of our approach versus posterior inference via Markov Chain Monte Carlo assuming a specific prior for $\alpha=1,3,5,7,9$. We report the 95\% confidence interval.}
\label{tab:ssl_vs_other_appendix_2}
\end{table}

To further investigate the performance of self-supervised learning, we compare our approach with Variational Inference assuming a specific topic prior on the task of recovering major topics in test document's topic proportion. Table~\ref{tab:ssl_vs_other_appendix_3} shows that, similar to our findings from Table ~\ref{tab:ssl_vs_other_appendix_2}, our model can perform competitively against the Variational Inference posterior assuming the correct topic prior, particularly for relative small $\alpha$ values, and can outperform misspecified topic prior in almost every test scenario.

\begin{table}[ht]
\centering
\scalebox{0.8}{
\begin{tabular}{lcccc}
\hline
                         & \multicolumn{3}{c}{Document Type ($\alpha=1$)}                                    \\
Method                   & Pure                & LDA                      & CTM                      & PAM                      \\ \hline
LDA                      & \textbf{1.0 ± 0.0} & 0.8950 ± 0.0424          & 0.8325 ± 0.0382          & 0.8625 ± 0.0339          \\ \hline
CTM                      & \textbf{1.0 ± 0.0} & 0.9050 ± 0.0410          & 0.9125 ± 0.0283          & 0.8150 ± 0.0354          \\ \hline
PAM                      & \textbf{1.0 ± 0.0} & 0.8600 ± 0.0481          & 0.7125 ± 0.0382          & 0.9050 ± 0.0325          \\ \hline
\textbf{SSL-6M (ours)}   & \textbf{1.0 ± 0.0} & 0.9050 ± 0.0406          & \textbf{0.9175 ± 0.0275} & 0.9025 ± 0.0330 \\ \hline
\textbf{SSL-120K (ours)} & \textbf{1.0 ± 0.0} & \textbf{0.9150 ± 0.0387} & 0.9100 ± 0.0292          & \textbf{0.9200 ± 0.0289} \\ \hline
\end{tabular}
}
\vspace{0.3 cm}
\scalebox{0.8}{
\begin{tabular}{lcccc}
\hline
                         & \multicolumn{3}{c}{Document Type ($\alpha=3$)}                                    \\
Method                   & Pure                & LDA                      & CTM                      & PAM                      \\ \hline
LDA                      & \textbf{1.0 ± 0.0} & 0.8750 ± 0.0452 & 0.7325 ± 0.0396          & 0.7875 ± 0.0382          \\ \hline
CTM                      & \textbf{1.0 ± 0.0} & \textbf{0.8800 ± 0.0450}          & 0.8800 ± 0.0325          & 0.6975 ± 0.0410          \\ \hline
PAM                      & \textbf{1.0 ± 0.0} & 0.8550 ± 0.0488          & 0.6075 ± 0.0354          & 0.8825 ± 0.0367          \\ \hline
\textbf{SSL-6M (ours)}   & \textbf{1.0 ± 0.0} & {0.8750 ± 0.0458} & \textbf{0.8900 ± 0.0311} & \textbf{0.8950 ± 0.0344} \\ \hline
\textbf{SSL-120K (ours)} & 0.9950 ± 0.0098  & {0.8700 ± 0.0466} & \textbf{0.8900 ± 0.0312} & \textbf{0.8950 ± 0.0357} \\ \hline
\end{tabular}
}
\vspace{0.3 cm}
\scalebox{0.8}{
\begin{tabular}{lcccc}
\hline
                         & \multicolumn{3}{c}{Document Type ($\alpha=5$)}                                    \\
Method                   & Pure                & LDA                      & CTM                      & PAM                      \\ \hline
LDA                      & \textbf{1.0 ± 0.0} & 0.8800 ± 0.0452          & 0.7575 ± 0.0354          & 0.7375 ± 0.0410          \\ \hline
CTM                      & \textbf{1.0 ± 0.0} & 0.8750 ± 0.0452          & \textbf{0.8975 ± 0.0283} & 0.5925 ± 0.0396          \\ \hline
PAM                      & \textbf{1.0 ± 0.0} & 0.8500 ± 0.0495          & 0.5675 ± 0.0297          & \textbf{0.8875 ± 0.0368} \\ \hline
\textbf{SSL-6M (ours)}   & \textbf{1.0 ± 0.0} & \textbf{0.9000 ± 0.0416} & \textbf{0.8975 ± 0.0296} & 0.8675 ± 0.0407          \\ \hline
\textbf{SSL-120K (ours)} & 0.9900 ± 0.0138 & 0.8800 ± 0.0450          & 0.8900 ± 0.0303          & 0.8750 ± 0.0390          \\ \hline
\end{tabular}
}
\vspace{0.3 cm}
\scalebox{0.8}{
\begin{tabular}{lcccc}
\hline
                         & \multicolumn{3}{c}{Document Type ($\alpha=7$)}                                    \\
Method                   & Pure                & LDA                      & CTM                      & PAM                      \\ \hline
LDA                      & \textbf{1.0 ± 0.0} & 0.9250 ± 0.0368          & 0.7175 ± 0.0396          & 0.6875 ± 0.0410          \\ \hline
CTM                      & \textbf{1.0 ± 0.0} & 0.9300 ± 0.0354          & 0.8700 ± 0.0382          & 0.5700 ± 0.0354          \\ \hline
PAM                      & \textbf{1.0 ± 0.0} & 0.9050 ± 0.0410          & 0.5400 ± 0.0325          & \textbf{0.8750 ± 0.0396} \\ \hline
\textbf{SSL-6M (ours)}   & \textbf{1.0 ± 0.0} & 0.9150 ± 0.0387          & 0.8675 ± 0.0383          & 0.8675 ± 0.0407          \\ \hline
\textbf{SSL-120K (ours)} & 0.9650 ± 0.0255 & \textbf{0.9350 ± 0.0342} & \textbf{0.8775 ± 0.0370} & 0.8725 ± 0.0398          \\ \hline
\end{tabular}
}
\vspace{0.3 cm}
\scalebox{0.8}{
\begin{tabular}{lcccc}
\hline
                         & \multicolumn{3}{c}{Document Type ($\alpha=9$)}                                    \\
Method                   & Pure                & LDA                      & CTM                      & PAM                      \\ \hline
LDA                      & \textbf{1.0 ± 0.0} & 0.9050 ± 0.0410          & 0.7250 ± 0.0396          & 0.6975 ± 0.0410          \\ \hline
CTM                      & \textbf{1.0 ± 0.0} & \textbf{0.9200 ± 0.0382} & 0.8725 ± 0.0339          & 0.5425 ± 0.0368          \\ \hline
PAM                      & \textbf{1.0 ± 0.0} & 0.8750 ± 0.0452          & 0.5300 ± 0.0325          & \textbf{0.8425 ± 0.0452} \\ \hline
\textbf{SSL-6M (ours)}   & 0.9950 ± 0.0098 & 0.9100 ± 0.0397          & \textbf{0.8850 ± 0.0330} & 0.8325 ± 0.0461          \\ \hline
\textbf{SSL-120K (ours)} & 0.9800 ± 0.0194 & 0.8600 ± 0.0481          & 0.8800 ± 0.0341          & \textbf{0.8425 ± 0.0430} \\ \hline
\end{tabular}
}
\caption{Major topic recovery rate of our approach versus posterior inference via Variational Inference assuming a specific prior for $\alpha=1,3,5,7,9$. We report the 95\% confidence interval.}
\label{tab:ssl_vs_other_appendix_3}
\end{table}

\subsection{The Effect of Training Epochs}\label{subsec:epoch}

We measure how the number of training epochs may influence topic posterior recovery loss by varying the number of training epochs. The results we present in Section \ref{sec:syn_experiment} are based on models trained for 200 epochs for all types of documents. Here, we present the topic recovery loss, measured as the Total Variation distance between our recovered topic posterior and the true topic posterior, for epochs=25, 50, 100, 200 using the same model architecture and same sampling scheme as in Section~\ref{sec:syn_experiment}. 

Table~\ref{table:TV_less_epochs} shows that topic posterior recovery loss steadily decreases as the number of epochs gets larger. Interestingly, even with just training 50 epochs, our model can recover the topic posterior within a Total Variation distance of less than 0.3 off the true topic posterior for documents generated from all four topic models. 

\begin{table}[!h]
\centering
\scalebox{0.7}{
\begin{tabular}{lllllll}
\hline
              &                  & \multicolumn{5}{c}{Dirichlet hyperparameter $\alpha$}                                                                                             \\
Document type & Number of epochs & \multicolumn{1}{c}{1} & \multicolumn{1}{c}{3} & \multicolumn{1}{c}{5} & \multicolumn{1}{c}{7} & \multicolumn{1}{c}{9} \\ \hline
Pure-topic    & 25               & 0.0505                & 0.1025                & 0.1756                & 0.6222                & 0.8285                \\
              & 50               & 0.0310                & 0.0744                & 0.0991                & 0.1522                & 0.2873                \\
              & 100              & 0.0160                & 0.0363                & 0.0511                & 0.0911                & 0.1257                \\
              & 200              & 0.0148                & 0.0308                & 0.0501                & 0.0391                & 0.0517                \\ \hline
LDA           & 25               & 0.1236                & 0.1583                & 0.1812                & 0.2022                & 0.2218                \\
              & 50               & 0.0967                & 0.1428                & 0.1387                & 0.1953                & 0.2117                \\
              & 100              & 0.0805                & 0.1018                & 0.1101                & 0.1361                & 0.1538                \\
              & 200              & 0.0709                & 0.0951                & 0.1045                & 0.1222                & 0.1377                \\ \hline
CTM           & 25               & 0.1089                & 0.1599                & 0.1563                & 0.1747                & 0.1971                \\
              & 50               & 0.0900                & 0.1132                & 0.1425                & 0.1455                & 0.1501                \\
              & 100              & 0.0623                & 0.0948                & 0.1102                & 0.1187                & 0.1243                \\
              & 200              & 0.0550                & 0.0799                & 0.0970                & 0.1071                & 0.1101                \\ \hline
PAM           & 25               & 0.1072                & 0.1342                & 0.1480                & 0.1688                & 0.1754                \\
              & 50               & 0.0820                & 0.1036                & 0.1246                & 0.1185                & 0.1395                \\
              & 100              & 0.0543                & 0.0755                & 0.0859                & 0.1087                & 0.1095                \\
              & 200              & 0.0436                & 0.0659                & 0.0787                & 0.0953                & 0.0971                \\ \hline
\end{tabular}
}
\caption{Topic posterior recovery loss of our self-supervised learning approach, measured in Total Variation distance, for all four types of documents and $\alpha=1,3,5,7,9$. The number of training epochs ranges from 25 to 200, and we sample 60K new training documents in every 2 epochs, which corresponds to 720K to 6M training documents.}
\label{table:TV_less_epochs}
\end{table}

\vspace{-0.05in}
\subsection{Hyperparameter Tuning Experiments}\label{subsec:hyperparam}

As described in Section~\ref{sec:syn_experiment}, the two types of neural network architecture we use are fully-connected neural networks and attention-based neural networks (see Figure~\ref{fig:attention_architec}). In this section, we present some of our TV results of different combinations of hyperparameters for both types of models. Note that our attention-based neural network slightly differs from typical transformers \citep{vaswani2017attention} in that we use batch normalization instead of layer normalization in every residual connection. We only use one attention head in each multi-head attention layer. During training, we use AMSGrad optimizer and an initial learning rate of $0.0001$ or $0.0002$. We reduce the learning rate by 50\% whenever validation loss does not decrease in 10 epochs. 

\begin{figure}[t]
    \centering
    \includegraphics[width=0.17\textwidth]{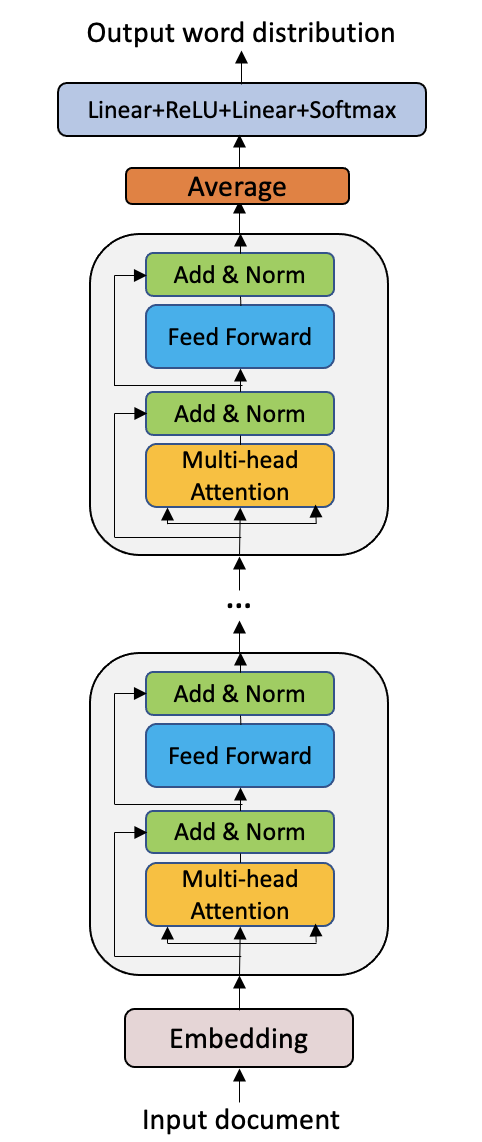}
    \caption{Our attention-based neural network architecture}
    \label{fig:attention_architec}
\end{figure}

The main hyperparameters in our fully-connected neural networks include the hidden dimension, the number of layers, and whether we apply residual connections. Table~\ref{tab:param_fc_lda} shows the TV distance results when we vary these hyperparameters for the LDA scenario when $\alpha=1$. However, fully-connected neural networks perform poorly in CTM and PAM case. For instance, when $\alpha=1$ and hidden dimension = 4096, we find that hyperparameters that work well in the LDA case no longer yield satisfactory results (Table~\ref{tab:param_fc_ctm}).

\begin{table}[ht]
\centering
\scalebox{0.8}{
\begin{tabular}{llll}
\hline
TV                   & \multicolumn{3}{c}{Hidden dimension}                                           \\
\# layers (residual) & \multicolumn{1}{c}{1024} & \multicolumn{1}{c}{2048} & \multicolumn{1}{c}{4096} \\ \hline
3                    & 0.1509                   & 0.1112                   & 0.1095                   \\ \hline
6                    & 0.7902                   & 0.7893                   & 0.7901                   \\ \hline
3 (residual)         & 0.0871                   & 0.0867                   & 0.0862                   \\ \hline
6 (residual)         & 0.0822                   & 0.0835                   & \textbf{0.0709}          \\ \hline
\end{tabular}
}
\caption{Fully-connected neural network's performance in the LDA $\alpha=1$ scenario.}
\label{tab:param_fc_lda}
\end{table}
\vspace{-0.1in}
\begin{table}[h]
\centering
\scalebox{0.8}{
\begin{tabular}{lll}
\hline
TV & \multicolumn{2}{c}{Layer Type}                             \\
\# layers               & \multicolumn{1}{c}{regular} & \multicolumn{1}{c}{residual} \\ \hline
3                       & 0.1824                      & 0.1665                       \\ \hline
4                       & 0.1724                      & 0.1656                       \\ \hline
6                       & 0.1700                      & 0.1556                       \\ \hline
\end{tabular}
}
\caption{Fully-connected neural network's performance in the CTM $\alpha=1$ scenario with 4096 hidden dimensions.}
\label{tab:param_fc_ctm}
\end{table}

For attention-based neural networks applied to CTM and PAM scenarios, we mainly consider the number of layers and the hidden dimension per attention layer. Table~\ref{tab:param_attn_ctm} presents the TV distance between recovered topic posterior and true topic posterior in the CTM setting, with varying number of layers and attention dimension for $\alpha=1,3,5$ and using a more coarse estimate to the true topic posterior than what we use as our final true CTM topic posterior. We find that the model performs the best when attention dimension is 768 or 1024, and when the attention dimension increases to 2048 the model gives a higher TV (see Table~\ref{tab:param_attn_ctm_dim}).

\begin{table}[!h]
\centering
\scalebox{0.8}{
\begin{tabular}{lllll}
\hline
TV &                  & \multicolumn{3}{c}{\# layers}                                         \\
$\alpha$ & Attention dimension & \multicolumn{1}{c}{4} & \multicolumn{1}{c}{6} & \multicolumn{1}{c}{8} \\ \hline
1     & 768              & 0.0902                & 0.0844                & \textbf{0.0814}                 \\
      & 1024             & 0.0851                & 0.0890                & 0.0836                \\ \hline
3     & 768              & 0.1471                & 0.1429                & 0.1398                \\
      & 1024             & 0.1440                & 0.1384                & \textbf{0.1383}                \\ \hline
5     & 768              & 0.1794                & 0.1767                & \textbf{0.1708}                \\
      & 1024             & 0.1878                & 0.1787                & 0.1782                \\ \hline
\end{tabular}
}
\caption{Attention-based neural network's performance on CTM documents for $\alpha=1,3,5$.}
\label{tab:param_attn_ctm}
\end{table}

\begin{table}[!h]
\centering
\scalebox{0.8}{
\begin{tabular}{llll}
\hline
TV    & \multicolumn{3}{c}{Attention dimension}                                       \\
$\alpha$ & \multicolumn{1}{c}{768} & \multicolumn{1}{c}{1024} & \multicolumn{1}{c}{2048} \\ \hline
3     & 0.1471                  & \textbf{0.1440}                   & 0.1670                   \\
5     & \textbf{0.1794}                  & 0.1878                   & 0.1922                   \\ \hline
\end{tabular}
}
\caption{4-layer attention-base neural network's performance on CTM documents when attention layer's dimension varies, for $\alpha=3,5$.}
\label{tab:param_attn_ctm_dim}
\end{table}

\vspace{-0.1in}
\subsection{Topic Posterior Recovery Loss and Condition Number}\label{subsec:cond_num}

We have proved in Theorem~\ref{thm:reconst_robust} that the upper bound of topic posterior recovery loss depends on the $\ell_1$ Condition Number $\kappa(A^\dagger)$. In this section, we show that $\kappa(A^\dagger)$ is small for every topic-word matrix $A$ in our experiments (Figure~\ref{fig:cond_num}). Note that we use the same topic-word matrix $A$ for pure-topic model and the LDA model. For CTM and PAM, we use the same topic-word matrix up to reordering of the topics that corresponds with pairwise topic correlations. Therefore, pure-topic model and LDA model share the same $\kappa(A^\dagger)$, and CTM and PAM share the same $\kappa(A^\dagger)$.

\begin{figure}[!h]
    \centering
    \includegraphics[width=0.25\textwidth]{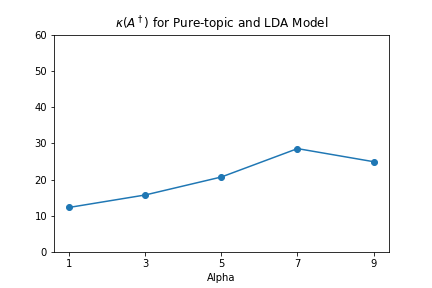}
    \includegraphics[width=0.25\textwidth]{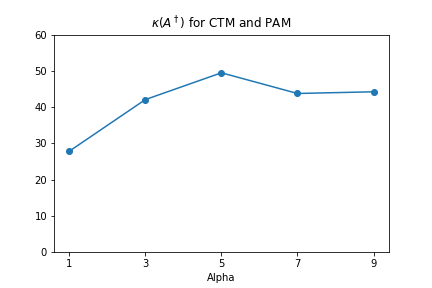}
    \caption{$\ell_1$ Condition Number for topic-word matrix $A$}
    \label{fig:cond_num}
\end{figure}

\vspace{-0.1in}
\subsection{Visualizing Topic Correlations for PAM}\label{subsec:syn_vis}

For documents generated by the PAM model, we plotted the estimated posteriors by the self-supervised approach and posterior inference assuming different priors in Figure~\ref{fig:PAMposterior}. From this figure (especially in Documents 3 and 4), one can qualitatively see that the estimated posterior and the PAM MCMC posteriors are aware of the correlation between topics, while the LDA MCMC posterior fails to take the correlation into consideration and hence is more different from the ground truth topic proportions.

\begin{figure}[!h]
    \centering
    \includegraphics[width=0.45\textwidth]{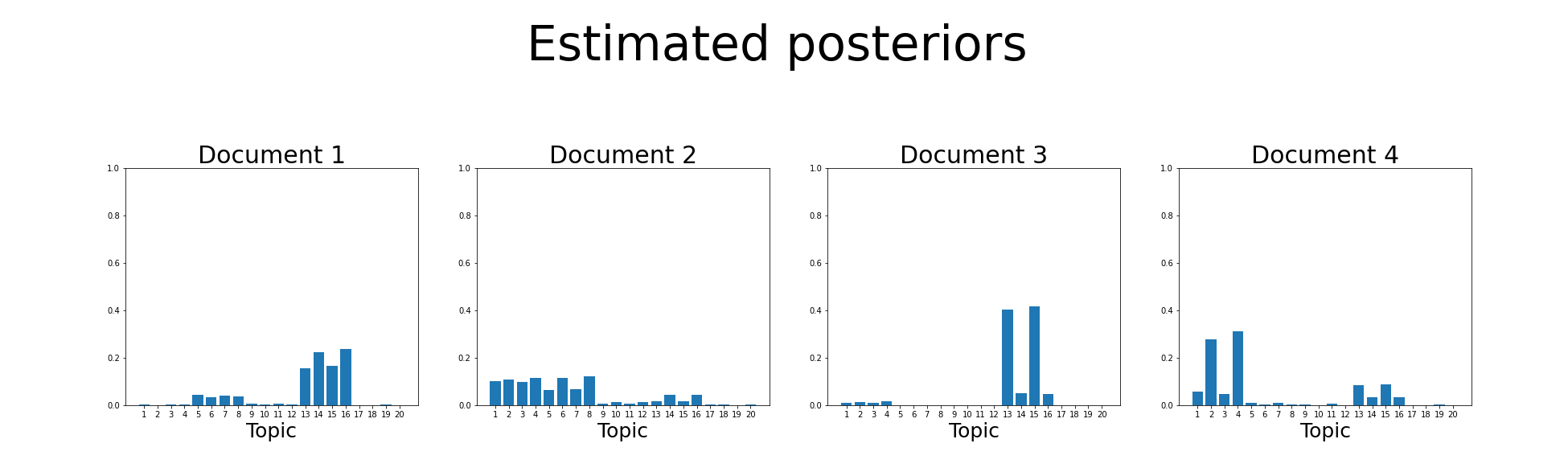}
    \includegraphics[width=0.45\textwidth]{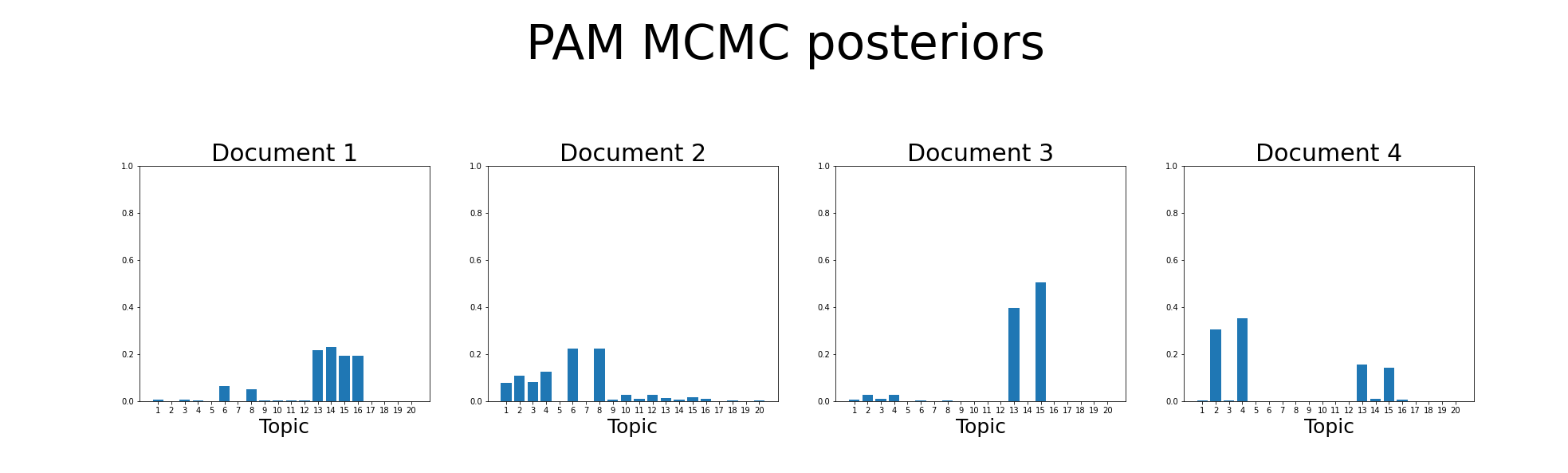}
    \includegraphics[width=0.45\textwidth]{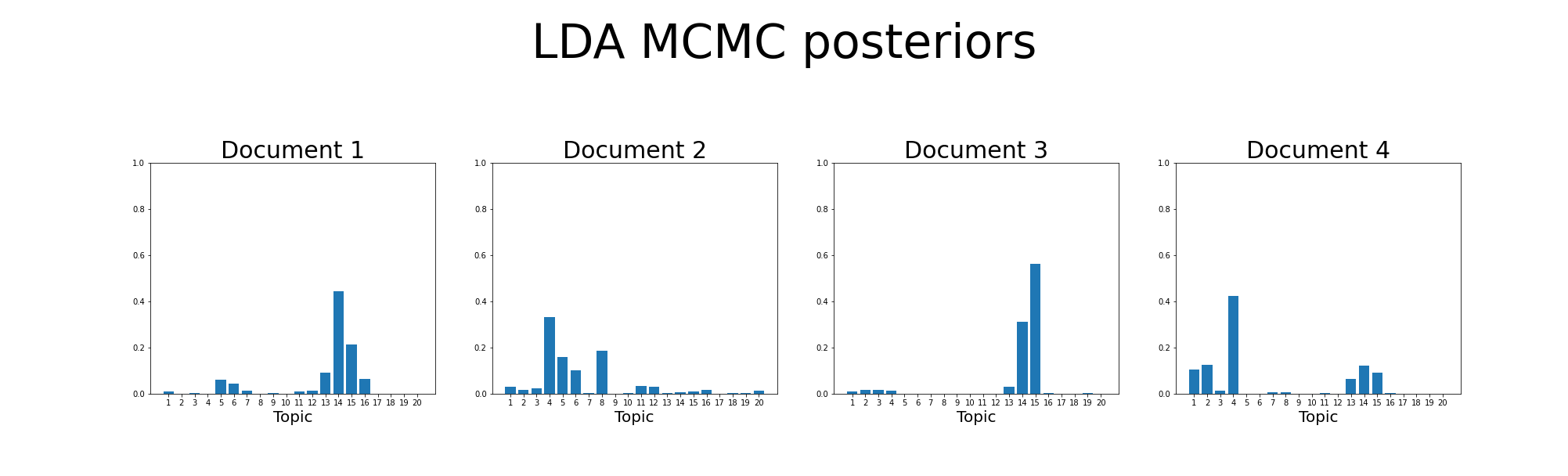}
    \includegraphics[width=0.45\textwidth]{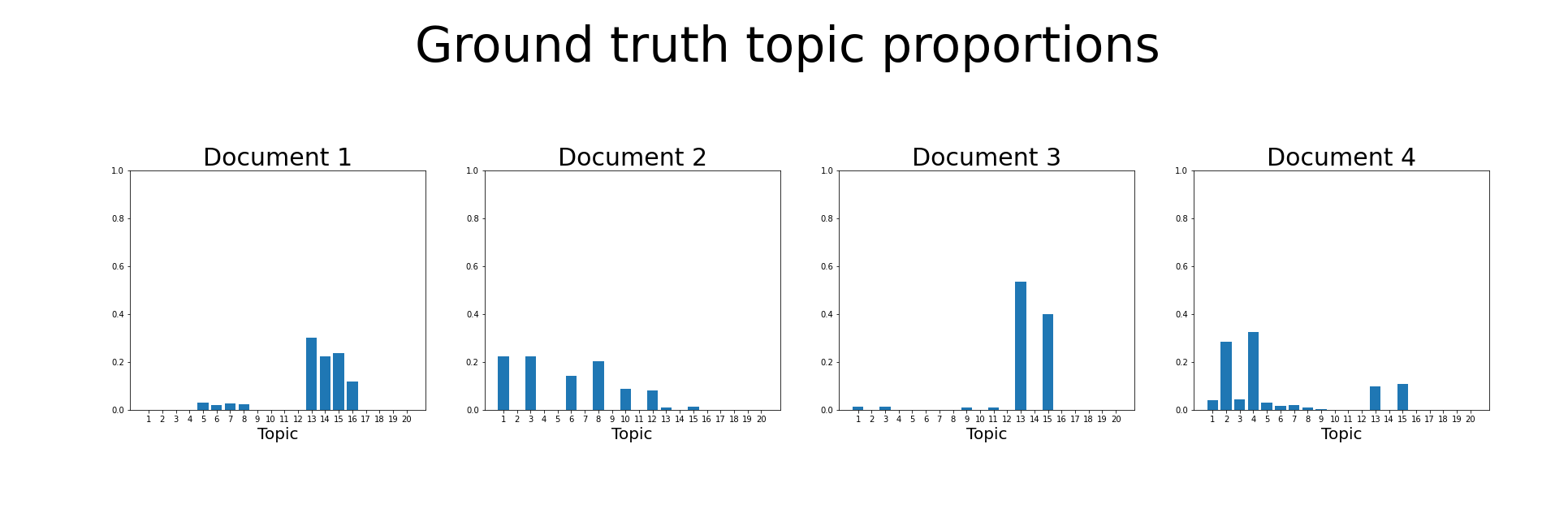}
    \caption{Comparison of estimated, PAM MCMC, LDA MCMC, and ground truth posterior topic proportions for sample test documents generated by PAM. Overall, across our set of 200 test documents, the estimated posteriors were able to outperform the misspecified LDA posteriors.}\label{fig:PAMposterior}
\end{figure}

\subsection{Topic Posterior Recovery for $t=2$}
\label{subsec:t=2}

We have shown by experiment that self-supervised learning can accurately recover the topic posterior when $t=1$, that is, when the neural network model is trained to predict one missing word from a document. In this subsection, we run experiments for $t=2$ case in Theorem~\ref{thm:reconst_robust}, which means using the self-supervised objective to train the neural network model to predict 2 missing word in a document. For the sake of reducing computational complexity, we change the setting from $V=5000, K=20, K_s=10$ to $V=500, K=8$, and $K_s=4$ (where $K_s$ is a parameter in the PAM model) while keeping document length, resample rate, training epoch, and the number of test documents the same as in Section~\ref{sec:syn_experiment}. Once a neural network $f$ is trained, from the proof of Theorem~\ref{thm:reconst_main} in Section~\ref{subsec:pf_reconst_main} we know that the training loss minimizer $f$ satisfies $f(x) = (A \otimes A)$vec$(W_{post})$ when $t=2$, so we obtain vec($W_{post}$) by vec($W_{post}$) $= (A \otimes A)^\dagger f(x)$ where we expect $W_{post}$ to be a symmetric matrix. But we find that in our experiment $W_{post}$ is nearly symmetric but not perfectly symmetric, and to amend this, we use $\frac{1}{2}(W_{post} + W_{post}^T)$ as the SSL-recovered posterior tensor. 

\begin{table}[h]
    \centering
    \begin{tabular}{lcccc}
\hline
                & \multicolumn{3}{c}{Document Type}                                    \\
$\alpha$    & Pure                & LDA                      & CTM                      & PAM                      \\ \hline
1     & 8           & 8          & 6          & 6         \\ \hline
3     & 8           & 8          & 6          & 6         \\ \hline
5     & 8           & 8          & 6          & 6         \\ \hline
7     & 8           & 8          & 8          & 6         \\ \hline
9     & 8           & 8          & 8          & 10         \\ \hline
\end{tabular}
    \caption{The number of attention blocks used in the neural network architecture for $t=2$ for all four types of documents and $\alpha=1,3,5,7,9$. }
    \label{tab:syn_t2_arc}
\end{table}

We observe that the attention-based neural network architecture with 384 hidden dimension and varying number of attention blocks (shown in Table~\ref{tab:syn_t2_arc}) is performs better on learning the distribution of 2 missing words than the fully-connected neural network for all four types of documents, so all of the following results are based on the output of trained attention-based neural network. We use the posterior mean from MCMC assuming the correct prior as the ground-truth topic posterior, and use the TV distance to measure how far the SSL-recovered topic posterior deviates from it, as shown in Table~\ref{tab:syn_t2_tv}. In most scenarios, the TV distance is less than 0.15, and TV distance gets slightly larger as $\alpha$ increases, i.e., the word distribution of each topic gets less distinguishable. 

\begin{table}[h]
    \centering
    \begin{tabular}{lcccc}
\hline
TV Distance & \multicolumn{3}{c}{Document Type}                                    \\
$\alpha$                   & Pure                & LDA                      & CTM                      & PAM                      \\ \hline
1                      & 0.0081 ± 0.0002    & 0.0636 ± 0.0031          & 0.0748 ± 0.0053          & 0.0748 ± 0.0035          \\ \hline
3                      & 0.0291 ± 0.0003    & 0.0870 ± 0.0039          & 0.1206 ± 0.0069          & 0.1045 ± 0.0047          \\ \hline
5                      & 0.0431 ± 0.0004    & 0.1075 ± 0.0050          & 0.1523 ± 0.0067          & 0.1544 ± 0.0062          \\ \hline
7                      & 0.0578 ± 0.00004   & 0.1258 ± 0.0049          & 0.1627 ± 0.0058          & 0.1693 ± 0.0065          \\ \hline
9                      & 0.0662 ± 0.0006    & 0.1413 ± 0.0052          & 0.1859 ± 0.0061          & 0.2770 ± 0.0152          \\ \hline
\end{tabular}
\caption{Topic posterior recovery loss of our self-supervised learning approach for $t=2$, measured in Total Variation distance, for all four types of documents and $\alpha=1,3,5,7,9$. We report the 95\% confidence interval.}
\label{tab:syn_t2_tv}
\end{table}

\begin{table}[th]
    \centering
    \begin{tabular}{lcccc}
\hline
                & \multicolumn{3}{c}{Document Type}                                    \\
$\alpha$    & Pure                & LDA                      & CTM                      & PAM                      \\ \hline
1     & 1.0 ± 0.0    & 0.9350 ± 0.0342          & 0.9088 ± 0.0286          & 0.8638 ± 0.0402         \\ \hline
3     & 1.0 ± 0.0    & 0.8800 ± 0.0450          & 0.8700 ± 0.0363          & 0.8075 ± 0.0476         \\ \hline
5     & 1.0 ± 0.0    & 0.8950 ± 0.0425          & 0.8050 ± 0.0473          & 0.8012 ± 0.0512         \\ \hline
7     & 1.0 ± 0.0    & 0.9100 ± 0.0397          & 0.8450 ± 0.0428          & 0.8125 ± 0.0472         \\ \hline
9     & 1.0 ± 0.0    & 0.8950 ± 0.0425          & 0.7675 ± 0.0505          & 0.6625 ± 0.0485         \\ \hline
\end{tabular}
\caption{The major topic pair recovery rate of our self-supervised learning approach for $t=2$ for all four types of documents and $\alpha=1,3,5,7,9$. We report the 95\% confidence interval.}
\label{tab:syn_t2_major_topics}
\end{table}
We also measure how the largest entries in the SSL-recovered topic posterior overlap with the largest entries in the topic prior. Note that for each document, since the underlying topic of each word is drawn iid, the joint topic prior distribution for the 2 missing words is the Kronecker product of the document's topic prior distribution with itself. For the overlap rate, in the $t=1$ case we measured the overlap of the top 1 topic for pure topic and LDA documents and the overlap of top 2 topics for CTM and PAM documents, because most pure topic documents and LDA documents are generated by one dominant topic whereas most CTM and PAM documents are generated by two correlated dominant topics. Similarly, in the $t=2$ case we measure the overlap of the top 1 entry for pure topic and LDA documents and the overlap of top $2\times 2 = 4$ entries for CTM and PAM documents. We will refer to this overlap rate as the ``major topic pair recovery rate". The results are shown in Table~\ref{tab:syn_t2_major_topics}. Overall, the overlap rate is very high, which indicates that for $t=2$ our self-supervised learning approach can capture the most dominant topic(s) for the 2 missing words. 

\subsection{Details of MCMC NUTS sampling method}\label{subsec:sampling_details}
In our experiments, we use PyMC3 package \citet{salvatier2016probabilistic} to perform NUTS-based MCMC posterior sampling. Specifically, we use 3000 tuning iterations to tune the sampler's step size so that we approximate an acceptance rate of 0.9. After tuning, we draw 2000 samples from the posterior distribution and use the mean of these 2000 samples as our best estimate for the topic posterior mean.

To further check the performance of NUTS, we look into the LDA documents. As shown below, we observe that MCMC posterior assuming LDA model (bottom row) is also very similar to the ground truth $w$ (top row) used to generate the documents.
\begin{figure}[h]
    \centering
    \includegraphics[width=0.6\textwidth]{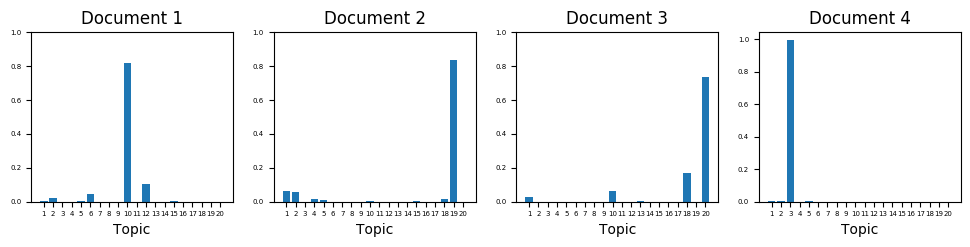}
    \includegraphics[width=0.6\textwidth]{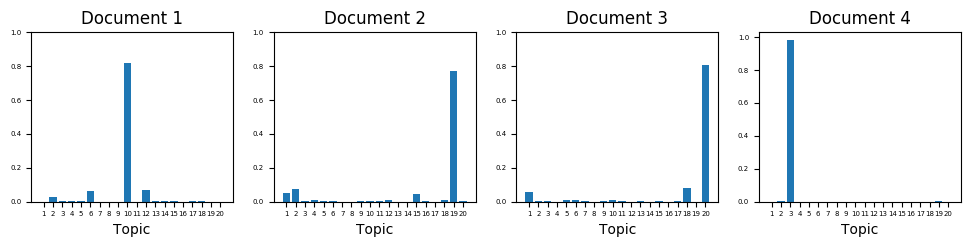}
    \caption{Comparison of NUTS-based MCMC topic posterior assuming correct model and the document-generating topic proportions $w$ for sample test documents generated by LDA.}
    \end{figure}

\clearpage

\section{Real-data Experiment Additional Details}\label{sec:more_real_data}
In this section, we give more details about our real data experiments in Section~\ref{sec:real_experiment}. In Section~\ref{subsec:real_data_process}, we describe the data processing. We give a more detailed description of baselines and how we extracted our representations in Section~\ref{subsec:realdata_rep_extract}. In Section~\ref{subsec:realdata_model_arch}, we report more results using different model architectures. Moreover, in Section ~\ref{subsec:realdata_imdb}, we offer details about experiments on other real-world datasets and report the results. 

\subsection{Data Processing}\label{subsec:real_data_process}
Here we detailed our usage of the AG news dataset by \cite{zhang2015character}. Each category has 30,000 samples in the training set and 19,000 samples in the testing set. We first preprocessed the data by removing punctuation and words that occurred in fewer than 10 documents, obtaining in a vocabulary of around 16,700 words, in a similar fashion done by \cite{tosh2021contrastive}.

To split the data set into unsupervised dataset and supervised dataset, we selected a random sample of 1000 documents as labeled supervised dataset for each category of documents, while the remaining 116,000 documents fall into unsupervised dataset for representation learning. 

\subsection{Extracting Representations}\label{subsec:realdata_rep_extract}

To give some details about two baseline representations we used, we described in details as follows:
\begin{itemize}
  \item \textbf{Bag of Words (BOW):} For a single document, we constructs BOW embedding by creating a bag-of-words frequency vector of the dimension of vocabulary size, where each entry $i$ represent the frequency of words with id $i$ in that particular document.
  \item \textbf{Word2vec:} To generate Word2vec representation fitted the Skip-gram word embedding model on unsupervised dataset \cite{mikolov2013distributed}. The implementation is done through the Gensim library \cite{rehurek2011gensim}, where we used an embedding dimension of 200 (after tuning, see Table~\ref{tab:word2vec_emb}) and window size of 5. Representation is taken as the average of all trained word embeddings in a single document.
  \item \textbf{LDA:} We construct a representation derived from LDA by first fitting Latent Dirichlet Allocation model \cite{blei2003latent} on the unsupervised dataset and then use the posterior distribution over topics given a document as representation. We implemented LDA using scikit-learn \cite{scikit-learn} with default parameters except for number of topics, which we use as representation size. We use final dimension of 100 (after tuning, see Figure~\ref{fig:lda_tune}).
\end{itemize}

\begin{figure}[!h]
    \centering
    \subfigure{\includegraphics[width=0.45\textwidth]{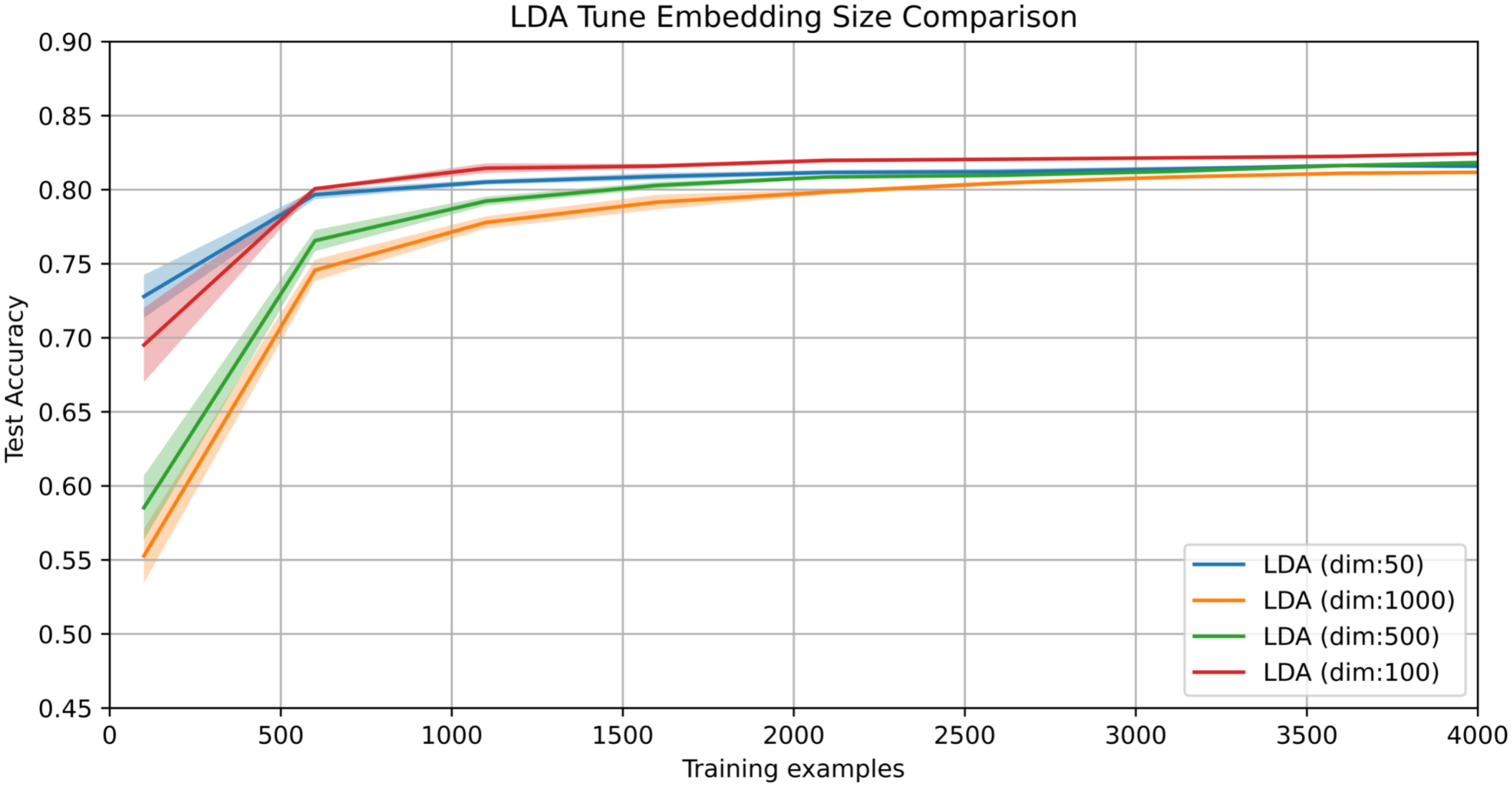}}
    \caption{Test Accuracy for different number of topics used for fitting LDA embedding model on the AG News dataset (Using N=10 for 95\% Confidence Interval).}
    \label{fig:lda_tune}
\end{figure}

\begin{table}[ht]
\centering
\begin{tabular}{llll}
\hline
Embedding dimension                   & Test Accuracy\\
\hline
200 & \textbf{0.8498 ± 0.0007}\\ 
\hline
300 & 0.8488 ± 0.0007\\ 
\hline
500 & 0.8484 ± 0.0007\\ 
\hline
700 & 0.8483 ± 0.0006\\ 
\hline
1000 & 0.8488 ± 0.0009\\ 
\hline
2000 & 0.8473 ± 0.0006\\ 
\hline
3000 & 0.8481 ± 0.0001\\ 
\hline
5000 & 0.8482 ± 0.0006\\ 
\hline
\end{tabular}
\caption{Test Accuracy for different embedding dimension used for fitting Skip-gram word embedding model on the AG News Dataset (Using N=10 for 95\% Confidence Interval). We fixed the window-size to be default of 5 and used1 all 4000 training examples. }
\label{tab:word2vec_emb}
\end{table}

For our own self-supervised method, to extract a representation, we attempted at 1) \emph{softmax+last layer}: apply a Softmax directly to the last layer output, 2) \emph{Word2vec+last layer}: apply an Word2vec embedding matrix to last layer output after Softmax to reduce dimension (the Word2vec matrix is trained over the unsupervised dataset), and 3) \emph{softmax+second2last layer}: apply a Softmax function on top of the second-to-last layer  (equivalent to applying an identity matrix to replace the last layer). The dimension of representation with Word2vec is 300 while the softmax + second2 last layer has a dimension of 4096. The original last layer representation has a dimension of the vocabulary size (around 16,700), and has shown inferior results than those with dimension reduction techniques.

We included a comparison of the two approaches with dimension reduction in Table~\ref{tab:rep_comp_data}, from which we observe that a Softmax on the second-to-last layer output performed better across layers of 3,4 and 5. We reported the result using the \emph{softmax+second2last layer} in Figure~\ref{fig:rep_comp}.

\begin{table}[ht]
\centering
\begin{tabular}{llll}
\hline
Test Accuracy                   & \multicolumn{3}{c}{Method}                                           \\
\# layers (residual) & \multicolumn{1}{c}{\emph{Word2vec+last layer}} & \multicolumn{1}{c}{\emph{softmax+second2last layer}}& \multicolumn{1}{c}{\emph{softmax+lastlayer}}\\ \hline
3                     & 0.8508                   & \textbf{0.8714  }    & 0.8395               \\ \hline
4                         & 0.8450                   & \textbf{0.8621  }  & 0.8430             \\ \hline
5                   & 0.8492                   & \textbf{0.8689    }      & 0.8382         \\ \hline

\end{tabular}
\caption{Test Accuracy for different representation extraction method. We fixed the rest of  hyperparameters to be: 5000 embedding dimension, 150 epochs, 0.0002 learning rate, sampling 4 words in labels, weight decay of 0.01 and resample rate of 2. }
\label{tab:rep_comp_data}
\end{table}

\subsection{Model Architecture}\label{subsec:realdata_model_arch}

We include here more details on our model architecture used in real data experiments. We used residual model, and tested two main hyperparameters: number of residual blocks and hidden dimension size. We fixed the rest of hyperparameters: we used a 5000 embedding dimension and trained for 150 epochs, with 0.0002 learning rate, weight decay of 0.01 and resample rate of 2. We sampled 4 words in labels for variance reduction purpose.

\textbf{Varying Depth and Width} We investigated both the effect of model depth and width on the performance of RBL representation. We trained networks of width 2048, 3000, and 4096 nodes respectively, where the number of node refers to the number of node in the linear layer inside a residual block. We varied the depth of the network by choosing to place 3, 4 and 5 of such block. 

As shown in Figure~\ref{fig:rep_modelarch}, it appears that using wider models in the unsupervised phase leads to better performance when training a linear classifier on the learned representations. Number of residual block does not seem to have a clear relationship with test accuracy from limited experiments we ran. The best accuracy is achieved when we have 3 residual blocks with dimension of 4096, which is what we used to generate results in Figure~\ref{fig:rep_comp}.

\begin{figure}[!h]
    \centering
    \subfigure[]{\includegraphics[width=0.45\textwidth]{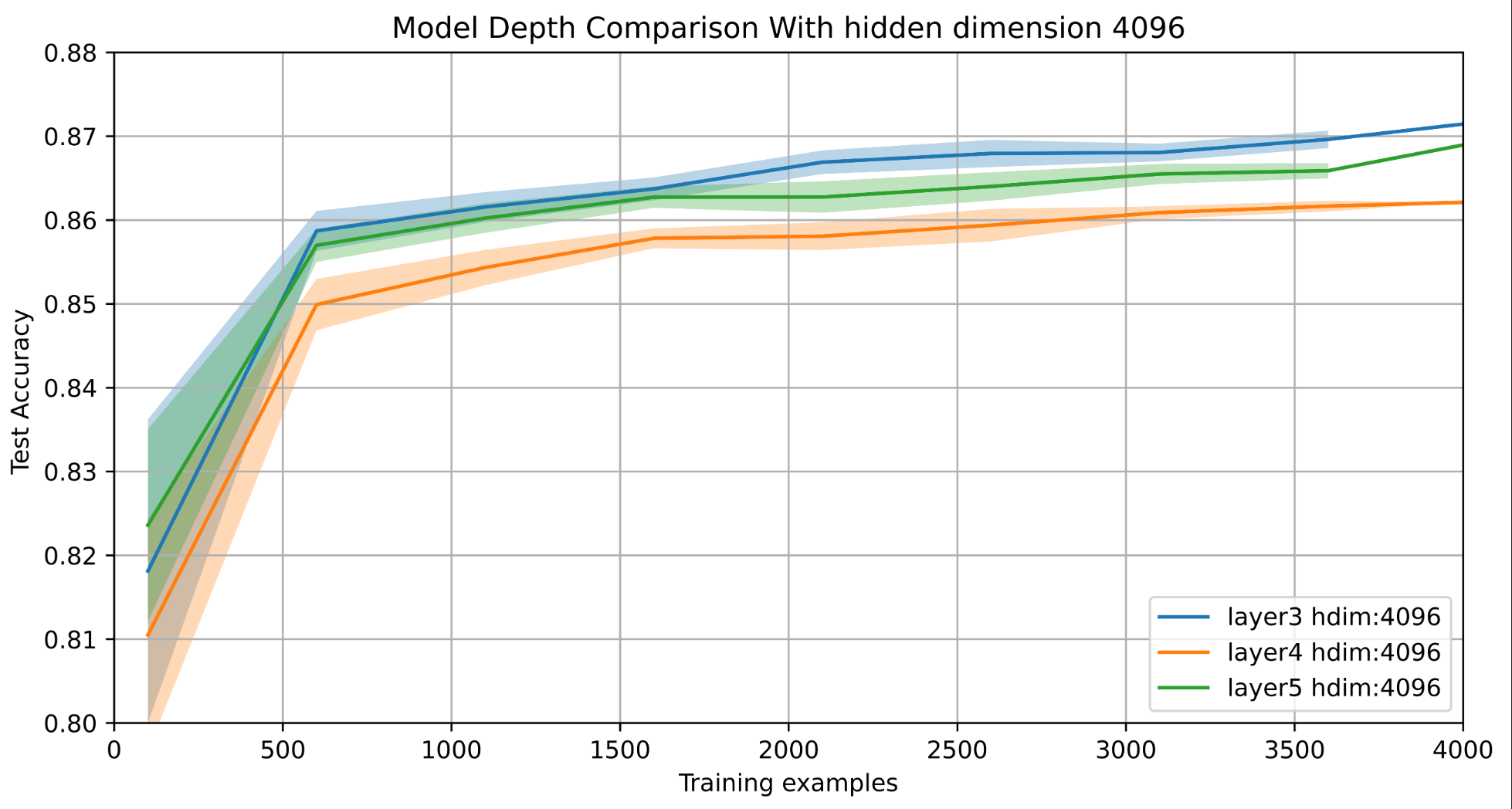}} 
    \subfigure[]{\includegraphics[width=0.45\textwidth]{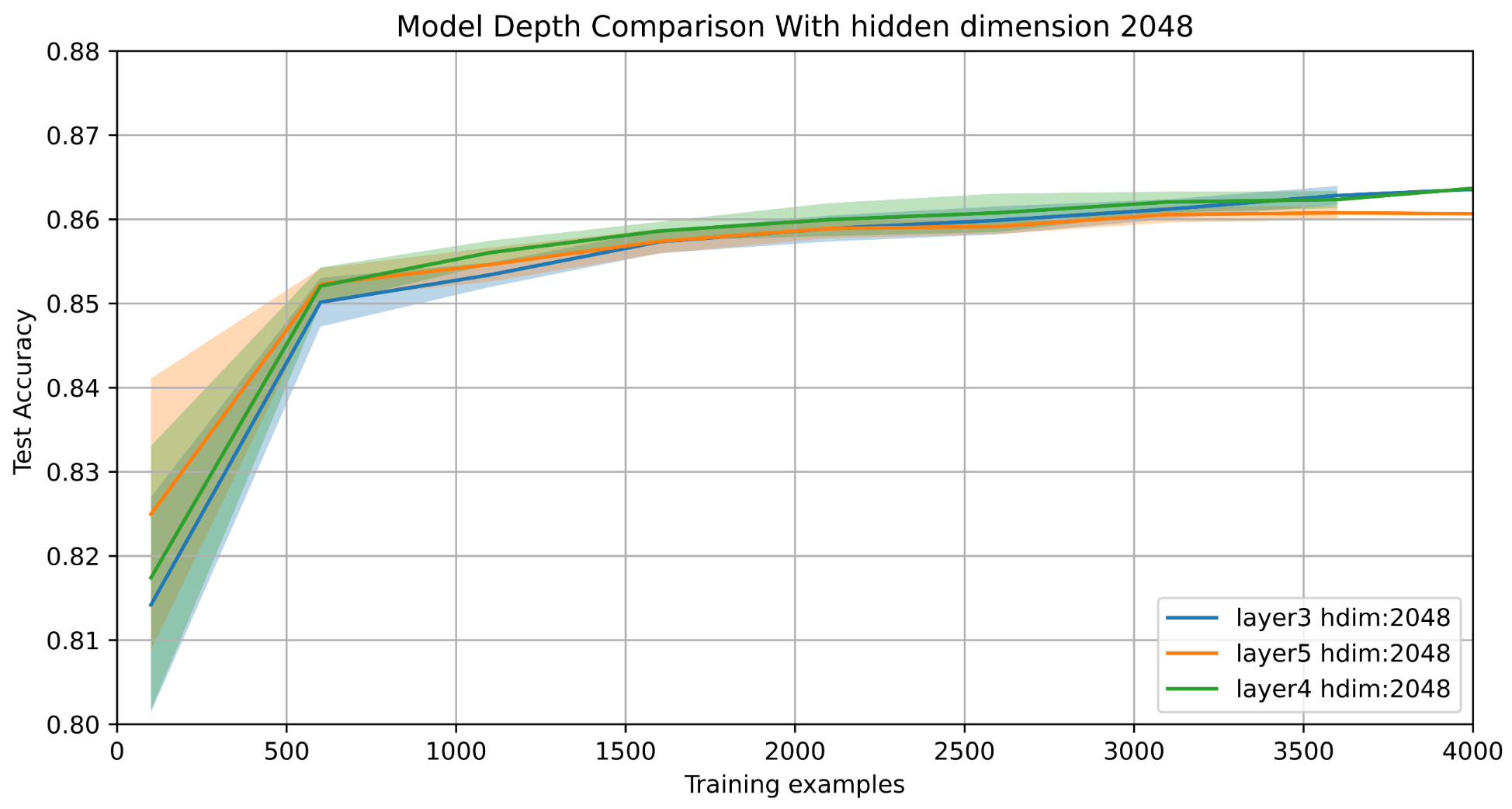}} 
    \subfigure[]{\includegraphics[width=0.45\textwidth]{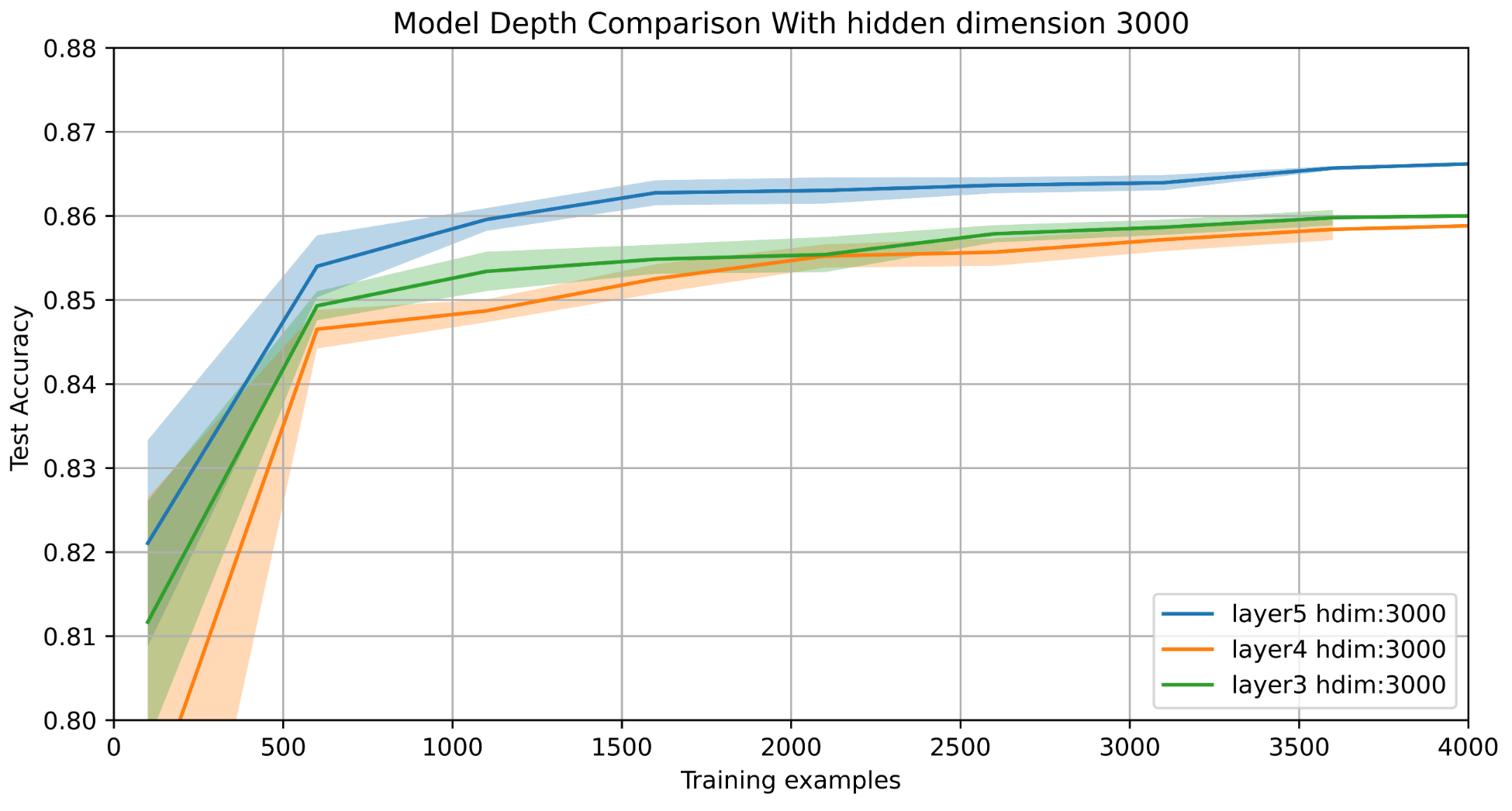}}
    \caption{RBL representation performance varies with different residual model capacity: the best performing is the one with 3 layers and 4096 hidden dimensions, and it was what we used for Figure~\ref{fig:rep_comp}. In all these runs, we use Softmax on second-to-last layer to obtain our representations}
    \label{fig:rep_modelarch}
\end{figure}

\subsection{More Real-data Experiments}
\label{subsec:realdata_imdb}

In addition to the experiments on the AG news dataset, we also run experiments on the IMDB movie review sentiment classification dataset by \cite{maas-EtAl:2011:ACL-HLT2011} and the Small DBpedia ontology dataset, which is a subset of the DBpedia ontology by \cite{zhang2015character}, following the preprocessing by \cite{tosh2021contrastive}.
In particular, the IMDB movie review dataset contains a unlabeled set of 50k movie reviews, a training set of 12.5k positive movie reviews and 12.5k negative movie reviews, and a test set of 12.5k positive movie reviews and 12.5k negative movie reviews. We randomly selected 1k documents from each class in the training set as our training document and selected 3.5k documents in the test set as our test documents, and used the remaining 91k documents as our unsupervised dataset for training our SSL model to learn document representation. After filtering, the vocabulary size is about 35k. The Small DBpedia dataset consists of 4 non-overlapping classes, which are company, artist, athlete, and office holder from the original DBpedia ontology dataset, and each class consists 40k training documents and 5k test documents. We randomly chose 1k training documents each class as labeled training documents and used the rest of training documents as the unsupervised dataset, and used the original test documents as our test dataset. After we filtered rare words in the original DBpedia dataset, the vocabulary size is about 10k.

\begin{figure}[t]
    \centering
    \subfigure[]{\includegraphics[width=0.45\textwidth]{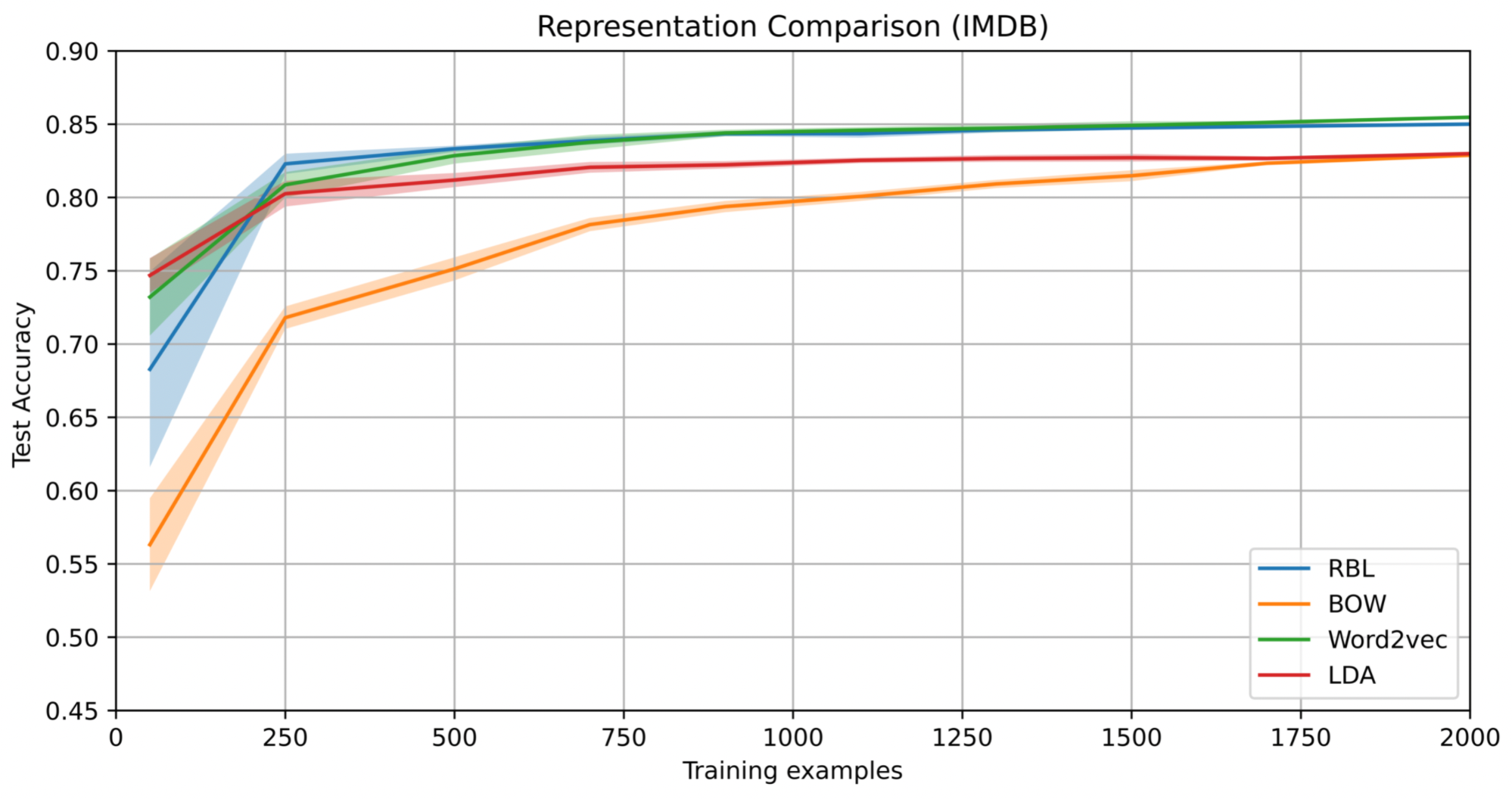}} 
    \subfigure[]{\includegraphics[width=0.45\textwidth]{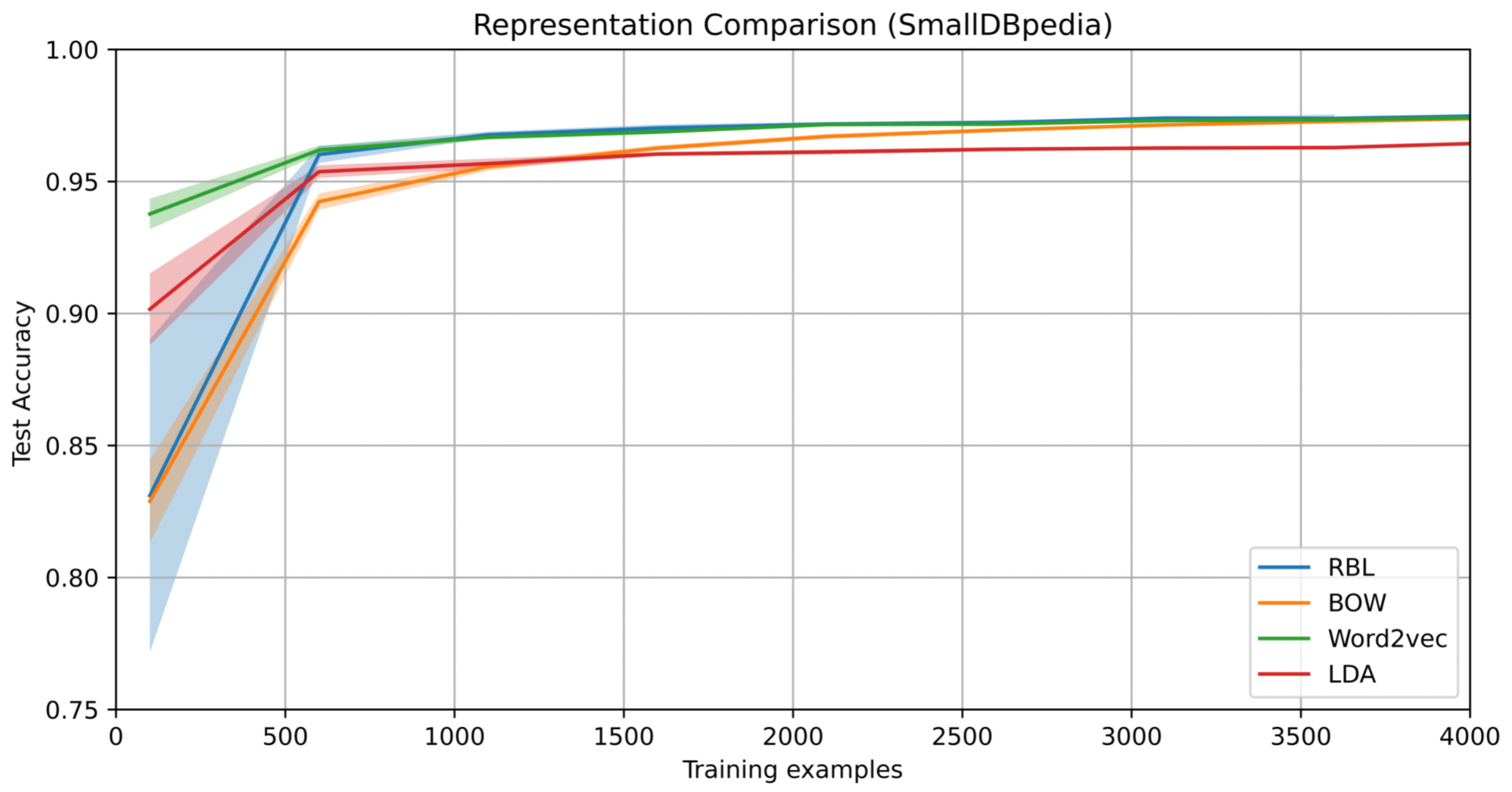}} 
    \caption{Performances of RBL and baselines (BOW, Word2vec, LDA) on IMDB and SmallDBpedia datasets. RBL generally outperforms all baselines.}
    \label{fig:rep_comp_other}
\end{figure}

\begin{figure}[t]
    \centering
    \subfigure[]{\includegraphics[width=0.45\textwidth]{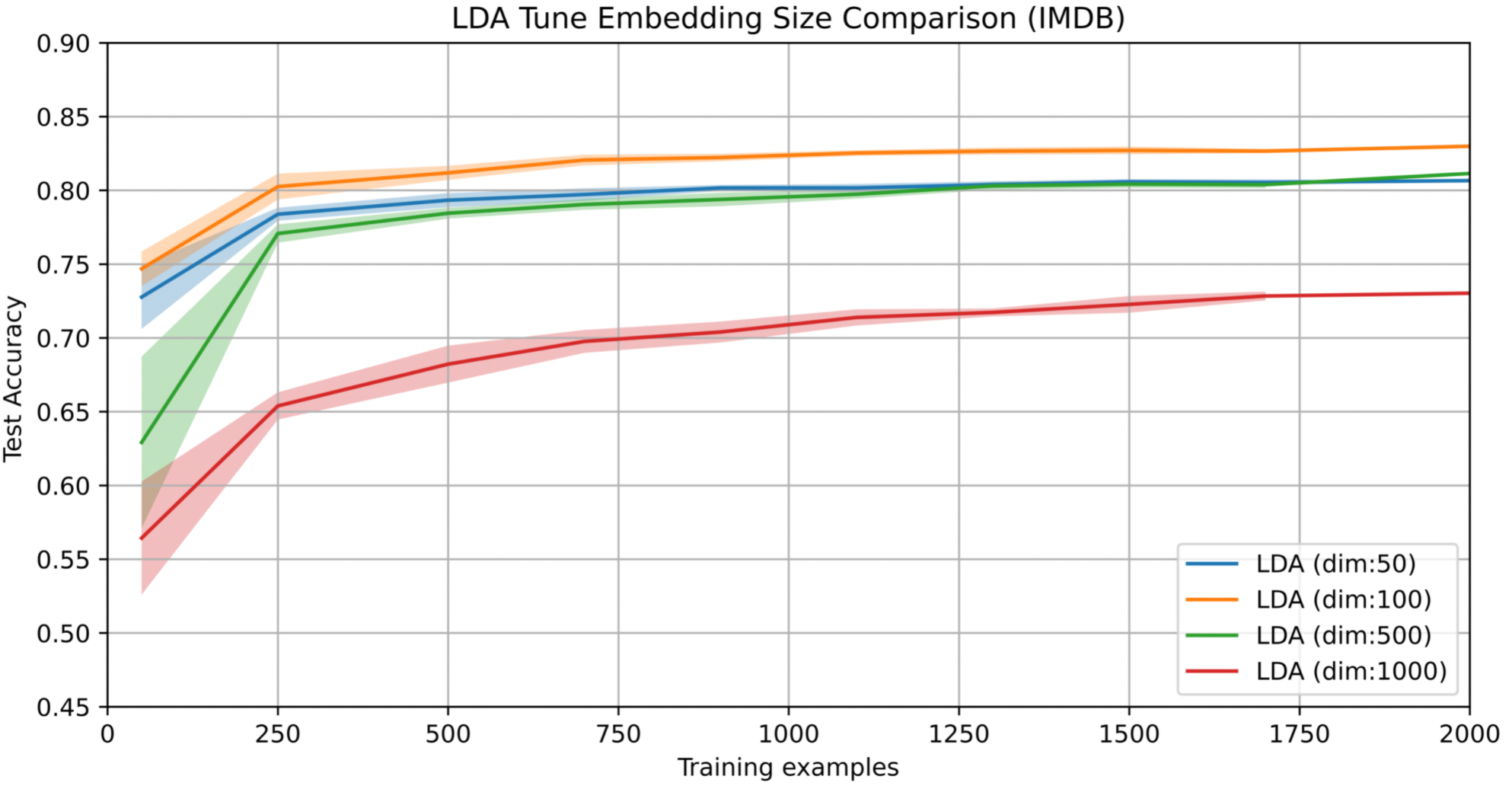}} 
    \subfigure[]{\includegraphics[width=0.45\textwidth]{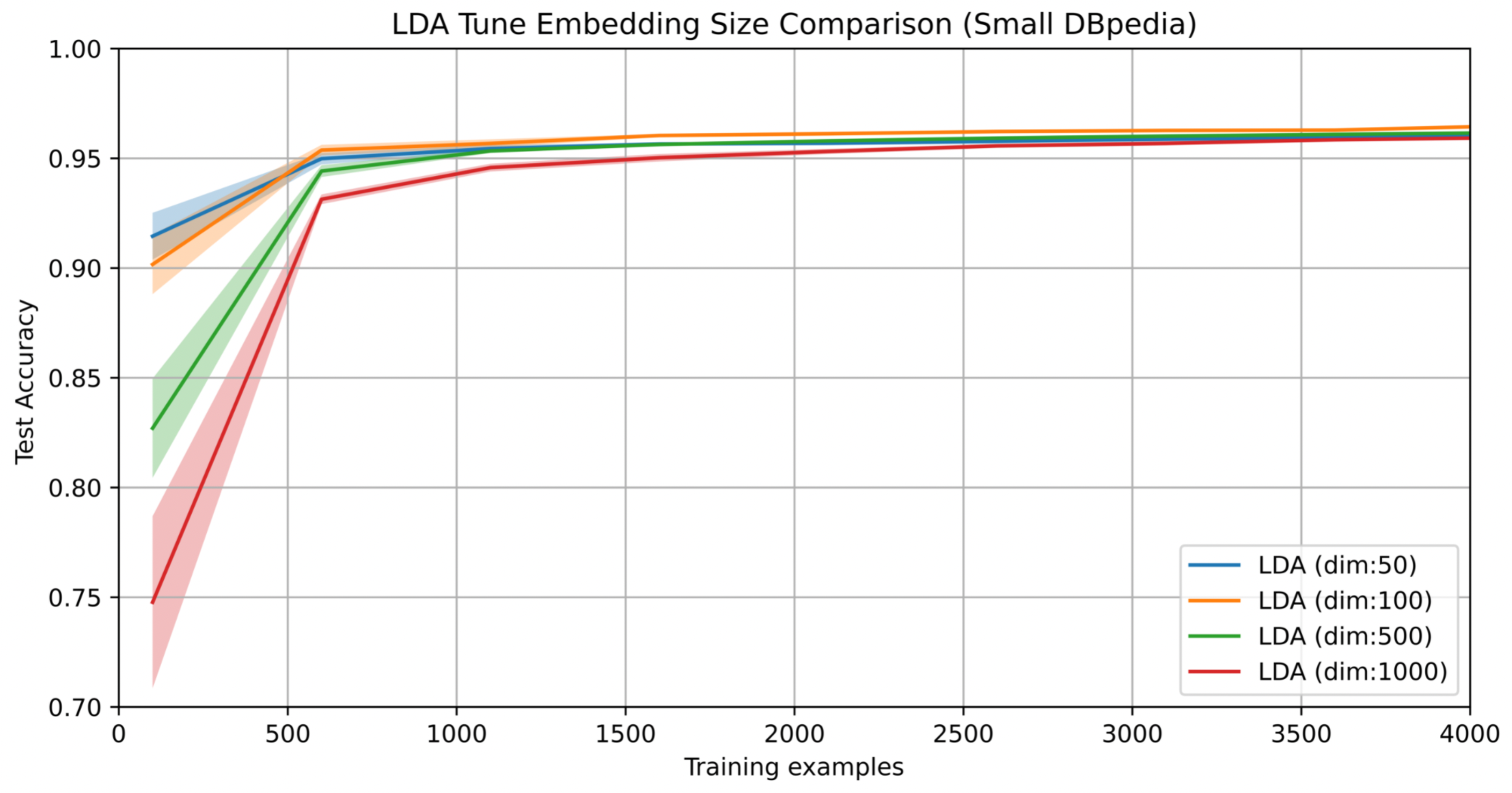}} 
    \caption{Test Accuracy for different number of topics used for fitting LDA embedding model on the IMDB and Small DBpedia dataset. We use 100 as final dimension for both datasets based on the tuning above.}
    \label{fig:lda_tune_other}
\end{figure}

\begin{table}[ht]
\centering
\begin{tabular}{llll}
\hline
Test Accuracy       & \multicolumn{2}{c}{Datasets} \\  
Embedding Dimensions  & \multicolumn{1}{c}{IMDB} & \multicolumn{1}{c}{Small DBpedia} \\ \hline
100                  & 0.8423$\pm$0.0014   &  0.9440$\pm$ 0.0007                    \\ \hline
200                 &  0.8529$\pm$0.0012   & 0.9541$\pm$0.0004                 \\ \hline
500                   & \textbf{0.8545$\pm$0.0011}   & \textbf{0.9568$\pm$0.0004}           \\ \hline
1000                  &0.8544$\pm$ 0.0005 & 0.9567$\pm$ 0.0003                     \\ \hline
\end{tabular}
\caption{Test Accuracy for different embedding dimension used for fitting Skip-gram word
embedding model on the IMDB and Small DBpedia Dataset (Using N=10 for 95\% Confidence Interval). We fixed the window-size to be default of 5 and use all available training examples. We use a dimension of 500 for both datasets for final baseline based on the results above. }
\label{tab:word2vec_tune_other}
\end{table}

In our experiments, the setup of the baselines follows Section ~\ref{subsec:realdata_rep_extract}. We plot the test accuracy of our method against the baselines in Figure ~\ref{fig:rep_comp_other}.We did not include NCE in representation comparison since limited tuning does not achieve comparable results as in \cite{tosh2021contrastive}.Parameter tuning result for LDA and Word2vec is documented in Figure ~\ref{fig:lda_tune_other} and Table ~\ref{tab:word2vec_tune_other} respectively.

In the left panel of Figure ~\ref{fig:rep_comp_other}, we compare the performance of semi-supervised learning on IMDB dataset of our representation RBL as well as baselines. We point out that in general, RBL has similar performance to Word2vec, outperforming LDA and BOW. However,when labeled training data is less abundant, Word2vec and LDA have slightly stronger performance. 

In the right panel of Figure ~\ref{fig:rep_comp_other}, we perform a similar representation comparison on Small DBpedia dataset. While RBL outperforms all other baselines when there are more than around 1000 training examples, it performs worse than Word2vec and LDA when labeled training data is less abundant.

\end{document}